\journal{Neural Networks}
\begin{document}
\theoremstyle{definition}
\newtheorem{definition}{Definition}[section]
 \newtheorem{theorem}{Theorem}
\begin{frontmatter}

 
 
\title{Stochastic analysis of heterogeneous porous material with modified neural architecture search (NAS) based physics-informed neural networks using transfer learning}
\author[add3]{Hongwei Guo\fnref{email1}}
\fntext[email1]{email: guo{@}hot.uni-hannover.de}
\author[add3,add4,add5]{Xiaoying Zhuang\fnref{email2}}
\fntext[email2]{email: zhuang{@}hot.uni-hannover.de}
\author[add1,add2]{Timon Rabczuk\corref{mycorrespondingauthor}}
\cortext[mycorrespondingauthor]{Corresponding author}
\ead{timon.rabczuk@tdtu.edu.vn}

\address[add1]{Division of Computational Mechanics,\\ Ton Duc Thang University,\\ Ho Chi Minh City, Vietnam}
\address[add2]{Faculty of Civil Engineering,\\ Ton Duc Thang University,\\ Ho Chi Minh City, Vietnam}
\address[add3]{Computational Science and Simulation Technology,\\ Institute of Photonics,\\ Leibniz Universität Hannover,\\ Hannover, Germany}
\address[add4]{Department of Geotechnical Engineering,\\ Tongji University,\\ Shanghai, China.}
\address[add5]{Key Laboratory of Geotechnical and Underground Engineering of Ministry of Education,\\ Tongji University,\\ Shanghai, China.}

\date{\today}

\begin{abstract}
In this work, a modified neural architecture search method (NAS) based physics-informed deep learning model is presented for stochastic analysis in heterogeneous porous material. Monte Carlo method based on a randomized spectral representation is first employed to construct a stochastic model for simulation of flow through porous media. To solve the governing equations for stochastic groundwater flow problem, we build a modified NAS model based on physics-informed neural networks (PINNs) with transfer learning in this paper that will be able to fit different partial differential equations (PDEs) with less calculation. The performance estimation strategies adopted is constructed from an error estimation model using the method of manufactured solutions. A sensitivity analysis is performed to obtain the prior knowledge of the PINNs model and narrow down the range of parameters for search space and use hyper-parameter optimization algorithms to further determine the values of the parameters. Further the NAS based PINNs model also saves the weights and biases of the most favorable architectures, then used in the fine-tuning process. It is found that the log-conductivity field using Gaussian correlation function will perform much better than exponential correlation case, which is more fitted to the PINNs model and the modified neural architecture search based PINNs model shows a great potential in approximating solutions to PDEs. Moreover, a three dimensional stochastic flow model is built to provide a benchmark to the simulation of groundwater flow in highly heterogeneous aquifers. The NAS model based deep collocation method is verified to be effective and accurate through numerical examples in different dimensions using different manufactured solutions.
\end{abstract}

\begin{keyword}
deep learning \sep neural architecture search \sep  error estimation \sep randomized spectral representation \sep method of manufactured solutions \sep log-normally distributed \sep physics-informed \sep sensitivity analysis \sep hyper-parameter optimization algorithms\sep transfer learning 

\end{keyword}

\end{frontmatter}

\section{Introduction}
\label{section 1:Introduction}
In recent years, groundwater pollution has become one of the most important environmental problems worldwide. In order to protect groundwater quality, it is necessary to study groundwater flow and solute transport. With this needs of society and the further development of science and technology, it is quite important to investigate the groundwater flow model, which includes the flow through the aquifer comprising saturated sediment and rock with rather heterogeneous hydrologic properties. In this study, the stochastic analysis in heterogeneous porous material is investigated with deep learning method, which in hope to give more insight into the mechanism of groundwater flow model.

The hydraulic conductivity describes the physical properties with which measures the ability of the medium to transmit fluid through pore spaces, and depends on the structure of the medium (particle size, arrangement, void filling, etc.) and the physical properties of the water (viscosity of the liquid). It is common to use random fields with a given statistical structure to describe the porous media because of its intrinsic complexity \cite{kolyukhin2005stochastic}. Freeze \cite{freeze} has proved, that the hydraulic conductivity field can be well characterized by random log-normal distribution. This approach is often used for the flow analysis in saturated zone, e.g., in \cite{matheron}. Both Gaussian \cite{sabelfeld2d} and exponential \cite{gelharstochastic} correlations are always chosen for the log-normal probability distribution. Different correlation coefficients can lead to different accuracy of the model. Therefore, the two correlation coefficients are compared in this paper.

Different approaches have been used to construct functional equations for permeability as a function of pore structure parameters \cite{carman1997fluid,rumpf,papevariation}. Analytical spectral representation methods were first used by Bakr \cite{bakrstochastic} to solve the stochastic flow and solute transport equations perturbed with a random hydraulic conductivity field. They pointed out, that if a random field is homogeneous and has zero mean, then it will always possible to represent the process by a kind of Fourier (or Fourier-Stieltjes) decomposition into essentially uncorrelated random components, and these random components in turn will give the spectral density function, which is the distribution of variance over different wave numbers $\textit{k}$. This theory is widely used to construct hydraulic conductivity fields, and a number of construction methods have been derived, such as turning bands method \cite{mantoglou}, HYDRO\_GEN method \cite{berlinhydro} and Kraichnan algorithm \cite{kraichnandiffusion}. Ababou and his coworker \cite{ababounumerical} used turning bands method to verify the feasibility of solving this problem using numerical methods, and also indicated the scale range of relevant parameters. Wörmann and Kronnnäs \cite{ababounumerical} tested a gradually increase of the heterogeneity of the flow resistance and compared the numerically simulated residence time PDF with the observed based on HYDRO\_GEN method. Unlike the other two methods, Kraichnan proposed an approximation algorithm for direct control of random field accuracy,  by increasing the modulus, and of its variability, through the variance of the log-hydraulic conductivity random field. Inspired by these results, Kolyukhin and Sabelfeld \cite{kolyukhin2005stochastic} constructed a randomized spectral model (RSM) for simulation of a steady flow in porous media in 3D case under small fluctuation assumptions. This approach is used in this paper to generate hydraulic conductivity fields.

With the development of information technology, machine learning has become the hottest topic nowadays. As a new branch of machine learning, deep learning was first brought up in the realm of artificial intelligence in 2006, which uses deep neural networks to learn features of data with high-level of abstractions \cite{lecun2015deep}. The deep neural networks adopt artificial neural network architectures with various hidden layers, which exponentially reduce the computational cost and amount of training data in some applications \cite{al2018solving}. The major two desirable traits of deep learning lie in the nonlinear processing in multiple hidden layers in supervised or unsupervised learning \cite{vargas2017deep}. Several types of deep neural networks such as convolutional neural networks (CNN) and recurrent/recursive neural networks (RNN) \cite{patterson2017deep} have been created, which further boost the application of deep learning in image processing \cite{yang2018visually}, object detection \cite{zhao2019object}, speech recognition \cite{nassif2019speech} and many other domains including genomics \cite{yue2018deep} and even finance \cite{fischer2018deep}. In some cases, certain physical laws of the model to be predicted is a priori, and these laws can be used to formulate learning algorithms as well as set up neural network configurations. This leads to a physics-informed neural network (PINN), which is widely used in practical research, for example, earthquake velocity model prediction \cite{xuphysic}.

Besides the application of deep learning in regression and classification, some researchers employed deep learning for the solution of PDEs, some researchers employed deep learning for the solution of PDEs. Mills et al. deployed a deep conventional neural network to solve Schrödinger equation, which directly learned the mapping between potential and energy \cite{Mills:2017aa}. E et al. applied deep learning-based numerical methods for high-dimensional parabolic PDEs and back-forward stochastic differential equations, which was proven to be efficient and accurate for 100-dimensional nonlinear PDEs \cite{E_2017,Han:2018aa}. Also, E and Yu proposed a Deep Ritz method for solving variational problems arising from partial differential equations \cite{E_2018}. Raissi et al. applied the probabilistic machine learning in solving linear and nonlinear differential equations using Gaussian Processes and later introduced a data-driven Numerical Gaussian Processes to solve time-dependent and nonlinear PDEs, which circumvented the need for spatial discretization \cite{raissi2019physics,Raissi:2018aa}. Later, they \cite{RAISSI2018125,Raissi:2018:DHP:3291125.3291150} introduced a physical informed neural networks for supervised learning of nonlinear partial differential equations from Burger’s equations to Navier-Stokes equations. Two distinct models were tailored for spatio-temporal datasets: continuous time and discrete time models. They also applied a deep neural networks in solving coupled forward-backward stochastic differential equations and their corre- sponding high-dimensional PDEs [35]. Beck et al. \cite{Beck:2018aa,Beck_2019} studied the deep learning in solving stochastic differential equations and Kolmogorov equations. Anitescu et al. \cite{anitescu2019artificial}, Guo et al. \cite{guo2019deep} applied deep neural networks based collocation method for finding the solutions for second order and fourth order boundary value problems. Rabczuk et al. \cite{samaniego2020energy, nguyen2020deep} proposed an energy approach to the solution of partial differential equations in computational mechanics via machine learning. Goswami \cite{goswami2020transfer} applied a transfer learning enhanced physics informed neural network (PINN) algorithm for solving brittle fracture problems based on phase-field modeling.

However, in a typical machine learning application, the practitioner must apply appropriate data pre-processing, feature engineering, feature extraction and feature selection methods to make the dataset suitable for machine learning. Following these pre-processing steps, practitioners must perform algorithm selection and hyper-parameter optimization to maximize their final machine learning model's predictive performance. The PINNs model is no exception though the randomly distributed collocation points are generated for further calculation without the need for data-preprocessing. Most of the time wasted in PINNs based model are exerted upon the tuning of neural architecture configurations, which strongly influences the accuracy and stability of the numerical performance. Since many of these steps are typically beyond the capabilities of non-experts, automated machine learning (AutoML) has become the trend as an AI-based solutions to meet the growing challenges in machine learning applications, which seeks to automate the many time-consuming and laborious steps of the machine learning pipeline to make it easier to apply machine learning to real-world problems. 

The oldest AutoML library is AutoWEKA \cite{thorntonauto}, first released in 2013, which automatically selects models and hyper-parameters. Other notable AutoML libraries include auto-sklearn \cite{feurerefficient}, H2O AutoML \cite{H2OAutoML}, and TPOT \cite{le2020scaling}. Neural architecture  search (NAS) \cite{zophneural} is a technique for automatic design of neural networks, which allows algorithms to automatically design high performance network structures based on sample sets. Neural architecture  search (NAS) aims to find a configuration comparable to human experts on certain tasks and even discover certain network structures that have not been proposed by humans before, which can effectively reduce the use and implementation cost of neural networks. In \cite{pham2018efficient} the authors add a controller in the efficient NAS, which can learn to discover neural network architectures by searching for an optimal subgraph within a large computational graph. They used parameters sharing between the subgraphs to make the computing process faster. The controller decides which parameter matrices are used, by choosing the previous indices. Therefore, in ENAS, all recurrent cells in a search space share the same set of parameters. Liu \cite{liu2018progressive} used a sequence model-based optimization (SMBO) strategy to learn a surrogate model with which to guide the search of the structure space while searching for neural network structures.

To build up an Neural architecture  search (NAS) model, it is necessary to conduct dimensionality reduction and identification of valid parameter
bounds to reduce the calculation involved in auto-tuning. A global sensitivity
analysis can be used to identify valid regions in the search space and subsequently decrease its dimensionality \cite{fraikin2019dimensionality}, which can serve as a starting point for an efficient calibration process.

The remainder of this paper is organized as follows. In Section~\ref{section 2:hydraulic conductivity}, we describe the physical model of the groundwater flow problem, the randomized spectral method to generate hydraulic conductivity fields, and also construct the manufactured solutions to verify the accuracy of our model, which is helpful in building up the performance estimation strategy. In Section~\ref{section 3:neural networks}, we introduce the neural architecture search model, including its structure and how each part works specifically. And we presented an efficient sensitivity analysis method and compared several hyper-parameter optimizers in order to find an accurate and efficient search method. In Section~\ref{section 4:numerical examples}, 
we introduce the finite difference method and the finite element method as a comparison to verify the feasibility of our method through numerical experiments. At last, some conclusions of the present study are drawn in Section~\ref{section 5:conclusion}. The specific generation process of random fields and the expressions for manufactured solutions  are presented in ~\ref{appendix a} and ~\ref{appendix b}.

\section{Generate stochastic flow fields in a heterogeneous porous medium with randomized spectral model}
\label{section 2:hydraulic conductivity} 
\subsection{Darcy equation for groundwater flow problem}
\label{subsection 2.1:pde3D} 
Consider the continuity equation for steady-state, aquifer flow in a porous media governed by the Darcy law:
\begin{equation}\label{eq:gvequation}
\bm{q}(\bm{x})=-K(\bm{x})\nabla(h(\bm{x})),
\end{equation}
where $\bm{q}$ is the Darcy velocity, $K$, is the hydraulic conductivity, $h$,  the hydraulic head $h=H+\delta h$,with the mean $H$ and the perturbation $\delta h$.

To describe the variation of hydraulic conductivity as a function of the position vector $\bm{x}$, it is convenient to introduce the variable
\begin{equation}\label{eq:log-normal}
Y(\bm{x})=\ln{K(\bm{x})},
\end{equation}
where $Y(\bm{x})$ is the  hydraulic log-conductivity with  the mean $\langle Y \rangle$ and perturbation $Y'(\bm{x})$:
\begin{equation}\label{eq:y}
Y(\bm{x})=\langle Y \rangle+Y'(\bm{x}),
\end{equation}
where $E[Y'(\bm{x})]=0$, and $Y(\bm{x})$ is taken to be a three-dimensional statistically homogeneous random field characterized by its correlation function,
\begin{equation}\label{eq:correlation function of Y}
C_Y(\bm{r}) = \langle Y'(\bm{x} + \bm{r})Y'(\bm{x}) \rangle,
\end{equation}
where $\bm{r}$ is the separation vector.

According to the conservation equation $\nabla \cdot \bm{q} = 0$, the Equation~\eqref{eq:gvequation} can be rewritten in the following form:
\begin{equation}\label{eq:darcy equation}
E(h)=\sum_{j=1}^{N}\frac{\partial }{\partial x_j}(K(\bm{x})\frac{\partial h}{\partial x_j})=0.
\end{equation}
with $N$ the dimension and subject to the Neumann and Dirichlet boundary conditions as follows:
\begin{equation}\label{eq:boundary conditions}
\begin{aligned}
h(\bm{x})=\bar{h}, \bm{x} \in \tau_D,\\
q_n(\bm{x})=\bar{q}_n, \bm{x} \in \tau_N.
\end{aligned}
\end{equation}
The groundwater flow problem can be boiled down to find a solution $h$ such that Equations \ref{eq:darcy equation} and \ref{eq:boundary conditions} holds. $E$ is an operator that maps elements of vector space H to vector space V and represents the investigated physical model:
\begin{equation}\label{eq:mapping}
E:H\rightarrow V, with\,h\in H.
\end{equation}

To be more specific, to solving Equation~\eqref{eq:darcy equation} with $N=3$ in domain $D=[0,L_x]\times[0,L_y]\times[0,L_z]$, the Dirichlet boundary and Neumann boundary conditions can be assumed as follows:
\begin{equation}\label{eq:boundary conditions}
\left\{  
\begin{array}{lr}  
h(0, y, z) = -J\cdot L_x, \quad h(L_x, y, z) = 0, &\forall y \in [0, L_y], z \in [0, L_z],\\
\frac{\partial h}{\partial y}(x,0,z)= \frac{\partial h}{\partial y}(x,L_y,z)=0, &\forall x \in [0, L_x],z \in [0, L_z],\\
\frac{\partial h}{\partial z}(x,y,0)= \frac{\partial h}{\partial z}(x,y,L_z)=0, & \forall x \in [0, L_x],y \in [0, L_y],
\end{array}  
\right.  
\end{equation}
where $J$ is  the mean slope of the hydraulic head in $x$ direction, a dimensionless constant \cite{bakrstochastic}.

As suggested by Ababou \cite{ababounumerical}, the scale of fluctuation should be much smaller than the scale of domain to get statistically meaningful results, considering the simplicity of the calculations, we set the $L_x, L_y, L_z$ ten times larger than $\lambda$. They also suggested a reasonable resolution requirement could be 
\begin{equation}\label{eq:dx}
\frac{\Delta x}{\lambda} \leq \frac{1}{5},
\end{equation}
we set the number of discretization points in each direction 100.

As $Y'$ is  homogeneous and isotropic, we consider two correlation functions: the exponential correlation function \cite{salandinsolute},
\begin{equation}\label{eq:exp correlation}
C_Y(\bm{r})=\sigma_Y^2exp(-\frac{\left|\bm{r}\right|}{\lambda}),
\end{equation}
and the Gaussian correlation function \cite{dentztemporal},
\begin{equation}\label{eq:gauss correlation}
C_Y(\bm{r})=\sigma_Y^2exp(-\frac{\left|\bm{r}\right|^2}{\lambda^2}),
\end{equation}
where $\lambda$ is the log conductivity correlation length scale.

\subsection{Generate the hydraulic conductivity fields}
\label{subsection 2.2:generate the hydraulic conductivity fields}
In the application of spectral methods, the Wiener–Khinchin theorem plays a very important role. It states the power spectral density of a smooth stochastic process is the Fourier transform of his auto-correlation function. Applying it to the heterogeneous groundwater flow problem, the Gaussian random field with given spectral density $S(\bm{k})$  is just the Fourier transform of the correlation function (in Equation \eqref{eq:correlation function of Y}):
\begin{align}
&C_Y(\bm{r})=\int_{\mathbb{R}^N} e^{i2\pi \bm{k}\cdot \bm{r}}S(\bm{k}),\label{eq:wiener}\\ 
&S(\bm{k})=\int_{\mathbb{R}^N} e^{-i2\pi \bm{k}\cdot \bm{r}}C_{Y}(\bm{r}),\label{eq:wiener2}
\end{align}
with $S(\bm{k})$ spectral function of the random field $Y'(\bm{x})$.

According to the Fourier transform table, 
\begin{align}
&\mathscr{F}(exp(-\frac{\left| \bm{r}\right|}{\lambda}))=\frac{2\lambda}{1+4\pi^2\bm{k}^2\bm{r}^2},\label{eq:FFT1}\\
&\mathscr{F}(exp(-\frac{\left| \bm{r}\right|^2}{\lambda^2}))=\lambda \sqrt{\pi}e^{-\pi^2\bm{k}^2\bm{r}^2},\label{eq:FFT2}
\end{align}

Substituting Equation~\eqref{eq:exp correlation},~\eqref{eq:gauss correlation},~\eqref{eq:FFT1} and~\eqref{eq:FFT2} into Equation~\eqref{eq:wiener2} respectively, the spectral function under the exponential and the Gaussian correlation coefficient can be derived:
\begin{align}
&S(\bm{k},\lambda)=\sigma_Y^2 \lambda^d (1+(2\pi \bm{k}\lambda)^2)^{-\frac{d+1}{2}},\label{eq:exp spectral}\\
&S(\bm{k},\lambda)=\sigma_Y^2 \pi^{d/2}\lambda^d e^{-(\pi\bm{k}\lambda)^2}.\label{eq:gauss spectral}
\end{align}

Kramer \cite{kramercomparative} has given an expression for a Gaussian homogenous random field in the general case:
\begin{equation}\label{eq:fourier}
Y'(\bm{x})=\int e^{-i2\pi \bm{k} \bm{x}}\sqrt{S(\bm{k})}W(\mathrm{d}\bm{k}),
\end{equation}
here $W(\mathrm{d}k)$ is a complex-valued white noise random measure, with $W(b)=\overline{W(-b)}$ and $\langle W(b)\rangle=0$.

According to Euler's formula, Equation~\eqref{eq:fourier} can be rewritten as:
\begin{equation}\label{eq:euler}
Y'(\bm{x})=\int cos(2\pi \bm{k}\bm{x})\sqrt{S(\bm{k})}W(\mathrm{d}\bm{k})+i\int sin(2\pi k \bm{x})\sqrt{S(\bm{k})}W(\mathrm{d}\bm{k}).
\end{equation}

It is obvious from Equation~\eqref{eq:exp spectral} and Equation~\eqref{eq:gauss spectral} that the two spectral functions are even. So,
\begin{equation}\label{eq:euler1}
Y'(\bm{x})=2\big (\int_{0}^{+\infty} cos(2\pi \bm{k} \bm{x})\sqrt{S(\bm{k})}W_1(\mathrm{d}\bm{k})+\int_{0}^{+\infty} sin(2\pi \bm{k} \bm{x})\sqrt{S(\bm{k})}W_2(\mathrm{d}\bm{k})\big ),
\end{equation}
with $W_1(\mathrm{d}\bm{k})$ and $W_2(\mathrm{d}\bm{k})$ being two independent real valued Wiener processes.

Choose the probability density function($\textit{pdf}$) of random variables $k$, $p(\bm{k}) = 2S(\bm{k})/ \sigma^2$, substitute it into Equation~\eqref{eq:euler1},
\begin{equation}\label{eq:euler2}
Y'(\bm{x})=\sqrt{2\sigma^2}\big (\int_{0}^{+\infty} cos(2\pi \bm{k} \bm{x})\sqrt{p(\bm{k})}W_1(\mathrm{d}\bm{k})+\int_{0}^{+\infty} sin(2\pi \bm{k} \bm{x})\sqrt{p(\bm{k})}W_2(\mathrm{d}\bm{k})\big ),
\end{equation}
which variance can be calculated as:
\begin{equation}\label{eq:eulervar}
Var\big (Y'(\bm{x})\big )=2\sigma^2\big (\int_{0}^{+\infty} cos^2(2\pi \bm{k} \bm{x})p(\bm{k})\mathrm{d}\bm{k}+\int_{0}^{+\infty} sin^2(2\pi \bm{k} \bm{x})p(\bm{k})\mathrm{d}\bm{k}\big ).
\end{equation}

The following approximations can be obtained using the Monte-Carlo integration,
\begin{equation}\label{eq:monte}
Var(Y'(\bm{x})) \approx \frac{2\sigma^2}{N}(\sum_{i=1}^{N} \big (cos^2(2\pi \bm{k}_i \bm{x})+ sin^2(2\pi \bm{k}_i \bm{x})\big ).
\end{equation}

From this we can derive the final approximate form,
\begin{equation}\label{eq:u}
Y'(\bm{x}) = \sqrt{\frac{2\sigma^2}{N}}\sum_{i=1}^{N} \big (\xi_1 cos(2\pi \bm{k}_i \bm{x})+\xi_2 sin(2\pi \bm{k}_i \bm{x})\big ),
\end{equation}
where the $\xi_i$ are again mutually independent Gaussian random variables.

For the random variable $\bm{k}_i$, we can get its probability density distribution function to be $p(\bm{k})$ and calculate its cumulative distribution function($\textit{cdf}$) according to $F(k)=\int_{-\infty}^{\bm{k}}p(x)\mathrm{d}x$, and as long as there is an uniformly distributed other random variable $\theta$, the inverse function $\bm{k} = F^{-1}(\theta)$ can be obtained, and $k$ must obey the $p(\bm{k})$ distribution. The choices of $\bm{k}$ and the proof process are detailed in~\ref{appendix a}.

Using these expressions for $\bm{k}$, we can write MATLAB programs to generate the random fields we need. Shown in Figures \ref{fig:k2d} and \ref{fig:k3d}, the schematic of the random field space is generated with fixed $\langle k\rangle$ to 15, $\sigma^2$ to 0.1, and $N$ to 1000,

\begin{figure}[H]
	\captionsetup{width=0.85\columnwidth}
	\centering
	\begin{subfigure}[b]{6.0cm}
		\centering\includegraphics[height=6cm,width=6.0cm]{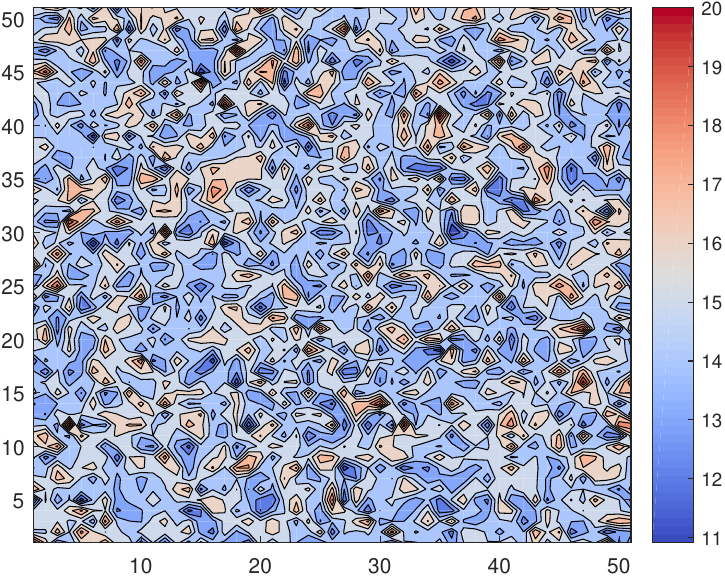}
		\caption{}
	\end{subfigure}%
	\hspace{0.5cm}
	\begin{subfigure}[b]{6.0cm}
		\centering\includegraphics[height=6cm,width=6.0cm]{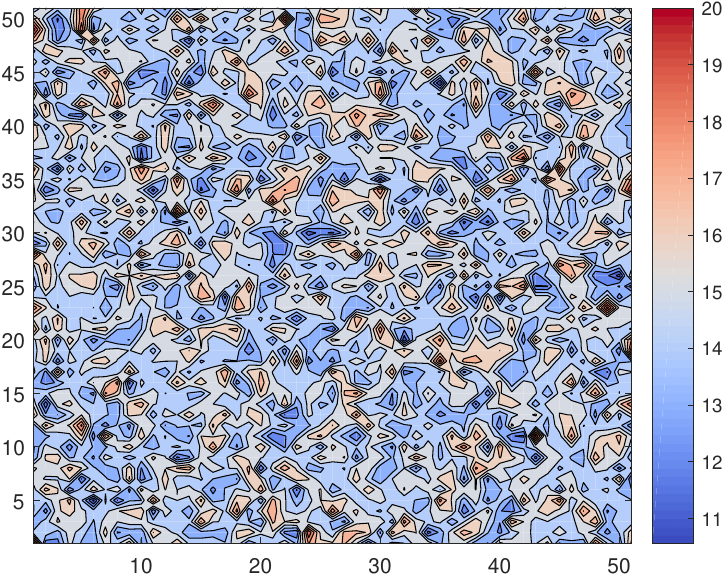}
		\caption{}
	\end{subfigure}%
	\caption{Two dimensional hydraulic conductivity field with $\left(a\right)$ exponential correlation and $\left(b\right)$ Gaussian correlation}
	\label{fig:k2d}
\end{figure}

\begin{figure}[H]
	\captionsetup{width=1.0\columnwidth}
	\centering
	\begin{subfigure}[b]{6.0cm}
		\centering\includegraphics[height=5cm,width=6.0cm]{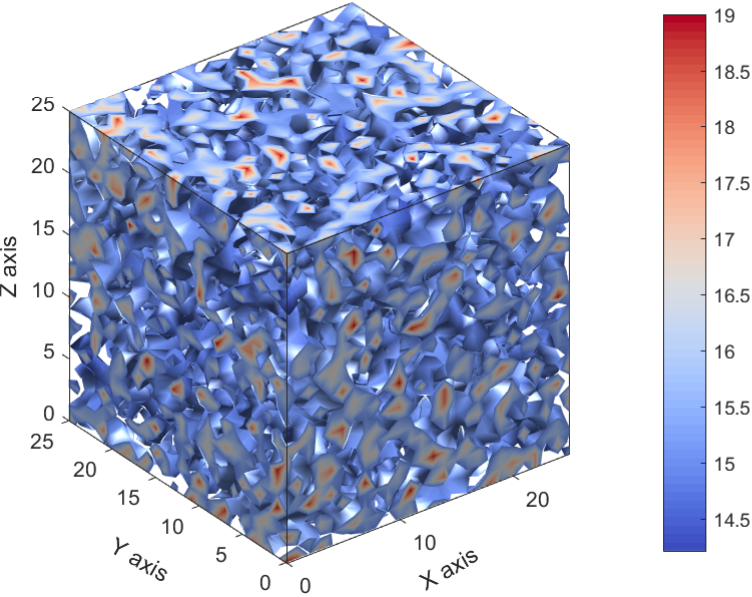}
		\caption{}
	\end{subfigure}%
	\hspace{0.5cm}
	\begin{subfigure}[b]{6.0cm}
		\centering\includegraphics[height=5cm,width=6.0cm]{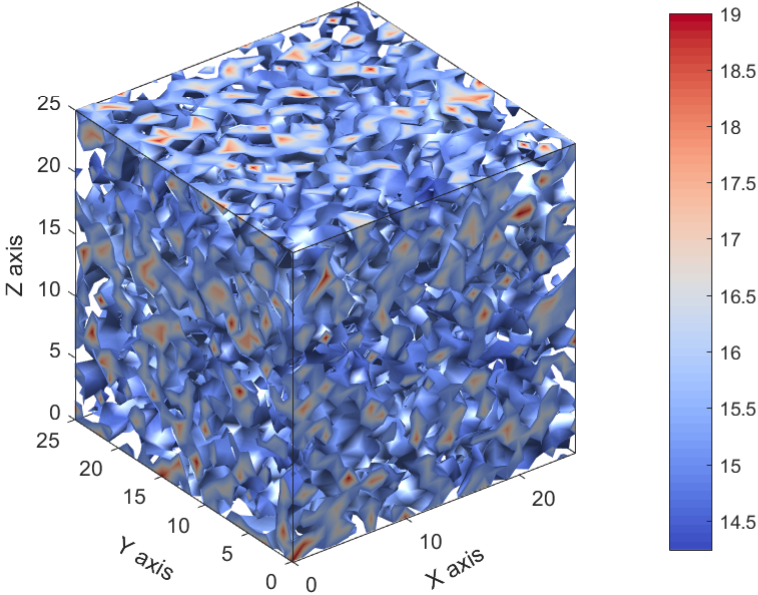}
		\caption{}
	\end{subfigure}%
	\caption{Three dimensional hydraulic conductivity field with $\left(a\right)$ exponential correlation and $\left(b\right)$ Gaussian correlation}
	\label{fig:k3d}
\end{figure}

\subsection{Defining the experimental model}
\label{subsection 2.3:Parameter setting}
After determining the expression for the groundwater flow problem, we need to set the geometric and physical parameters involved in simulation. For the hydraulic conductivity, the number of modes $N$ and the variance $\sigma^2$ got a big impact. With the Lipschitz continuity in mind, experiments have found that an excessively large N leads to, when the exponential correlation coefficient is used, the realization of the K field apparently approaches a non-differentiable function \cite{alecsa2019benchmark}. So in this paper, we set the $N$ values to 500, 1000 and 2000, to verify the fit of the model to $k$ for different modes. $\sigma^2$ determines the heterogeneity of the hydraulic conductivity, with a larger $\sigma^2$ indicating larger heterogeneity. 
In reality geological formations, $\sigma^2$ has a wide range of variation. As summarized in Sudicky' study \cite{sudicky2010heterogeneity}, in low heterogeneous Canadian Forces Base Borden aquifers is $\sigma^2 =0.29$, for Cape Cod is 0.14, but in  highly heterogeneous Columbus aquifers, $\sigma^2 =4.5$. First-order analysis \cite{bakrstochastic} has been provided a solid basis for predictions. Numerical simulations \cite{berlinhydro}
indicate that the first-order results are robust and applicable
for $\sigma^2$ close to and even above 1. With this approximation, we can get the $e^{\langle Y\rangle}=\langle K\rangle exp(-\sigma^2/2)$ for one and two dimensional cases \cite{attinger2003generalized}, and $e^{\langle Y\rangle}=\langle K\rangle exp(-\sigma^2/6)$ for three dimensional cases \cite{gelhar1983three}. So in this paper we set the value of $\sigma^2$ to 0.1, 1 and 3, covering the three cases from small to moderate and large. The mean hydraulic conductivity is fixed to $\langle K\rangle = 15 m/day$, a value representative for gravel or coarse sand aquifers \cite{dagan2012flow}. And we set all the correlation lengths in one and two dimensional cases equal $1m$, in three dimensional cases, we set them different, $\lambda_1=0.5m$, $\lambda_2=0.2m$ and $\lambda_3=0.1m$. Based on the above settings, we have finalized our test domain: 
\begin{itemize}
	\item One dimensional groundwater flow $\rightarrow [0,25]$
	\item Two dimensional groundwater flow $\rightarrow[0,20]\times[0,20]$
	\item Three dimensional groundwater flow $\rightarrow [0,5]\times[0,2]\times[0,1]$
\end{itemize}

\subsection{Manufactured solutions}
\label{subsection 2.4:mms}
To verify the accuracy of our model and build up an error estimation model, we use the  method of manufactured Solution (MMS), which provides a general procedure for generating analytical solutions \cite{tremblay2006code}. Malaya et al. \cite{malaya2013masa} discussed the method manufactured solutions in constructing an error estimator for solution verification where one simulates the phenomenon of interest and has no priori knowledge of the solution. MMS, instead of relying upon the availability of an exact solution to the governing equations, specifies a manufactured solution. This artificial solution is then substituted into the partial differential equations. Naturally, there will be a residual term since the chosen function is unlikely to be an exact solution to the original partial differential equations. This residual can then be added to the code as a source term; subsequently, the MMS test uses the code to solve the modiﬁed equations and checks that the chosen function is recovered.

With MMS, the original problem is to find the solution 
Equation~\eqref{eq:darcy equation} is thus changed to the following form,

\begin{equation}\label{eq:darcy with f}
E(\hat{h})=\sum_{j=1}^{N}(\frac{\partial }{\partial x_j}(K(\bm{x})\frac{\partial \hat{h}}{\partial x_j}))=\sum_{j=1}^{N}f_j=f.
\end{equation}
For operator $E(\hat{h})$, we now get a source term $f$. By adding the source term to the original governing equation $E$, a slightly modified governing equation will be obtained:
\begin{equation}\label{eq:E darcy with f}
E'(\hat{h})=E(\hat{h})-f=0,
\end{equation}
which is solved by the manufactured solution $\hat{h}$.

The Neumann and Dirichlet boundary conditions are thus modified as follows:
\begin{equation}\label{eq:modified boundary conditions}
\begin{aligned}
\hat{h}(\bm{x})=\hat{h}_{MMS}(\bm{x}), \bm{x} \in \tau_D,\\
\hat{q}_n(\bm{x})=-K(\bm{x})\hat{h}_{MMS,n}(\bm{x}), \bm{x} \in \tau_N.
\end{aligned}
\end{equation}

We adopt the form of the manufactured solution mentioned in Tremblay's study \cite{tremblay2006code},

\begin{equation}\label{eq:mms1}
\hat{h}_{MMS}(\bm{x})=a_0+sin( \sum_{j=1}^{N}a_j x_j),
\end{equation}
where $a_i$ are arbitrary non-zero real numbers.

When the manufactured solutions~\eqref{eq:mms1} are applied on the lift side of the Equation~\eqref{eq:darcy equation}, we will get a source term $f$,

\begin{equation}\label{eq:f1}
f(x_j)=a_j \frac{\partial K(\bm{x})}{\partial x_j} cos( \sum_{i=1}^{N}a_i x_i)-a_j^2 K(\bm{x}) sin(\sum_{i=1}^{N}a_i x_i).
\end{equation}

To verify the adaptability of our model to different solutions, we also used another form of manufactured solution \cite{malaya2013masa},

\begin{equation}\label{eq:mms2}
\hat{h}_{MMS}(\bm{x})=a_0+\sum_{j=1}^{N}sin( a_j x_j),
\end{equation}
the parameter values are the same as Equation~\eqref{eq:mms1}. We can get the source term as follows,

\begin{equation}\label{eq:f2}
f(x_j)=a_j \frac{\partial K(\bm{x})}{\partial x_j} cos(a_j x_j)-a_j^2 K(\bm{x}) sin(a_j x_j).
\end{equation}

This leads to the change of the boundary conditions from Equation~\eqref{eq:boundary conditions} to 
\begin{equation}\label{eq:boundary conditions f}
\left\{  
\begin{array}{lr}  
\hat{h}(0, y, z) =\hat{h}_{MMS}(0,y,z), &\forall y \in [0, L_y], z \in [0, L_z]\\
\hat{h}(L_x, y, z) = \hat{h}_{MMS}(L_x,y,z), &\forall y \in [0, L_y], z \in [0, L_z]\\
\frac{\partial \hat{h}}{\partial y}(x,0,z)= \frac{\partial \hat{h}_{MMS}}{\partial y}(x,0,z), &\forall x \in [0, L_x],z \in [0, L_z] \\
\frac{\partial \hat{h}}{\partial y}(x,L_y,z)= \frac{\partial \hat{h}_{MMS}}{\partial y}(x,L_y,z), &\forall x \in [0, L_x],z \in [0, L_z] \\
\frac{\partial \hat{h}}{\partial z}(x,y,0)= \frac{\partial \hat{h}_{MMS}}{\partial z}(x,y,0), & \forall x \in [0, L_x],y \in [0, L_y]\\
\frac{\partial \hat{h}}{\partial z}(x,y,L_z)= \frac{\partial \hat{h}_{MMS}}{\partial z}(x,y,L_z), & \forall x \in [0, L_x],y \in [0, L_y]
\end{array}  
\right.  
\end{equation}

These source terms can be used as a physical law  to describe the system, and also as a basis for evaluating neural networks. The specific form of the construct solutions and source term $f$ used in this paper are given in~\ref{appendix b}.

\section{Analysis of groundwater flow problems with deep learning based neural architecture search method}
\label{section 3:neural networks}
\subsection{Modified Neural architecture search (NAS) model}
\label{subsection 3.1:NAS}

The convolutional NAS approach has three main dimensions \cite{elsken2018neural}, the first one is a collection of candidate neural network structures called the search space, a narrowing of which can be achieved by combining it with the conditions of a known practical model. Second one is the search strategy, which represents in detail how NN works in the search space. And the last, the performance of the neural network is measured by certain metrics such as precision and speed, which is called performance evaluation. Inspired by Park \cite{parkecolutionay}, we construct the system configuration of the NAS fitted to the PINNs model in Figure~\ref{fig:NAS}. It consists of sensitivity analyses (SA), search methods, physics-informed neural networks (NN) generator, which eventually output the optimum neural architecture configuration and the corresponding weights and biases. A transfer learning model is eventually built up based on the weights and biases and the selected neural network configurations.

\begin{figure}[H]
		\captionsetup{width=0.9\columnwidth}
		\includegraphics[height=9cm]{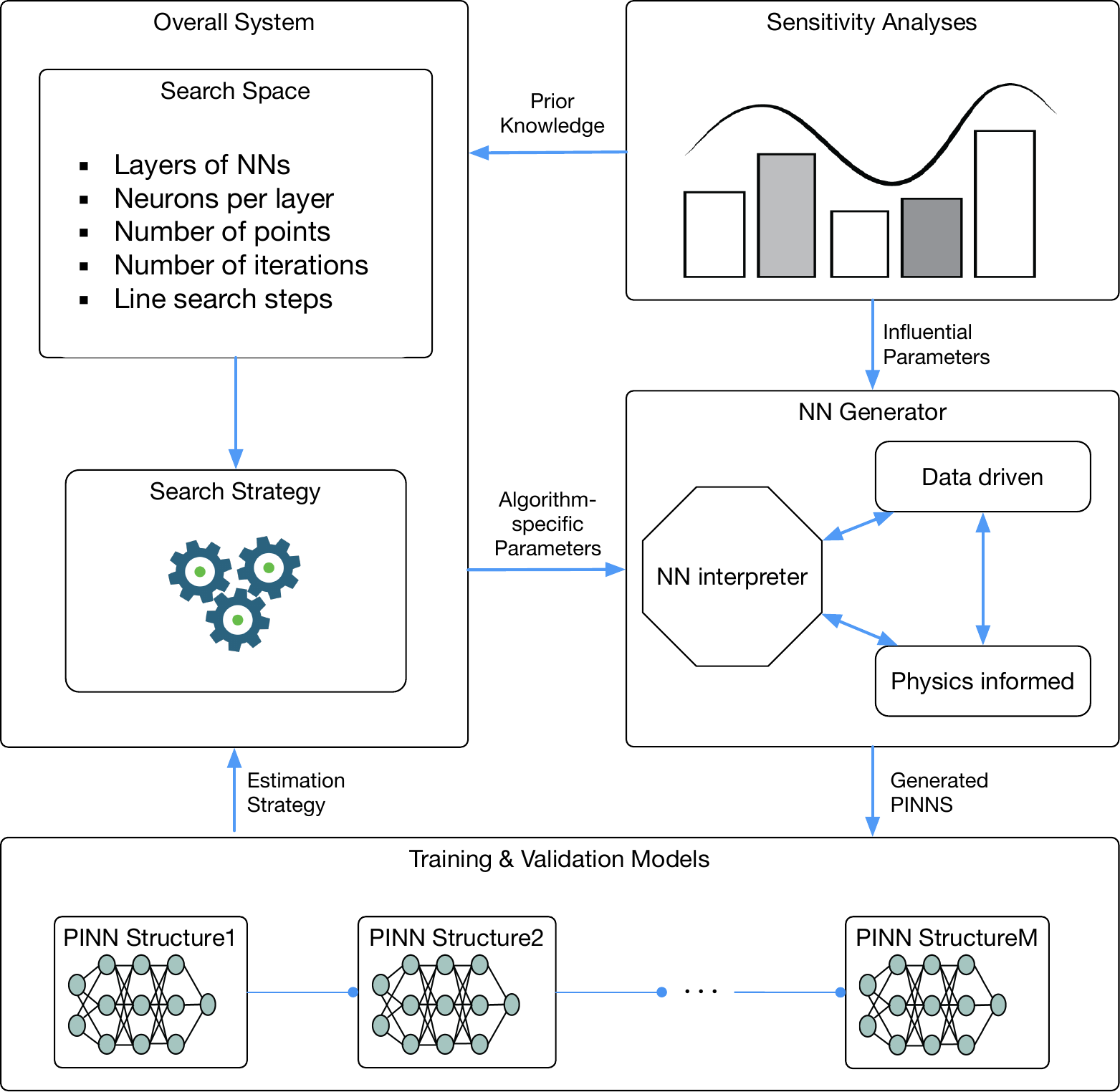}
	\centering
	\caption{Overall methodology}
	\label{fig:NAS}
\end{figure}

\subsubsection{Components of convolutional NAS}
\label{subsubsection 3.1.1:components}
An abstract illustration of convolutional NAS methods is shown below:

\begin{figure}[H]
	\captionsetup{width=0.85\columnwidth}
	\centering
	\begin{subfigure}[b]{6.0cm}
		\centering\includegraphics[height=9cm]{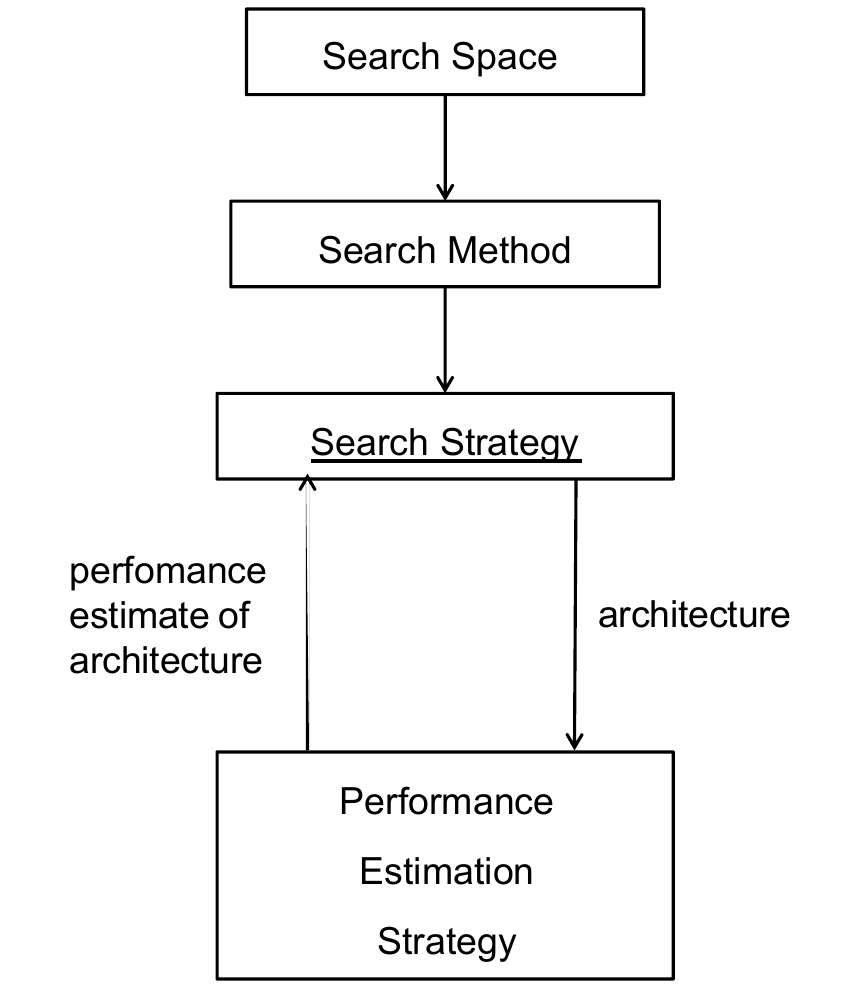}   
		\caption{}
	\end{subfigure}%
	\hspace{0.5cm}
	\begin{subfigure}[b]{6.0cm}
		\centering\includegraphics[height=9cm]{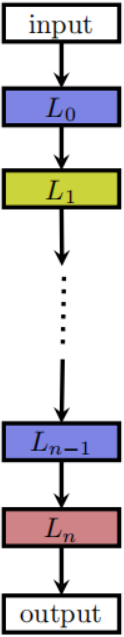}   
		\caption{}
	\end{subfigure}%
	\caption{$\left(a\right)$ Abstract illustration of NAS methods and $\left(b\right)$ Search space}
	\label{fig:abstract NAS}
\end{figure}

\begin{itemize}
\item \textbf{Search Space}. The search space defines the architecture that in principle can be represented. Combined with a priori knowledge of the typical properties of architectures well suited to the task, this can reduce the size of the search space and simplify the search. For the model in this study, the priori knowledge of search space is gained from the global sensitive analysis. Figure~\ref{fig:abstract NAS}$\left(b\right)$ shows a common global search space with a chain structure in NAS work.

The chain-structured neural network architecture can be written as a sequence of $n$ layers, where i'th layer $L_i$ receives input from layer $i-1$ and its output is used as input for layer $i+1$:
\begin{equation}
	output = L_n\odot L_{n-1}\odot ... L_1\odot L_0
\end{equation}
where $\odot$ are operations.
\item \textbf{Search Method}. The search method is an initial filtering step to help us narrow down the search space. In this paper, hyperparameter optimizers will be used to accomplish this goal. It should be noted here the choice of search space largely determines the difficulty of the optimization problem, which may result in the optimization problem remaining (i) noncontinuous and (ii) relatively high-dimensional. Thus, the prior knowledge of the model features needs to be incorporated into consideration. 
\item \textbf{Search Strategy}. The search strategy details how to explore the search space. It's important to note that, on the one hand, it is desirable to quickly find architectures that perform well, but on the other hand, it should avoid converging too early to areas of suboptimal architecture.
 \item \textbf{Performance Estimation Strategy}. Performance estimation is the process of estimating this performance: the simplest option is standard training and validation of the data for the architecture. Based on discussion in Section \ref{subsection 2.4:mms}, we can define the relative error for performance estimation strategy:
  \begin{equation}\label{eq:Estimation Strategy}
\delta h=\frac{\| \hat{h}-\hat{h}_{MMS}\|_2}{\|\hat{h}_{MMS}\|_2}
\end{equation} 
\end{itemize}

\subsubsection{Modified NAS}
\label{subsubsection 3.1.2:modified NAS}
To integrated with the physics-informed machine learning model, some changes have been drawn. For the modified model shown in Figure~\ref{fig:NAS}, the NAS is divided into four main phases. Namely, a sensitivity analysis to dive into prior knowledge behind the physics-informed machine learning model, which eventually helped to construct the search space in hope to be less dependent on human experts. The second phase is the search strategies, there are a wide choice of optimization methods. In this paper, we have tested several commonly used optimization strategies, including randomization search method, Bayesian optimization method, Hyperband optimization method, and Jaya optimization method. The third phase is the neural network generators, including the generation of physics-informed deep neural networks tailored for a mechanical model based on the information from optimization. The final phase are the training and validation models, with the input neural architectures, it will output the estimation strategies. A suitable estimation is recommended in Equation \ref{eq:Estimation Strategy}.

\subsection{Neural networks generator}
\label{subsection 3.2:NN}

How to approximating a function, regarding solving a partial differential equation, has long been a problem in mathematics. Mathematicians have developed many tools to approximate functions, such as interpolation theory, frameworks, spectral methods, finite elements, etc. From the perspective of approximation theory, neural Networks can be viewed as a nonlinear smooth function approximator. Using the neural network, we can obtain an output value that reflects the quality, validity, etc. of the input data, adjust the configuration of the neural network based on this result, recalculate the result, and repeat these steps until the target is reached. Physics-informed neural networks, on the other hand, add  physical conservation law and  prior physical knowledge to the existing neural network, which require substantially less training data and can result in simpler neural network structures, while achieving high accuracy \cite{misyris2019physics}. The diagram of its structure is as follows:

\begin{figure}[H]
	\captionsetup{width=0.9\columnwidth}
	\includegraphics[height=9cm]{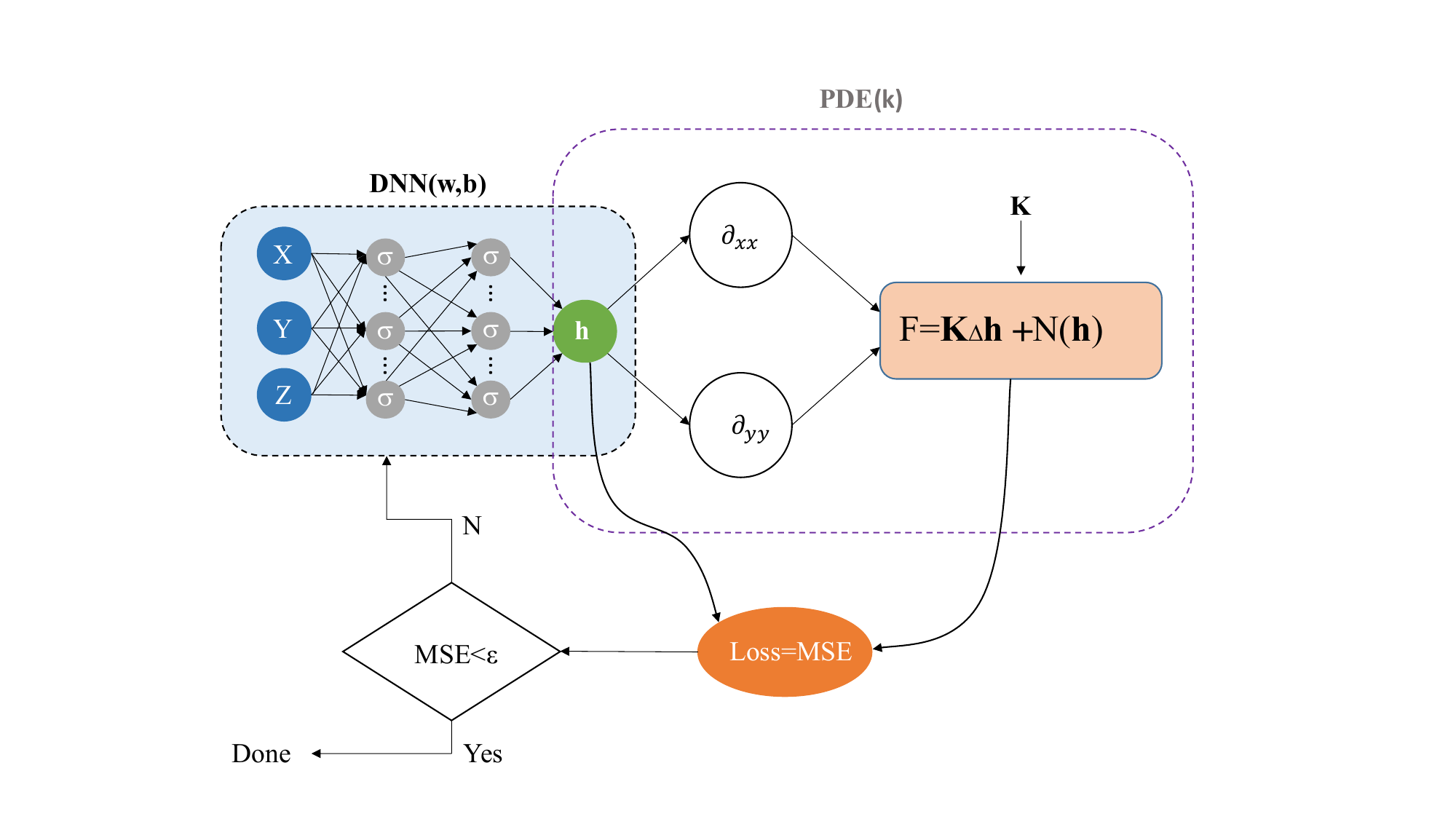}
	\centering
	\caption{Physics-informed neural networks}
	\label{fig:PINN}
\end{figure}

\subsubsection{Physics-informed neural network}
\label{subsection 3.2.1:neural network}

Physics-informed networks generator includes mainly neural networks interpreter, which represents the configuration of a NN, and physical information checker.
For the neural networks interpreter, it comprises of a deep neural networks with multiple layers: input layer, one or more hidden layers and output layer. Each layer consists of one or more nodes called neurons, shown in the Figure~\ref{fig:PINN} by small coloured circles, which is the basic unit of computation. For an interconnected structure, every two neurons in neighbouring layers have a connection, which is represented by a connection weight, see Figure~\ref{fig:PINN}. Mathematically, the output of a node is computed by:
\begin{equation}\label{eq:a}
y_{i}=\sigma_i(\sum_{j} w_{j}^{i}z_{j}^{i}+b^{i})
\end{equation}
with $z^{i}$ the input, $w^{i}$ weight, $b^{i}$ bias and $\sigma_i$ activation function. With those concepts, we can draw a definition here: 
\theoremstyle{definition}
\begin{definition}{(Feedforward Neural Network)}
	A generalized neural networks with activation can be written in a tuple form $\left((f_1,\sigma_1),...,(f_n,\sigma_n)\right)$, with $f_i$ an affine-line function $(f_i = W_i\textit{\textbf{x}}+b_i)$ that mapps $R^{i-1} \rightarrow R^{i}$ and activation $\sigma_i$ the mapping $R^{i} \rightarrow R^{i}$, which in all defines a continuous bounded function mapping $R^{D}$ to $R^{n}$:
	\begin{equation}
	FNN: \mathbb{R}^d \to \mathbb{R}^n, \; \textrm{with}\; \;   F^n\left(\textit{\textbf{x}};\theta\right) = \sigma_n\circ f_n \circ\cdots  \circ \sigma_1 \circ f_1
	\end{equation}
	where $d$ the dimension of the inputs, $n$ the number of field variables, $\theta$ consisting of hyperparameters such as weights and biases and $\circ$ denotes the element-wise composition.
\end{definition}

The universal approximation theorem \cite{FUNAHASHI1989183,HORNIK1989359} reveals that this continuous bounded function $F$ with nonlinear activation $\sigma$ can be adopted to capture the smoothness, nonlinear property of the system.
Accordingly, a theorem follows as \cite{hornik1991approximation}:
\begin{theorem}\label{theorem1}
	If $\sigma^i \in C^m(R^i)$ is non-constant and bounded, then $F^n$ is uniformly m-$dense$ in $C^m(R^n)$.
\end{theorem}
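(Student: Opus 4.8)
The plan is to read the conclusion ``$F^n$ is uniformly $m$-dense in $C^m$'' in its precise sense --- that for every target $g\in C^m$, every compact $K$, and every $\varepsilon>0$ there is a network $F^n$ with $\max_{|\alpha|\le m}\sup_{\bm{x}\in K}\bigl|D^{\alpha}g(\bm{x})-D^{\alpha}F^n(\bm{x})\bigr|<\varepsilon$ --- and then to reduce to the cleanest possible sub-case. Since the class $F^n$ contains, as a special case, the one--hidden--layer networks with linear readout, i.e. the span $\Sigma(\sigma)=\mathrm{span}\{\,\bm{x}\mapsto\sigma(\bm{w}\cdot\bm{x}+b):\bm{w}\in\mathbb{R}^d,\ b\in\mathbb{R}\,\}$ of ridge functions, it suffices to prove that $\Sigma(\sigma)$ is $m$-dense in $C^m(\mathbb{R}^d)$. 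The multi-output and multi-layer statements then follow coordinatewise and by composition, using that a composition of $C^m$ maps is again $C^m$, so I would dispatch those reductions quickly and concentrate all the work on the scalar single-hidden-layer span.

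The engine of the proof is transitivity of the $m$-closure combined with a two-step comparison against pure harmonics. First I would establish that the span of $\{\bm{x}\mapsto\cos(\bm{w}\cdot\bm{x}),\ \bm{x}\mapsto\sin(\bm{w}\cdot\bm{x})\}$ is itself $m$-dense in $C^m(\mathbb{R}^d)$. This is essentially classical Fourier analysis on a compact set: a smooth compactly supported modification of $g$ has a rapidly decaying Fourier transform, and the associated trigonometric (Riemann-sum) approximant converges to $g$ together with all derivatives up to order $m$, because differentiating only multiplies each Fourier coefficient by a polynomial in the frequency, a growth the decay absorbs. Second I would show that each harmonic $\bm{x}\mapsto\cos(\bm{w}\cdot\bm{x})$ lies in the $m$-closure of $\Sigma(\sigma)$; chaining the two steps via transitivity of closure then yields that $\Sigma(\sigma)$, and hence $F^n$, is $m$-dense.

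The crux is this harmonic-recovery step, and it is where both hypotheses on $\sigma$ are consumed. The plan is to first verify that the $m$-closure of $\Sigma(\sigma)$ is a linear subspace stable under affine reparametrisation of the input and under differentiation with respect to the scalar scaling $a$ and the bias $b$ in $\sigma(a\,\bm{w}\cdot\bm{x}+b)$. Boundedness of $\sigma$ together with its derivatives through order $m$ is exactly what upgrades the relevant parameter difference quotients from pointwise to $C^m$-norm convergence on compacta, so these derivatives genuinely stay inside the closure. Nonconstancy is then what makes the limiting object nontrivial: viewing $\sigma$ as a tempered distribution, nonconstancy forces its Fourier transform to have support at some point other than the origin, and by sliding the bias to such a point and rescaling one reconstructs the exponential, hence cosine, profile in the limit.

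I expect the main obstacle to be precisely this recovery of harmonics under the mere $C^m$ regularity. Unlike the $C^{\infty}$ nonpolynomial case, where one differentiates $\sigma$ arbitrarily often to manufacture all monomials and then invokes polynomial density, here only finitely many derivatives are available, so the argument must control exactly $m$ derivatives uniformly --- consuming the boundedness hypothesis --- while simultaneously locating a nonvanishing Fourier datum using only nonconstancy. Holding these two requirements together, enough smoothness to differentiate the parameter dependence $m$ times and enough nondegeneracy to obtain a nonzero limit, is the delicate point; by contrast the linearity and affine-invariance of the closure and the reduction to a single hidden layer are routine bookkeeping.
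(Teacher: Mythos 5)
The first thing to say is that the paper does not prove this statement at all: Theorem~\ref{theorem1} is quoted from Hornik's 1991 universal-approximation paper \cite{hornik1991approximation} and is used as an imported black box, so there is no in-paper argument for your proposal to agree or disagree with. Judged on its own terms, your outline reproduces the classical Fourier-analytic strategy behind the cited result: reduce to the span $\Sigma(\sigma)$ of ridge functions $\sigma(\bm{w}\cdot\bm{x}+b)$ with linear readout, prove $m$-density of trigonometric sums by mollifying the target and absorbing the polynomial frequency factors that differentiation introduces into the decay of the Fourier transform, and then place each harmonic in the $C^m$-closure of $\Sigma(\sigma)$ by exploiting that a bounded nonconstant $\sigma$ cannot have distributional Fourier support confined to the origin (that would make $\sigma$ a bounded polynomial, hence constant). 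That is the right skeleton, and it spends the two hypotheses in the right places.

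Two caveats keep this from being a complete proof. First, the harmonic-recovery step, which you correctly identify as the crux, is left at the level of intention: ``sliding the bias to a nonzero spectral point and rescaling'' has to be implemented by integrating $\sigma(a\,\bm{w}\cdot\bm{x}+b)$ against test functions $\psi(b)$ whose Fourier transforms concentrate at the chosen frequency, approximating these integrals by Riemann sums (which are genuine finite elements of $\Sigma(\sigma)$), and checking that the whole chain converges in $C^m$ norm on compacta. Note that the hypothesis bounds only $\sigma$ itself, not its derivatives, so the uniform control of $m$ parameter-derivatives that you invoke must be manufactured by the convolution with the smooth compactly supported $\psi$ rather than assumed; as written, your argument quietly strengthens the hypothesis. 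Second, your reduction to a single hidden layer with linear readout silently corrects the paper's own Definition of $F^n$, which composes an activation $\sigma_n$ after the last affine map; taken literally that caps every output by $\sup\lvert\sigma_n\rvert$ and makes density false, so a careful write-up should state that the output layer is affine. Neither point is fatal --- both repairs are standard --- but they are exactly where the real work of a complete proof would sit.
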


\subsubsection{Deep collocation method}
\label{subsubsection 3.2.2:dcm}
Collocation method is a widely used method seeking numerical solutions for ordinary, partial differential and integral equations \cite{atluri2005methods}. It is a a popular method for trajectory optimization in control theory. A set of randomly distributed points (also known as collocation points) is often deployed to represent a desired trajectory that minimizes the loss function while satisfying a set of constraints. The collocation method tends to be relatively insensitive to instabilities (such as blowing/vanishing gradients with neural networks) and is a viable way to train the deep neural networks \cite{agrawalcollocation}.

The modified Darcy equation~\eqref{eq:darcy with f} can be boiled down to the solution of a second order differential equations with boundary constraints. Hence we first discretize the physical domain with collocation points denoted by $\textit{\textbf{x}}\,_\Omega=(x_1,...,x_{N_\Omega})^T$. Another set of collocation points are employed to discretize the boundary conditions denoted by $\textit{\textbf{x}}\,_\Gamma(x_1,...,x_{N_\Gamma})^T$.
Then the hydraulic head $\hat{h}$ is approximated with the aforementioned deep feedforward neural network $\hat{h}^h (\bm{x};\theta)$. A loss function can thus be constructed to find the approximate solution $\hat{h}^h \left(\bm{x};;\theta\right)$ by minimizing of governing equation with boundary conditions. The mean squared error loss form is taken here.

Substituting $\hat{h}^h \left(\bm{x}\,_\Omega;\theta\right)$ into governing equation, we obtain
\begin{equation}
E'\left(\textit{\textbf{x}}\,_\Omega;\theta\right)=K(\bm{x})\hat{h}_{,ii}^{h}\left(\textit{\textbf{x}}\,_\Omega;\theta\right)+K_{,i}(\bm{x})\hat{h}^{h}_{,i}\left(\textit{\textbf{x}}\,_\Omega;\theta\right)-f\left(\textit{\textbf{x}}\,_\Omega\right),
\end{equation}
which results in a physical informed deep neural network $E'\left(\textit{\textbf{x}}\,_\Omega;\theta\right)$.

The boundary conditions illustrated in Section~\ref{section 2:hydraulic conductivity} can also be expressed by the neural network approximation $\hat{h}^h \left(\textit{\textbf{x}}\,_\Gamma;\theta\right)$ as:

\noindent On $\Gamma_{D}$, we have
\begin{equation}
\hat{h}^h \left(\textit{\textbf{x}}\,_{\Gamma_D};\theta\right)=\hat{h}_{MMS}\left(\textit{\textbf{x}}\,_{\Gamma_D}\right),
\label{eq:bound1}
\end{equation} 

\noindent On $\Gamma_{N}$,
\begin{equation}
\hat{q}_n^h \left(\textit{\textbf{x}}\,_{\Gamma_N};\theta\right) = -K\left(\textit{\textbf{x}}\,_{\Gamma_N}\right)\hat{h}_{MMS,n}\left(\textit{\textbf{x}}\,_{\Gamma_N}\right).
\label{eq:bound2}
\end{equation} 

Note the induced physical informed neural network $E'\left(\textit{\textbf{x}};\theta\right)$, $q\left(\textit{\textbf{x}};\theta\right)$ share the same parameters as $\hat{h}^h \left(\textit{\textbf{x}};\theta\right)$. Considering the generated collocation points in domain and on boundaries, they can all be learned by minimizing the mean square error loss function:
\begin{equation}\label{eq:lossform}
L\left(\theta\right)=MSE=MSE_{E'}+MSE_{\Gamma_{D}}+MSE_{\Gamma_{N}},
\end{equation}
with
\begin{equation}
\begin{aligned}
&MSE_{E'}=\frac{1}{N_d}\sum_{i=1}^{N_d}\begin{Vmatrix}
E'\left(\textit{\textbf{x}}\,_\Omega;\theta\right)
\end{Vmatrix}^2,\\
&MSE_{\Gamma_{D}}=\frac{1}{N_{\Gamma_D}}\sum_{i=1}^{N_{\Gamma_D}}\begin{Vmatrix}
\hat{h}^h \left(\textit{\textbf{x}}\,_{\Gamma_D};\theta\right)-\hat{h}_{MMS}\left(\textit{\textbf{x}}\,_{\Gamma_D}\right)
\end{Vmatrix}^2,\\ 
&MSE_{\Gamma_{N}}=\frac{1}{N_{\Gamma_N}}\sum_{i=1}^{N_{\Gamma_N}}\begin{Vmatrix}
\hat{q}_n\left(\textit{\textbf{x}}\,_{\Gamma_N};\theta\right)+K\left(\textit{\textbf{x}}\,_{\Gamma_N}\right)\hat{h}_{MMS,n}\left(\textit{\textbf{x}}\,_{\Gamma_N}\right)
\end{Vmatrix}^2,
\end{aligned}
\end{equation}
where $x\,_\Omega \in {R^N} $, $\theta \in {R^K}$ are the neural network parameters. $L\left(\theta\right)=
0$, $\hat{h}^h \left(\textit{\textbf{x}};\theta\right)$ is then a solution to hydraulic head. Here, the defined loss function measures how well the approximation satisfies the physics law (governing equation), boundaries conditions. Our goal is to find the a set of parameters $\theta$ that the  approximated potential $\hat{h}^h \left(\textit{\textbf{x}};\theta\right)$ minimizes the loss $L$. If $L$ is a very small value, the approximation $\hat{h}^h \left(\textit{\textbf{x}};\theta\right)$ is very closely satisfying governing equations and boundary conditions, namely
\begin{equation}
\hat{h}^h = \mathop{\arg\min}_{\theta \in R^K} L\left(\theta\right).
\end{equation}

The solution of groundwater flow problems by deep collocation method can be reduced to an optimization problem. To train the deep feedforward neural network, the gradient-based optimization algorithms such as Adam is employed. The idea is to take a descent step at collocation point $\textit{\textbf{x}}_{i}$ with Adam-based learning rates $\alpha_i$, 
\begin{equation}
\theta_{i+1} = \theta_{i} + \alpha_i \bigtriangledown_{\theta } L \left ( \textit{\textbf{x}}_i;\theta_i \right ).
\label{eq:Adma}
\end{equation}
And then the process in Equation~\eqref{eq:Adma} is repeated until a convergence criterion is satisfied.

The combined Adam-L-BFGS-B minimization algorithm is used to train the physics-informed neural networks in that adding physics constraints makes it more difficult to train. This strategy consists of training the network first using the Adam algorithm, and after a defined number of iterations, performing the L-BFGS-B optimization of the loss with a small limit of executions.

Further, the approximation ability of neural networks for the potential problems needs to be proved. The approximation power of neural networks for a quasilinear parabolic PDEs has been proved by Sirignano et al. \cite{sirignano2018dgm}. For groundwater flow problems, whose governing equation is an elliptic partial differential equation, the proof can be boiled down to:
\begin{equation}
\exists\;\;\hat{h}^h \in F^n, \;\;s.t. \;\;as\;\;n\rightarrow\infty,\;\;L(\theta)\rightarrow0,\;\;\hat{h}^h\rightarrow\hat{h}.
\end{equation}

The groundwater flow problems has a unique solution, s.t. $\hat{h} \in C^2(\Omega)$ with its derivatives uniformly bounded. Also, the heterogeneous hydraulic conductivity function $K(\textit{\textbf{x}})$ is assumed to be $C^{1,1}$ ($C^1$ with Lipschitz continuous derivative). The smoothness of the K ﬁeld is essentially determined by the correlation of the random ﬁeld $Y'$. According to \cite{cramer2013stationary}, the smoothness conditions are fulfilled if the correlation of $Y'$ has a Gaussian shape and is infinitely differentiable. For the source term, the smoothness of the source term is determined by the constructed manufactured solution $\hat{h}_{MMS}$, in Equations \ref{eq:mms1} and \ref{eq:mms2}, which is obvious continuous and infinitely differentiable $f\in C^\infty(\Omega )$.

\begin{theorem}\label{theorem2}
	With assumption that $\Omega$ is compact and considering measures $\ell_1$, $\ell_2$, and $\ell_3$ whose supports are constrained in $\Omega$, $\Gamma_D$, and $\Gamma_N$. Also, the governing Equation \eqref{eq:darcy with f} subject to~\ref{eq:modified boundary conditions} is assumed to have a unique classical solution and conductivity function $K(\textit{\textbf{x}})$ is assumed to be $C^{1,1}$ ($C^1$ with Lipschitz continuous derivative). Then, $\forall \;\; \varepsilon >0 $, $\exists\;\;  \lambda>0$, which may dependent on $sup_{\Omega}\left \|   \hat{h}_{ii}\right \|$ and $sup_{\Omega}\left \|   \hat{h}_{i}\right \|$, s.t. $\exists\;\;  \hat{h}^h\in F^n$, that satisfies $L(\theta)\leq \lambda\varepsilon$
\end{theorem}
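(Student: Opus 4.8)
The plan is to obtain the statement as a corollary of the universal approximation property recorded in Theorem~\ref{theorem1}, adapting the elliptic argument of Sirignano and Spiliopoulos to the operator $E'$. Since $E'$ is second order, the decisive input is that $F^n$ is \emph{uniformly $2$-dense} in $C^2(\Omega)$: for a smooth, nonconstant, bounded activation the network approximates a $C^2$ target together with its first and second derivatives. The hypotheses supply a unique classical solution $\hat{h}\in C^2(\Omega)$ with uniformly bounded derivatives, so the first step is to fix $\varepsilon>0$ and invoke Theorem~\ref{theorem1} to produce $\hat{h}^h\in F^n$ with
\begin{equation*}
\sup_\Omega\big|\hat{h}^h-\hat{h}\big|+\sup_\Omega\big|\hat{h}^h_{,i}-\hat{h}_{,i}\big|+\sup_\Omega\big|\hat{h}^h_{,ii}-\hat{h}_{,ii}\big|<\varepsilon .
\end{equation*}

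Next I would rewrite each residual as a difference against the exact solution. Because $\hat{h}$ solves Equation~\eqref{eq:darcy with f} exactly, $E'(\hat{h})=K\hat{h}_{,ii}+K_{,i}\hat{h}_{,i}-f=0$ on $\Omega$, so the interior residual collapses to
\begin{equation*}
E'(\bm{x}_\Omega;\theta)=K(\bm{x})\big(\hat{h}^h_{,ii}-\hat{h}_{,ii}\big)+K_{,i}(\bm{x})\big(\hat{h}^h_{,i}-\hat{h}_{,i}\big).
\end{equation*}
Likewise, since $\hat{h}$ matches $\hat{h}_{MMS}$ on $\Gamma_D$ and has normal flux $-K\hat{h}_{MMS,n}$ on $\Gamma_N$, the Dirichlet residual reduces to $\hat{h}^h-\hat{h}$ and the Neumann residual to $-K(\hat{h}^h_{,n}-\hat{h}_{,n})$. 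The $C^{1,1}$ regularity of $K$ bounds $\|K\|_\infty$ and $\|K_{,i}\|_\infty$, so by the triangle inequality every pointwise residual is controlled by a fixed multiple of $\varepsilon$.

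The final step is to integrate the squared residuals against $\ell_1$, $\ell_2$, $\ell_3$. Compactness of $\Omega$ (hence of $\Gamma_D$ and $\Gamma_N$) makes these measures finite, so each of $MSE_{E'}$, $MSE_{\Gamma_D}$, $MSE_{\Gamma_N}$ is bounded by its total mass times the squared pointwise bound. Summing the three contributions and absorbing the masses $\ell_i(\cdot)$, the conductivity norm $\|K\|_{C^1}$, and the solution-regularity constants $\sup_\Omega\|\hat{h}_i\|$, $\sup_\Omega\|\hat{h}_{ii}\|$ into a single $\lambda$ then yields $L(\theta)\le\lambda\varepsilon$ (the squaring only contributes a harmless extra power of the tolerance, dominated once $\varepsilon\le 1$).

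I expect the principal obstacle to be the simultaneous $C^2$ approximation rather than any individual residual estimate: Theorem~\ref{theorem1} must be used in the $m$-dense sense so that $\hat{h}^h_{,ii}$, and not merely $\hat{h}^h$, tracks $\hat{h}_{,ii}$, which requires a nonpolynomial activation in $C^2$ and legitimizes the target only because the Gaussian-correlation smoothness of $K$ together with the $C^\infty$ manufactured source keeps $\hat{h}\in C^2(\Omega)$. A secondary subtlety is the Neumann term, where differentiating the product $K\hat{h}^h$ couples the conductivity gradient to the approximation error; controlling it cleanly is precisely why $\lambda$ inherits a dependence on $\|K_{,i}\|_\infty$ alongside the derivative bounds of $\hat{h}$.
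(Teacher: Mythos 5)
Your proposal follows essentially the same route as the paper's proof: invoke Theorem~\ref{theorem1} for simultaneous uniform approximation of $\hat{h}$ and its first and second derivatives, reduce each residual to a difference against the exact solution, bound $K$ and $K_{,i}$ by constants $M_1$, $M_2$ from the $C^{1,1}$ assumption, and integrate over the finite measures $\ell_1$, $\ell_2$, $\ell_3$ to absorb everything into $\lambda$. If anything, your explicit use of $E'(\hat{h})=0$ to collapse the interior residual to $K(\hat{h}^h_{,ii}-\hat{h}_{,ii})+K_{,i}(\hat{h}^h_{,i}-\hat{h}_{,i})$ before applying the triangle inequality is a cleaner rendering of the step the paper writes somewhat loosely.
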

\begin{proof}
	For governing Equation~\eqref{eq:darcy with f} subject to~\ref{eq:modified boundary conditions}, according to Theorem~\ref{theorem1},
	$\forall$ $\varepsilon\;\;  >0$, $\exists\;\;  \hat{h}^h\;\; \in\;\; F^n$, s.t. 
	\begin{equation}\label{eq:sup}
	\sup_{x\in \Omega}\left \|\hat{h}_{,i}\left(\textit{\textbf{x}}\,_\Omega\right)-  \hat{h}^h_{,i}\left(\textit{\textbf{x}}\,_\Omega\right)\right \|^2+\sup_{x\in \Omega}\left \|\hat{h}_{,ii}\left(\textit{\textbf{x}}\,_\Omega\right)-  \hat{h}^h_{,ii}\left(\textit{\textbf{x}}\,_\Omega\right)\right \|^2<\varepsilon	
	\end{equation}
	
Recalling that the Loss is constructed in the form shown in Equation~\eqref{eq:lossform}, for $MSE_G$, applying triangle inequality, and obtains:
\begin{equation}
\begin{aligned}
\begin{Vmatrix}
G\left(\textit{\textbf{x}}\,_\Omega;\theta\right)
\end{Vmatrix}^2\leqslant\begin{Vmatrix}
K(\bm{x}_\Omega)\hat{h}_{,ii}^{h}\left(\textit{\textbf{x}}\,_\Omega;\theta\right)
\end{Vmatrix}^2+\begin{Vmatrix}
K_{,i}(\bm{x}_\Omega)\hat{h}^{h}_{,i}\left  (\textit{\textbf{x}}\,_\Omega;\theta\right)
\end{Vmatrix}^2+\begin{Vmatrix}
f\left(\textit{\textbf{x}}\,_\Omega\right)
\end{Vmatrix}^2
\end{aligned}
\end{equation}

Also, considering the $C^{1,1}$ conductivity function $K(\textit{\textbf{x}})$, $\exists \;\;M_1>0,\;\;M_2>0$, $\exists \;\; x \in\;\Omega$, $\left \|  K(\textit{\textbf{x}})\right \|\leqslant M_1$, $\left \|  K_{,i}(\textit{\textbf{x}})\right \|\leqslant M_2$. 
From Equation~\eqref{eq:sup}, it can be obtained that:
\begin{equation}\label{eq:boundsup1}
\begin{aligned}
\int_{\Omega}K_{,i}^2(\bm{x}_\Omega)\left( \hat{h}_{,i}^h-\hat{h}_{,i} \right )^2d\ell_1\leqslant M_2^2 \varepsilon^2\ell_1(\Omega) \\
\int_{\Omega}K^2(\bm{x}_\Omega)\left ( \hat{h}_{,ii}^h-\hat{h}_{,ii} \right )^2d\ell_1\leqslant M_1^2 \varepsilon^2\ell_1(\Omega)
\end{aligned}
\end{equation}

On boundaries $\Gamma_{D}$ and $\Gamma_{N}$, 
\begin{equation}\label{eq:boundsup2}
\begin{aligned}
\int_{\Gamma_{D}}\left (\hat{h}^h \left(\textit{\textbf{x}}\,_{\Gamma_D};\theta\right)-\hat{h}\left(\textit{\textbf{x}}\,_{\Gamma_D};\theta\right)\right )^2d\ell_2\leqslant \varepsilon^2\ell_2(\Gamma_{D})\\
\int_{\Gamma_{N}}K^2(\bm{x}_{\Gamma_N})\left (\hat{h}^h_{,n} \left(\textit{\textbf{x}}\,_{\Gamma_N};\theta\right)-\hat{h}_{,n}\left(\textit{\textbf{x}}\,_{\Gamma_N};\theta\right)\right )^2d\ell_3\leqslant M_1^2\varepsilon^2\ell_3(\Gamma_{N})
\end{aligned}
\end{equation}

Therefore, using Equations~\eqref{eq:boundsup1} and~\eqref{eq:boundsup2}, as $n\rightarrow\infty$
\begin{equation}
\begin{aligned}
L\left(\theta\right)=\frac{1}{N_\Omega}\sum_{i=1}^{N_\Omega}\begin{Vmatrix}
K(\bm{x}_\Omega)\hat{h}_{,ii}^{h}\left(\textit{\textbf{x}}\,_\Omega;\theta\right)+K_{,i}(\bm{x}_\Omega)\hat{h}^{h}_{,i}\left  (\textit{\textbf{x}}\,_\Omega;\theta\right)-f\left(\textit{\textbf{x}}\,_\Omega\right)
\end{Vmatrix}^2+\\ 
\frac{1}{N_{\Gamma_D}}\sum_{i=1}^{N_{\Gamma_D}}\begin{Vmatrix}
\hat{h}^h \left(\textit{\textbf{x}}\,_{\Gamma_D};\theta\right)-\hat{h}_{MMS}\left(\textit{\textbf{x}}\,_{\Gamma_D}\right)
\end{Vmatrix}^2+\frac{1}{N_{\Gamma_N}}\sum_{i=1}^{N_{\Gamma_N}}\begin{Vmatrix}
-K(\bm{x}_{\Gamma_N})\frac{\partial \hat{h}\left(\textit{\textbf{x}}_{\Gamma_N};\theta\right)}{\partial n}+K(\bm{x}_{\Gamma_N})\frac{\partial \hat{h}{MMS}}{\partial n}
\end{Vmatrix}^2 \\
\leqslant \frac{1}{N_\Omega}\sum_{i=1}^{N_\Omega}\begin{Vmatrix}
K(\bm{x}_\Omega)\hat{h}_{,ii}^{h}\left(\textit{\textbf{x}}\,_\Omega;\theta\right)
\end{Vmatrix}^2+\frac{1}{N_\Omega}\sum_{i=1}^{N_\Omega}\begin{Vmatrix}
K_{,i}(\bm{x}_\Omega)\hat{h}^{h}_{,i}\left  (\textit{\textbf{x}}\,_\Omega;\theta\right)
\end{Vmatrix}^2\\ +\frac{1}{N_\Omega}\sum_{i=1}^{N_\Omega}\begin{Vmatrix}
f\left(\textit{\textbf{x}}\,_\Omega\right)
\end{Vmatrix}^2+\frac{1}{N_{\Gamma_D}}\sum_{i=1}^{N_{\Gamma_D}}\begin{Vmatrix}
\hat{h}^h \left(\textit{\textbf{x}}\,_{\Gamma_D};\theta\right)-\hat{h}_{MMS}\left(\textit{\textbf{x}}\,_{\Gamma_D}\right)
\end{Vmatrix}^2+\\
\frac{1}{N_{\Gamma_N}}\sum_{i=1}^{N_{\Gamma_N}}\begin{Vmatrix}
-K(\bm{x}_{\Gamma_N})\frac{\partial \hat{h}\left(\textit{\textbf{x}}_{\Gamma_N};\theta\right)}{\partial n}+K(\bm{x}_{\Gamma_N})\frac{\partial \hat{h}{MMS}}{\partial n}
\end{Vmatrix}^2 \\
\leqslant (M_2^2+M_1^2+1)\varepsilon^2\ell_1(\Omega)+\varepsilon^2\ell_2(\Gamma_{D})+M_1^2\varepsilon^2\ell_3(\Gamma_{N})=K\varepsilon
\end{aligned}
\end{equation}
\end{proof}

With the hold of Theorem~\ref{theorem2} and conditions that $\Omega$ is a bounded open subset of R, $\forall n\in N_+$, $\hat{h}^h\in \;F^n \;\in L^2(\Omega)$, it can be concluded from Sirignano et al. \cite{sirignano2018dgm} that:
\begin{theorem}\label{theorem3}
	$\forall \;p<2$, $\hat{h}^h\in \;F^n$ converges to $\hat{h}$ strongly in $L^p(\Omega)$ as $n\rightarrow \infty$ with $\hat{h}$ being the unique solution to the potential problems.
\end{theorem}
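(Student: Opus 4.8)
The plan is to reproduce, in the present linear elliptic setting, the compactness-and-uniqueness argument of Sirignano and Spiliopoulos \cite{sirignano2018dgm}. The starting point is Theorem~\ref{theorem2}: since the loss $L(\theta)$ can be driven below any prescribed positive threshold, for each $n$ I would select $\hat{h}^h\in F^n$ realising a loss $L(\theta_n)\to 0$ and relabel it $\hat{h}^n$. Reading off the three terms of Equation~\eqref{eq:lossform}, this produces a sequence whose interior residual $K\hat{h}^n_{,ii}+K_{,i}\hat{h}^n_{,i}-f$ tends to zero in $L^2(\Omega,\ell_1)$, whose Dirichlet trace error $\hat{h}^n-\hat{h}_{MMS}$ tends to zero in $L^2(\Gamma_D,\ell_2)$, and whose Neumann flux error tends to zero in $L^2(\Gamma_N,\ell_3)$. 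The goal is then to show that \emph{any} such sequence must converge to the unique classical solution $\hat{h}$, so that the statement is about the stability of the loss functional and not merely about the approximating power of $F^n$.

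The central step is an a priori bound making $\{\hat{h}^n\}$ precompact. Writing $u^n=\hat{h}^n-\hat{h}$, the difference satisfies $\nabla\!\cdot\!(K\nabla u^n)=R^n$ with $R^n\to 0$ in $L^2(\Omega)$, together with $u^n|_{\Gamma_D}\to 0$ and $K\,\partial_n u^n|_{\Gamma_N}\to 0$ in the respective boundary norms. Testing this identity against $u^n$ and integrating by parts, the uniform ellipticity of $K$ (bounded below because $K=e^{Y}>0$ is continuous on the compact $\Omega$, and above by the constant $M_1$ of Theorem~\ref{theorem2}) supplies a coercive term $c\,\|\nabla u^n\|_{L^2}^2$, while the right-hand side collects $\int_\Omega R^n u^n$ and boundary integrals over $\Gamma_D$ and $\Gamma_N$. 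The $\Gamma_N$ contribution is routine: pairing the small flux error with a trace estimate $\|u^n\|_{L^2(\partial\Omega)}\le C\|u^n\|_{H^1(\Omega)}$ and applying Young's inequality absorbs it into the coercive term. The $\Gamma_D$ contribution is the delicate one addressed below; granting its control, a Poincar\'e--Friedrichs inequality with boundary remainder anchored on the asymptotically correct Dirichlet data yields a uniform bound $\|\hat{h}^n\|_{H^1(\Omega)}\le C$.

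With the uniform $H^1$ bound in hand, compactness closes the argument. Because $\Omega$ is compact and $H^1(\Omega)$ is reflexive, a subsequence satisfies $\hat{h}^{n_k}\rightharpoonup \hat{h}^{\ast}$ weakly in $H^1$; by the Rellich--Kondrachov theorem the embedding $H^1(\Omega)\hookrightarrow L^p(\Omega)$ is compact for every $p$ below the Sobolev conjugate exponent, hence in particular for every $p<2$, giving strong convergence $\hat{h}^{n_k}\to \hat{h}^{\ast}$ in $L^p(\Omega)$. Passing to the limit in the variational formulation---legitimate since the operator is linear in $\hat{h}$ and both the residual and the boundary data vanish in the limit---shows that $\hat{h}^{\ast}$ is a weak solution of Equation~\eqref{eq:darcy with f} subject to~\ref{eq:modified boundary conditions}. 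The assumed uniqueness then identifies $\hat{h}^{\ast}=\hat{h}$, and since every subsequence admits a further subsequence with this same limit, the full sequence converges: $\hat{h}^n\to\hat{h}$ strongly in $L^p(\Omega)$ for all $p<2$.

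The step I expect to be the main obstacle is the a priori $H^1$ estimate, precisely because the networks enforce the boundary conditions only weakly, through the $L^2$ penalties in Equation~\eqref{eq:lossform}. Integration by parts leaves the boundary integral $\int_{\Gamma_D}K\,(\partial_n u^n)\,u^n$, in which the factor $u^n$ is small but the normal derivative $\partial_n u^n$ is \emph{not} controlled by the $L^2$ Dirichlet loss alone, and standard elliptic regularity does not apply because it would require boundary data in $H^{3/2}$ rather than merely $L^2$. Closing the estimate therefore requires either invoking the uniform bounds on the network derivatives available from the $C^2$-dense approximation of Theorem~\ref{theorem1} (as exploited in Equation~\eqref{eq:sup}), or upgrading the boundary control via a trace/elliptic-regularity argument in the spirit of \cite{sirignano2018dgm}. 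Once this bound is secured the remaining compactness and uniqueness steps are standard, and the strict exponent $p<2$ is exactly the gap between the compact and the borderline Sobolev embedding.
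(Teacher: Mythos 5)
The paper offers no argument for this statement at all: Theorem~\ref{theorem3} is asserted as a direct consequence of Theorem~\ref{theorem2} together with a citation of Sirignano et al.\ \cite{sirignano2018dgm}, so your attempt to actually reconstruct the compactness-and-uniqueness argument goes well beyond what the paper records. Judged on its own terms, however, the reconstruction is incomplete at exactly the step you flag yourself. You adopt the ``stability'' reading of the theorem --- \emph{any} sequence with $L(\theta_n)\to 0$ converges to $\hat{h}$ --- and for that reading the uniform $H^1$ bound is not available from what you have. Testing $\nabla\!\cdot\!(K\nabla u^n)=R^n$ against $u^n$ leaves the term $\int_{\Gamma_D}K\,(\partial_n u^n)\,u^n$, where the loss \eqref{eq:lossform} controls only $\|u^n\|_{L^2(\Gamma_D)}$ and gives no bound whatsoever on $\partial_n u^n$ on $\Gamma_D$; neither of the two escape routes you mention closes this. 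The uniform derivative bounds of Equation~\eqref{eq:sup} are attached to the \emph{particular} approximants produced by Theorem~\ref{theorem1}, not to an arbitrary minimizing sequence, so invoking them silently downgrades the claim to the ``approximating power'' reading (in which case the theorem becomes nearly trivial, since \eqref{eq:sup} already gives uniform $C^1$ convergence); and elliptic regularity cannot upgrade $L^2(\Gamma_D)$ data to control of the conormal derivative. In \cite{sirignano2018dgm} this is handled by imposing additional uniform boundedness/growth hypotheses on the network class and by appealing to compactness results for equations with data in weak norms; some such hypothesis must be added here, and as written your proof does not supply it.

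A second, related problem is your explanation of the exponent restriction. If the uniform $H^1$ bound were secured, Rellich--Kondrachov would give compact embedding into $L^p$ for all $p<2^\ast$, and since $2<2^\ast$ in dimensions $1$, $2$ and $3$ you would obtain strong convergence in $L^2$ itself --- strictly more than the theorem claims --- so ``the gap between the compact and the borderline Sobolev embedding'' cannot be where $p<2$ comes from. In the argument of \cite{sirignano2018dgm} the restriction arises differently: one only establishes a uniform $L^2(\Omega)$ bound on the approximants together with convergence in measure (or almost everywhere along a subsequence), and then the family $\{|\hat{h}^n-\hat{h}|^p\}$ is uniformly integrable precisely for $p<2$, so Vitali's convergence theorem yields strong $L^p$ convergence for $p<2$ and nothing more. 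Rebuilding the proof around that mechanism --- $L^2$ bound, convergence in measure, Vitali --- rather than around an $H^1$/Rellich compactness argument would both match the stated conclusion and avoid the unclosable boundary term.
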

In summary, for feedforward neural networks $F^n \in L^p$ space ($p<2$), the approximated solution $\hat{h}^h\in F^n$  will converge to the solution to this PDE. This will justify the application of physics-informed and data-driven deep learning method in solving groundwater flow problems.

\subsection{Sensitivity analyses(SA)}
\label{subsection 3.3:SA}
Sensitivity analysis is also a very useful tool in the model parameter calibration process, which aims to determine which aspects of the model is easiest to introduce uncertainty into the system description. Sensitivity analysis can be used to determine the influence of each parameter of the model on the output, and those that are most important to consider in the model calibration process. Parameters that have a large impact on the output can be disregarded if they have little or no effect on the model results. This will significantly reduce the workload of model calibration \cite{gardner1981comparison,henderson1996sensitivity,majkowski1981multiplicative}. 

At this stage, global sensitivity analysis of parameters in hydrological models has gradually become a research hotspot. In this work, the  parameter sensitivity analysis experiment contributes to the whole NAS model by offering prior knowledge of the DCM model, which helps to reduce dimensions of the search space and further improve the computational efficiency for optimization method.

 Global sensitivity analysis methods can be subdivided into qualitative, such as Morris method \cite{morris1991factorial}, Fourier amplitude sensitivity test (FAST) \cite{mcrae1982global}, and quantitative analysis methods, such as sobol method \cite{nossent2011sobol}, extend FAST \cite{zhang2012sensitivity}. Scholars have conducted numerous experiments to compare the advantages and disadvantages between different methods \cite{wang2019practical,herman2013method,brevault2013comparison}. The results shows, that Sobol' method can provide quantitative results for SA, but it requires a large number of runs to obtain stable results. eFAST is more efficient and stable than Sobol' method, and can therefore be seen as a good alternative to the Sobol method. The method of Morris is able to correctly screen the most and least sensitive parameters for a highly parameterized model with 300 times fewer model evaluations than the Sobol' method. Therefore, we will take the same approach as Crosetto \cite{crosetto2001uncertainty} did, i.e., first test all the hyper-parameters using the Morris method, remove the two most and least influential parameters, then filter them again, but with the eFAST method. In this way, we can get the highest accuracy in a relatively small amount of time.

\subsubsection{Morris method}
\label{subsubsection 3.3.1:morris}
Morris method \cite{morris1991factorial} is proposed by Morris in 1991. It works as follows: select a variable $X$ from the model parameters, and also specify an objective function $y(x)=f(X_1,X_2,...X_n)$, change these variables $X_i$ by specific ranges, calculate the error $e_i$ to discern the extent to which parameter changes affect the output function. Taking $n$ elements as input parameters to the model, the sensitivity of $n$ parameters can be obtained. It's very effective in calculations. 
\begin{equation}\label{eq:morris}
EE_i=\frac{f(x_1,...,x_i+\Delta_i,...,x_n)-f(x)}{\Delta_i}
\end{equation}
where $f(x)$ represents the prior point in the trajectory. Using the single trajectories shown in Equation~\eqref{eq:morris}, the basic effect of each parameter can be calculated with only $p+1$ model evaluations. After sampling the trajectories, the resulting set of basic effects are then averaged to obtain total-order sensitivity of the $i$-th parameter $\mu_{i}^{*}$,
\begin{equation}\label{eq:mu}
\mu_{i}^{*}=\frac{1}{n}\sum_{j=1}^{n}\left| EE_{i}^{j}\right|
\end{equation}
and its variance can be calculated by,
\begin{equation}\label{eq:sigma}
\sigma_{i}^{2}=\frac{1}{n-1}\sum_{j=1}^{n}( EE_{i}^{j}-\mu_i)^2
\end{equation}

Both the mean $\mu^*$ and the variance $\sigma^2$ are important to the sensitivity, so we consider $\sqrt{\sigma^2+{\mu^*}^2}$ as the criteria for judging.

\subsubsection{eFast method}
\label{subsubsection 3.3.2;eFAST}
The eFAST method \cite{saltelli1999quantitative} obtains the spectrum of the Fourier series by Fourier transformations, through which the spectrum curve is obtained by each parameter and the parameter's Variance of model results due to interactions. According to a suitable search function, the model $y(x)=f(X_1,X_2,...X_n)$ can be transformed by the Fourier transform into $y= f(s)$ 
\begin{equation}\label{eq:FT}
y=f(s)=\sum_{j=-\infty}^{+\infty}\big (A_j cos(js)+B_j sin(js)\big ),
\end{equation}
with,
\begin{equation}\label{eq:FT1}
A_j=\frac{\pi}{2}\int_{\frac{\pi}{2}}^{-\frac{\pi}{2}}f(s)cos(js)\mathrm{d}s,
\end{equation}
\begin{equation}\label{eq:FT2}
B_j=\frac{\pi}{2}\int_{\frac{\pi}{2}}^{-\frac{\pi}{2}}f(s)sin(js)\mathrm{d}s.
\end{equation}

The spectral curve of the Fourier progression is defined as$\Lambda_j=A_j^2+B_j^2$,the variance of the model results due to the uncertainty in the parameter $X_i$ is
\begin{equation}\label{eq:D}
D_i=\sum_{p\in Z_0}\Lambda_p\omega_i,
\end{equation}
with $\omega_1$, parametric frequency, $\Lambda$, Spectrum of Fourier transforms, $Z_0$, Non-zero integers.

The total variance can be obtained by cumulatively summing the spectra at all frequencies,
\begin{equation}\label{eq:Dsum}
D=2\sum_{j=1}^{\infty}\Lambda_j.
\end{equation}

The sensitivity of the parameters to the output is,
\begin{equation}\label{eq:S}
S_{i}^{FAST}=\frac{D_i}{D}.
\end{equation}

When finding the total sensitivity of $X_i$, the frequency of $X_i$ is set to $\omega_i$, while a different frequency $\omega'$ is set for all other parameters. By calculating the frequency $\omega_i$ and its higher resonance $p\omega_i$ spectra, the output variance $D_{-i}$ due to the influence of all parameters except $X_i$ and their interrelationships can be obtained.Thus,
\begin{equation}\label{eq:Ssum}
S_{T_{i}}=\frac{D-D_{-i}}{D}.
\end{equation}

\subsubsection{SALib}
\label{subsubsection 3.3.3:SALib}
SALib \cite{herman2017salib} is a sensitivity analysis library in Python, which provides several sensitivity analysis methods, developed by Jon Herman and his coworkers, such as Sobol, Morris, and FAST. One can easily use it to perform sensitivity analysis on a model that has built already. And it comes also with a sensitivity result visualization feature that helps us to present our results in a good way.

\subsection{Search methods for NNs}
\label{subsection 3.4:operator}
After the sensitivity analysis screening, hyperparameter optimization is tapped to find the detailed most suitable parameters for the physics-informed neural architectures. Hyper-parameter optimization is a combinatorial optimization problem and cannot be simply optimized by the gradient descent methods which may incorporate general parameters. The time cost in evaluating a set of hyperparameter configurations is usually very high, especially when the model is complex with many parameters. So finding an appropriate algorithm is crucial. There are a lot of optimization methods available these days. In this application, the classical randomization search method and Bayesian optimization method and some recent proposed optimization method, Hyperband algorithm and Jaya algorithm, with the latter a branch of heuristic learning method. 

\subsubsection{Randomization search method(RSM)}
\label{subsubsection 3.4.1:RSM}
Randomization search method is one of the most basic algorithms, simply draw the hyper-parameters from the search space in random combinations, to find a configuration that performs best. The algorithm itself is very simple and does not make use of correlations between different combinations of hyperparameters. If there is a narrow cost-minimum region close to the boundary, then any solutions close to it may be excluded because those solutions' costs are all very high, so we have almost no probability of getting such a global minimum pathway. This is a common flaw of randomization algorithms and there is no good way to solve it. 

\subsubsection{Bayesian optimization}
\label{subsubsection 3.4.2:BO}
Bayesian optimization is an adaptive hyper-parametric search method that predicts the next combination that is likely to bring the most benefit based on the currently tested hyper-parametric combinations \cite{snoek2012practical}. Assuming that the function $f(x)$ of hyperparameter optimization obeys the Gaussian process, then $p\big (f(x)\mid x\big )$ is a normal distribution. The Bayesian optimization process is modeled as a Gaussian process based on the results of existing $N$ group experiments, $H=\left\lbrace x_n,y_n\right\rbrace _{n=1}^{N}$, and calculate the posterior distribution $p\big (f(x)\mid x,H\big )$ of $f(x)$.

After obtaining the posterior distribution of the objective function, an acquisition function $a(x,H)$ is defined to trade off in sampling where the model predicts a high objective and sampling at locations where the prediction uncertainty is high. The goal is left to maximize the acquisition function to determine the next sampling point. Assuming $y^*=\min{y_n,1\le n\le N}$ is the optimal value in the currently existing sample, the desired improvement function is,
\begin{equation}\label{eq:EI}
EI(x,H)=\int_{-\infty}^{\infty}max(y^*-y,0)p(y\mid x,H)\mathrm{d}y.
\end{equation}

The algorithm works as follows:
\begin{table}[H]
	\captionsetup{width=0.85\columnwidth}
	\caption{\textbf {Bayesian optimization algorithm}} 
	\vspace{-0.3cm}
	\centering
	\resizebox{0.85\columnwidth}{!}{%
		\begin{tabular}{l} 
			\toprule 
			\quad \textbf {input : $f(x)$, $T$,$a(x,H)$}\\ 
			1 \textbf{$H\gets\theta$};\\
			2 \textbf{Random initialization of Gaussian processes, calculate $p\big (f(x)\mid x,H\big )$};\\
			3 \textbf{For} $t\gets 1$ \textbf{to} $T$ \textbf{do}\\
			4 \quad $x^{'}\gets arg max_x a(x,H)$;\\
			5 \quad evaluate $y^{'}=f(x^{'})$;\\
			6 \quad $H\gets H \bigcup (x^{'},y^{'})$;\\
			7 \quad Remodeling Gaussian processes according to $H$, calculate $p\big (f(x)\mid x,H\big )$;\\
			8 \textbf{end} ;\\
			\quad \textbf{output: $H$}\\
			\bottomrule 
		\end{tabular}
	}
	\label{tab:bo code} 
\end{table}

\subsubsection{Hyperband algorithm}
\label{sunsubsection 3.4.3:hb}
Hyperband \cite{li2017hyperband} is an algorithm derived from the SuccessiveHalving algorithm \cite{jamieson2016non}. The core idea of this algorithm is that, assume that there are $n$ sets of hyperparameter combinations, then uniformly allocate budgets to these $n$ sets of hyperparameters and perform a validation evaluation, eliminate half of the poorly performing hyperparameter sets based on the validation results, and then iterate the above process until an optimal hyperparameter combination is found. The disadvantage of this algorithm is that it is not possible to control the ratio between the budget and the hyperparameters, which leads to the fact that when either side is too large, the predictions will not be very good. Hyberband, on the other hand, offers a way of weighing the two. The specific algorithm is shown in Table~\ref{tab:Hyperband code}:
\begin{table}[H]
	\captionsetup{width=0.85\columnwidth}
	\caption{\textbf {Hyperband ALgorithm}} 
	\vspace{-0.3cm}
	\centering
	\resizebox{0.75\columnwidth}{!}{%
		\begin{tabular}{l} 
			\toprule 
			\quad \textbf {input : R, $\eta$}\\ 
			1  \textbf {initialization}: ${ s }_{ max }=\left\lfloor { log }_{ \eta  }R \right\rfloor $ and $B=\left( { s }_{ max }+1 \right) R$;\\
			2 \quad \textbf {for} $s\in \left\{ { s }_{ max }\quad ,\quad { s }_{ max }-1,...,\quad 0 \right\}$ \quad \textbf {do}\\
			3 \quad \quad \quad  $n=\left\lceil \frac { B }{ R } \frac { { \eta  }^{ s } }{ s+1 }  \right\rceil $,  $r=R{ \eta  }^{ -s }$;\\
			4 \quad \quad \quad  ${ \Lambda  }_{ s }$= get$\_$hyperparameter$\_ $configuration(n);\\
			5 \quad \quad \quad  \textbf {for}  $i\in \left\{ 0,...,s \right\}$ \textbf {do}\\
			6 \quad \quad \quad \quad \quad  ${ n }_{ i }=\left\lfloor n{ \eta  }^{ -i } \right\rfloor $,   ${ r }_{ i }=r{ \eta  }^{ i }$;\\
			7 \quad \quad \quad \quad \quad  $\L \left( { \Lambda  }_{ s } \right) =\left\{ run\_ then\_ return\_ val\_ loss(\lambda ,\quad { r }_{ i })|\lambda \in { \Lambda  }_{ s } \right\} $;\\
			8 \quad \quad \quad \quad \quad ${ \Lambda  }_{ s }=top\_ k\left( { \Lambda  }_{ s },\quad \L \left( { \Lambda  }_{ s } \right) ,\quad \left\lfloor \frac { { n }_{ i } }{ \eta  }  \right\rfloor  \right)$ ;\\
			9 \quad \quad \quad \quad \textbf {end};\\
			10 \quad \textbf {end};\\
			11 \quad \textbf {output:} configuration $\lambda$ with lowest validation loss seen so far;\\
			\bottomrule 
		\end{tabular}
	}
	\label{tab:Hyperband code} 
\end{table}

\subsubsection{Jaya algorithm}
\label{sunsubsection 3.4.4:jaya}
The Jaya algorithm is easy to implement and does not require adjustment of any algorithm-related parameters. During each iteration, the solution is randomly updated with the following equation:
\begin{equation}\label{eq:jaya}
A(i+1,j,k)=A(i,j,k)+r(i,j,1)\big (A(i,j,b)-\left|A(i,j,k)\right| )-r(i,j,2)(A(i,j,\omega)-\left| A(i,j,k)\right|\big ),
\end{equation}
$r(i,j,1)$ and $r(i,j,1)$ are a random number taken from the range $[0,1]$, to make sure good diversity of algorithms. The main goal of the Jaya algorithm is to improve the adaptation of each candidate solution in the population. Thus, the Jaya algorithm attempts to move the objective function value of each solution towards the best solution by updating the values of the variables. Once the values of the variables are updated, the updated solution is compared to the corresponding old solution, and the next generation considers only the good solution. In the Jaya algorithm, each generation of solutions is close to the best solution, while the candidate solutions move away from the worst solution. Thus, good concentration and diversity are achieved in the search process.

With the Jaya algorithm, the objective function value gradually approaches the optimal solution by updating the value of the variable. In the process, the fitness of each candidate solution in the population is improved. The process of the Jaya algorithm is presented in following flowchart \cite{rao2019jaya}.

\begin{figure}[H]
	\captionsetup{width=0.9\columnwidth}
	\includegraphics[height=9cm]{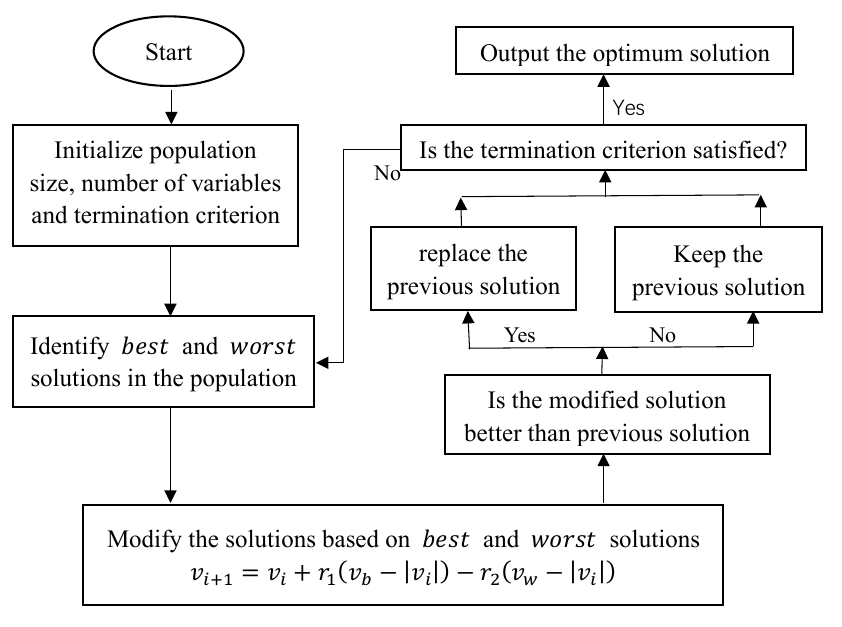}
	\centering
	\caption{Flowchart of Jaya algorithm}
	\label{fig:jayaflowchart}
\end{figure}

The performance of Jaya algorithm is reflected by the minimum function estimation.

\subsection{Transfer learning (TL)}
\label{subsection 3.5:TL}
It is introduced in the previous section that, for the physics-informed neural networks, since physical constraints are added to the loss function, the model becomes difficult to train. The combined optimizer is adopted for the model training. To improve the computational efficiency and inherit the learnt knowledge from the trained model, transfer learning algorithm is added to the entire model. Transfer learning is a research method in the field of machine learning. It focuses on storing knowledge gained while solving one problem and applying it to a different but related problem. The basic architecture of Transfer learning method of this Model is shown in Figure~\ref{fig:TL}. It is mainly composed of a Pre-train model and several Fine-tune models. For Pre-train model, during the neural architecture procedure, the optimum neural architecture configuration is obtained through a hyperparameter optimization algorithm and meanwhile saving the corresponding weights and biases. Then the weights and biases are transferred for fine-tuning model. It has been proven in the numerical example section that this inheritance method can greatly shorten the time of program calculation and improve learning efficiency. With transfer learning, for different statistical parameters involved in the random log-hydraulic conductivity ﬁeld, there is no need to train the whole model from the scratch and the solution to the modified Darcy equation can be easily yielded in Equation \eqref{eq:darcy with f} with less iteration, lower learning rate and even more desirable accuracy. 

\begin{figure}[H]
	\captionsetup{width=0.9\columnwidth}
	\includegraphics[height=9cm]{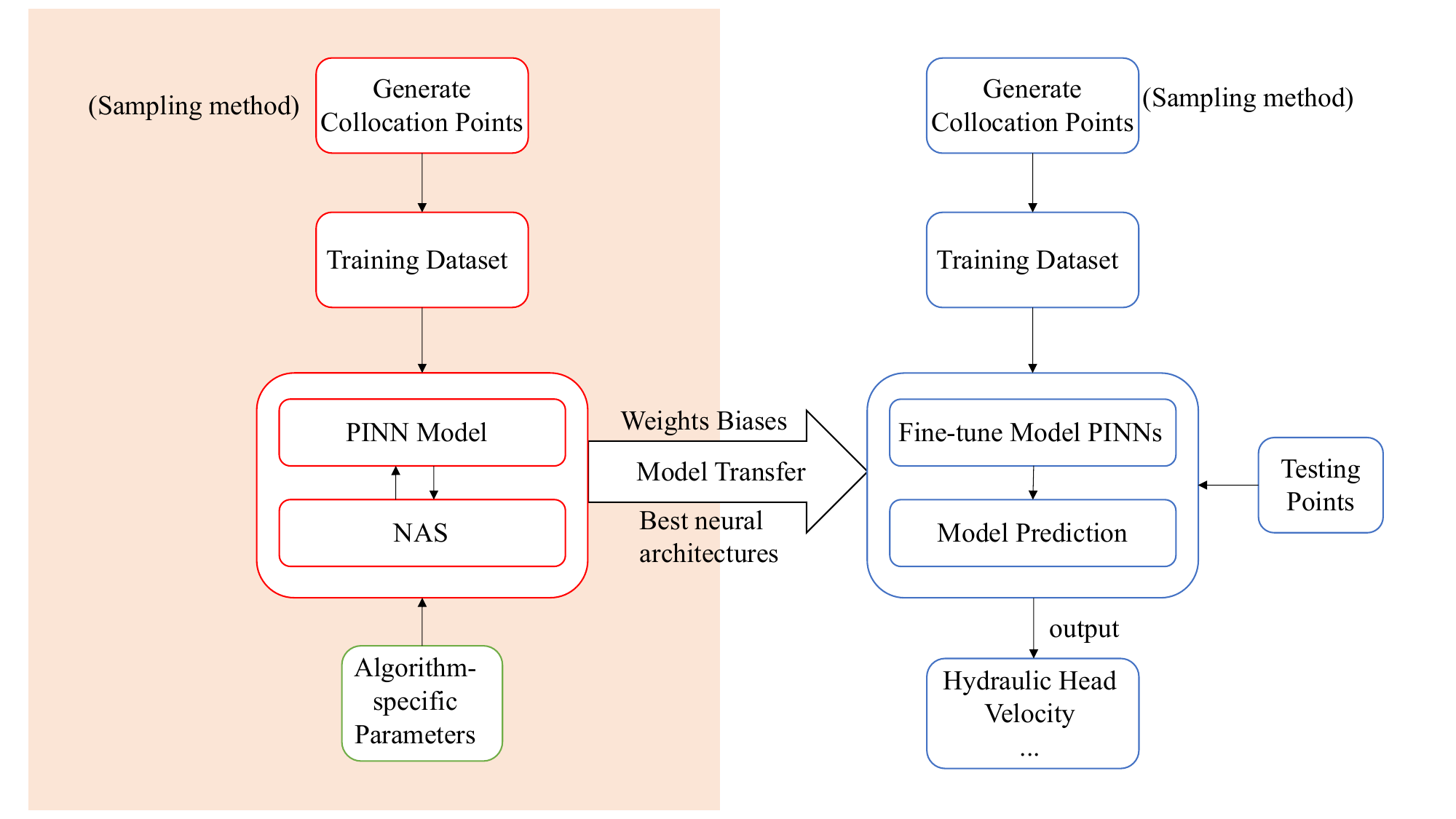}
	\centering
	\caption{Transfer Learning Schematic}
	\label{fig:TL}
\end{figure}

\section{Numerical examples}
\label{section 4:numerical examples}
In this section, numerical examples in different dimensions and boundary conditions are studied and compared. Firstly, the choice of exponential and Gaussian correlation functions is discussed, and after experimentally arriving at the optimal choice, conduct our numerical experiments with the heterogeneous hydraulic conductivity field constructed. Next, we filter the algorithm-specific parameters by means of sensitivity analysis and select the parameters that have the greatest impact on the model as our search space. Then four different hyperparameter optimization algorithms are compared in both accuracy and efficiency in hope to find a trade-off search method for the NAS model. The relative error in Equation \eqref{eq:Estimation Strategy} between the predicted results and the manufactured solution are obtained to built the search strategy for the NAS model. The results of the above selection from the proposed NAS based model are then substituted into the already built PINNs, and we can start solving the groundwater flow problem. For comparison, we use a finite difference method to fit the partial differential equations under the same conditions as the PINNs model. The simulations are done on a 64-bit Windows 10 server with Intel(R) Core(TM) i7-7700HQ CPU, 8GB memory. The accuracy of the numerical results by using the relative error of the hydraulic head. The relative error is defined as:
\begin{equation}\label{eq:error}
\delta \hat{h}=\frac{\| \hat{h}_{predict}-\hat{h}_{MMS}\|}{\|\hat{h}_{MMS}\|}
\end{equation} 
Here, $\|\cdot\|$ refers to the $l^2-norm$.

\subsection{Comparison of Gaussian and exponential correlations}
\label{subsection 4.1:comparison of gauss and exp}
Before we proceed to the next step of our analysis, we first compare the two correlation coefficients, Gaussian and exponential. Those two are the most widely used correlations for random field generation, it is extremely significant to figure out the most suitable one for neural network approximators. We calculated the results obtained with these two correlation coefficients for the one-dimensional (1D), two-dimensional (2D), and three-dimensional (3D) stochastic groundwater flow cases, respectively, with the same choice of parameters. The number of hidden layers and the neurons per layer are uniformly set to 6 and 16. Meanwhile, the model with transfer learning (TL) and without transfer learning are compared accordingly. The results are shown below:
\subsubsection{One dimensional groundwater flow with both correlations}
\label{subsection 4.1.1:1D comparison of gauss and exp}
The non-homogeneous 1D flow problem for Darcy equation can be reduced to solve Equation \eqref{eq:darcy with f} subjected to Equation \eqref{eq:b1}. The hydraulic conductivity $K$ is constructed from Equation \eqref{eq:u} by Radom spectral method as Equation \eqref{eq:k1}. The source term $f$ adopting manufactured solution Equation \eqref{eq:u1} can thus be obtained as Equation \eqref{eq:f11}. The detailed derivation can be retrieved from \ref{appendix b}. 

The relative errors $\delta \hat{h}$ of the predicted hydraulic head results for exponential and Gaussian correlation of the $ln(K)$ ﬁeld are shown in Tables~\ref{tab:Table3} and \ref{tab:Table4}. For the presented benchmark, it is obvious the Gaussian correlation is much more accurate for all $N$ and $\sigma^2$. With transfer learning model, the accuracy even improves. The predicted hydraulic head and velocity and the manufactured solution for both exponential and Gaussian correlations with $\sigma^2=0.1$ and $N=2000$ are shown in Figure \ref{fig:1Dtest}. It is clear the predicted results with proposed deep collocation method nearly coincides with the manufactured solution \eqref{eq:u1} in the 1D domain.

\begin{table}[H]  
	\captionsetup{width=0.85\columnwidth}
	\caption{$\delta \hat{h}$ for 1D case computed with exponential correlation in different variance and number of modes}
	\vspace{-0.3cm}
	\centering
	\resizebox{0.8\columnwidth}{!}{
		\begin{tabular}{l|c|c|c|c|c|c} 
			\toprule 
			\toprule 
			\multirow{2}*{\diagbox{$N$}{$\sigma^2$}}&\multicolumn{2}{c|}{0.1}&\multicolumn{2}{c|}{1}&\multicolumn{2}{c}{3}\\
			\cline{2-7}
			~ &without TL&with TL&without TL&with TL&without TL&with TL\\  
			\midrule
			500&1.184e-3&1.797e-4&1.100e-2&4.884e-4&1.159e-1&5.360e-4\\ 
			\midrule
			1000&2.437e-2&2.354e-4&9.026e-3&5.282e-4&3.752e-2&1.754e-3\\ 
			\midrule
			2000&5.789e-4&1.007e-4&3.813e-3&5.939e-4&3.532e-2&4.316e-3\\ 
			\bottomrule
		\end{tabular}
	}
	\label{tab:Table3}
\end{table}

\begin{table}[H]  
	\captionsetup{width=0.85\columnwidth}
	\caption{$\delta \hat{h}$ for 1D case computed with Gaussian correlation in different variance and number of modes for one dimensional case}
	\vspace{-0.3cm}
	\centering
	\resizebox{0.8\columnwidth}{!}{
		\begin{tabular}{l|c|c|c|c|c|c} 
			\toprule 
			\toprule 
			\multirow{2}*{\diagbox{$N$}{$\sigma^2$}}&\multicolumn{2}{c|}{0.1}&\multicolumn{2}{c|}{1}&\multicolumn{2}{c}{3}\\
			\cline{2-7}
			~ &without TL&with TL&without TL&with TL&without TL&with TL\\ 
			\midrule
			500&1.211e-4&1.137e-4&9.065e-4&1.204e-4&7.539e-3&1.690e-4\\
			\midrule
			1000&1.317e-4&1.133e-4&8.312e-4&1.200e-4&2.864e-3&3.662e-4\\
			\midrule
			2000&1.158e-4&1.333e-4&2.811e-4&1.538e-4&2.904e-3&5.756e-4\\
			\bottomrule
		\end{tabular}
	}
	\label{tab:Table4}
\end{table}

\begin{figure}[H]
	\captionsetup{width=0.85\columnwidth}
	\centering
	\begin{subfigure}[b]{6.0cm}
		\centering\includegraphics[height=6cm,width=6.0cm]{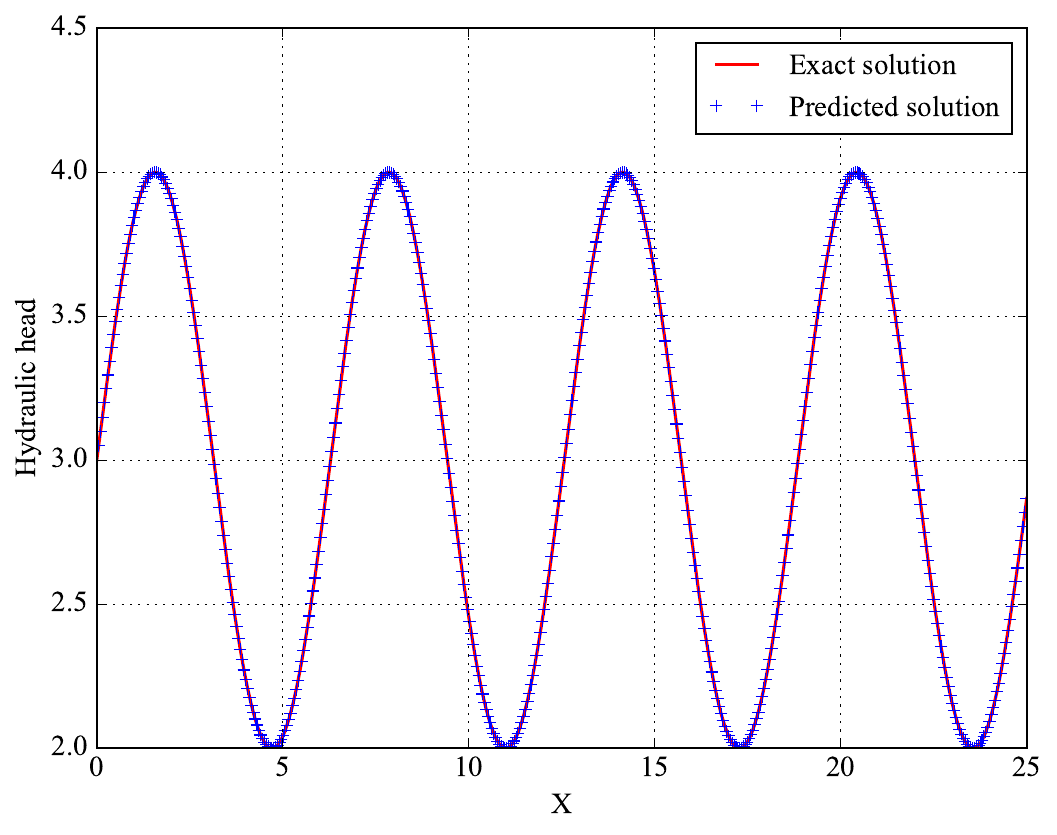}   
		\caption{}
	\end{subfigure}%
	\hspace{0.5cm}
	\begin{subfigure}[b]{6.0cm}
		\centering\includegraphics[height=6cm,width=6.0cm]{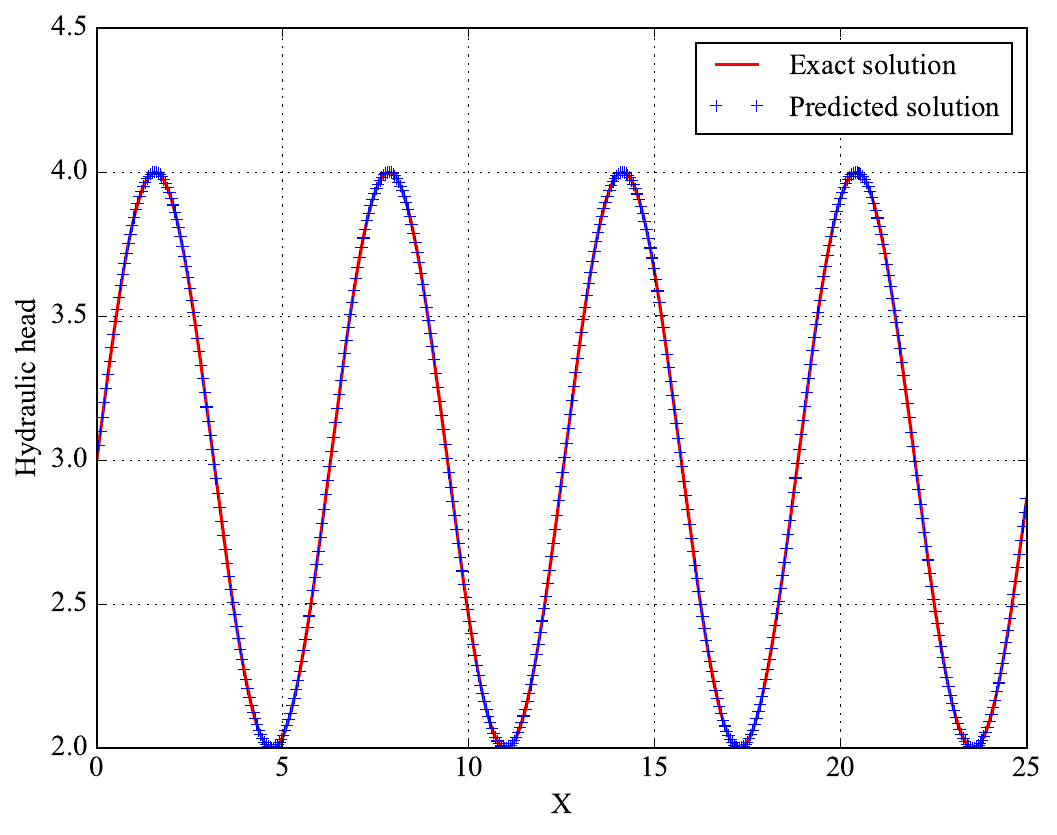}   
		\caption{}
	\end{subfigure}%
	\caption{One dimensional hydraulic head when $\sigma^2=0.1$, $N=2000$ with $\left(a\right)$ exponential correlation and $\left(b\right)$ Gaussian correlation}
	\label{fig:1Dtest}
\end{figure}

\begin{figure}[H]
	\captionsetup{width=0.85\columnwidth}
	\centering
	\begin{subfigure}[b]{6.0cm}
		\centering\includegraphics[height=6cm,width=6.0cm]{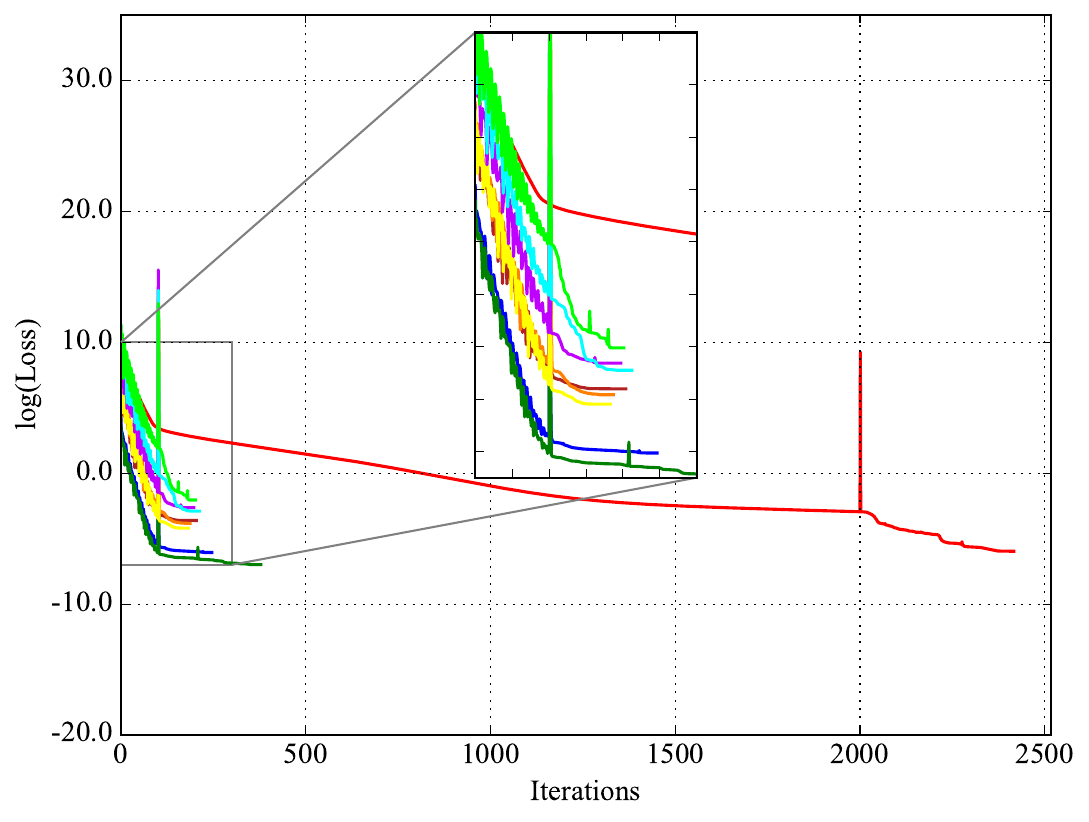}   
		\caption{}
	\end{subfigure}%
	\hspace{0.5cm}
	\begin{subfigure}[b]{6.0cm}
		\centering\includegraphics[height=6cm,width=7.0cm]{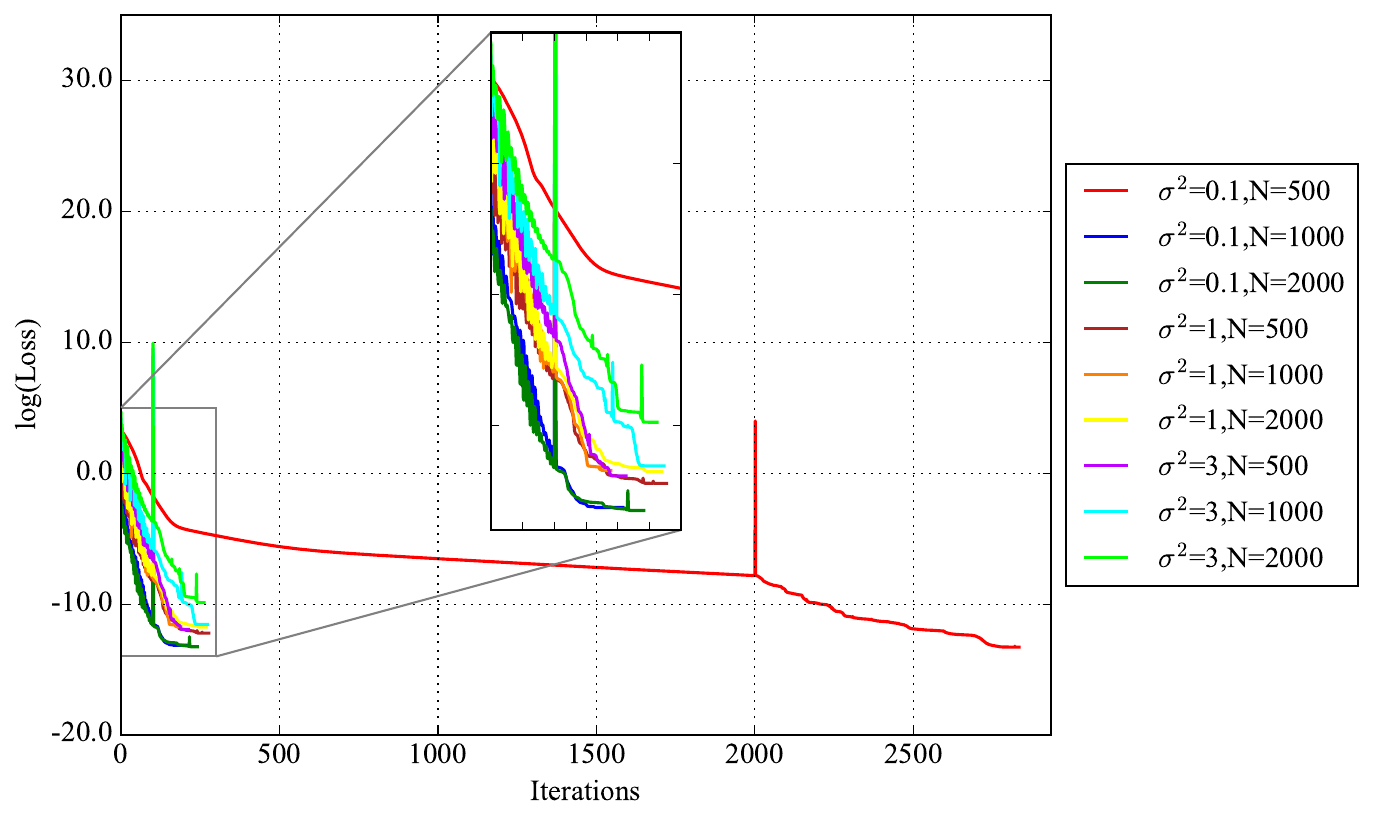}   
		\caption{}
	\end{subfigure}%
	\caption{One dimensional logarithm loss function with $\left(a\right)$ exponential correlation and $\left(b\right)$ Gaussian correlation}
	\label{fig:loss1Dtest}
\end{figure}

Additionally, shown in Figure~\ref{fig:loss1Dtest} is the $log(Loss)$ vs. iteration graph for different parameters in constructing the random log-hydraulic conductivity ﬁeld. 
It can be observed from the $log(Loss)$ vs. iteration graph the following two crucial aspects: 1).the loss value for the Gaussian correlation is much smaller than the exponential correlation for all  $\sigma^2$ and $N$ values; 2).with transfer learning, the loss function drops significantly faster and the number of required iterations is greatly reduced, which will eventually obtain more accurate results with much less calculation time. Judging from above mentioned performance, the Gaussian correlation outweigh the exponential correlation in generating random log-hydraulic conductivity ﬁeld for the physics-informed machine learning. 

\subsubsection{Two dimensional groundwater flow with both correlations}
\label{subsection 4.1.2:2D comparison of gauss and exp}
To solve the non-homogeneous 2D flow problem for Darcy equation, the manufactured solution in Equation \ref{eq:u2} is adopted for model verification. The hydraulic conductivity $K$ is constructed from Equation \eqref{eq:u} by Radom spectral method as Equation \eqref{eq:k2}. The source term $f$ is taken the form in Equation \eqref{eq:f22}. Also, for detailed derivation, it is added in \ref{appendix b}. Also, the exponential and Gaussian correlations for the heterogeneous hydraulic conductivity are tested with varying $\sigma^2$ and $N$ values. The same conclusion can be drawn from Tables ~\ref{tab:Table5} and \ref{tab:Table6} that 1). with increasing $N$, the predict hydraulic head becomes more accurate, however when $\sigma^2$ becomes bigger, the accuracy deteriorates in most cases. Without doubt, the PINNs with Gaussian correlation based hydraulic conductivity produces much better results than the exponential correlations. With transfer learning technique, the accuracy further improves. The contour plots of the predicted hydraulic head and velocity and the manufactured solution for both exponential and Gaussian correlations with $\sigma^2=0.1$ and $N=2000$ are shown in Figures \ref{fig:2Dtestexp}, \ref{fig:2Dtestexpv}, \ref{fig:2Dtestgauss} and \ref{fig:2Dtestgaussv}. It is obvious the predicted physical patterns with proposed deep collocation method agrees well with the manufactured solution \eqref{eq:u1} in the 2D case. For the velocity, once again, the performance of the Gaussian correlation outweigh the exponential correlation.

\begin{table}[H] 
	\captionsetup{width=0.85\columnwidth}
	\caption{$\delta \hat{h}$ for 2D case computed with exponential correlation in different variance and number of modes}
	\vspace{-0.3cm}
	\centering 
	\resizebox{0.8\columnwidth}{!}{
		\begin{tabular}{l|c|c|c|c|c|c} 
			\toprule 
			\toprule 
			\multirow{2}*{\diagbox{$N$}{$\sigma^2$}}&\multicolumn{2}{c|}{0.1}&\multicolumn{2}{c|}{1}&\multicolumn{2}{c}{3}\\
			\cline{2-7}
			~ &without TL&with TL&without TL&with TL&without TL&with TL\\ 
			\midrule
			500&6.777e-2&9.345e-2&3.817e-2&4.635e-2&2.560e-1&5.5080e-2\\ 
			\midrule
			1000&1.479e-2&4.832e-2&1.790e-3&8.157e-2&9.739e-2&7.201e-2\\ 
			\midrule
			2000&7.147e-3&4.829e-2&4.471e-2&4.924e-2&9.357e-2&1.187e-1\\ 
			\bottomrule
		\end{tabular}
	}
	\label{tab:Table5}
\end{table}

\begin{table}[H]   
	\captionsetup{width=0.85\columnwidth}
	\caption{$\delta \hat{h}$ for 2D case computed with Gaussian correlation in different variance and number of modes}
	\vspace{-0.3cm}
	\centering
	\resizebox{0.8\columnwidth}{!}{
		\begin{tabular}{l|c|c|c|c|c|c} 
			\toprule 
			\toprule 
			\multirow{2}*{\diagbox{$N$}{$\sigma^2$}}&\multicolumn{2}{c|}{0.1}&\multicolumn{2}{c|}{1}&\multicolumn{2}{c}{3}\\
			\cline{2-7}
			~ &without TL&with TL&without TL&with TL&without TL&with TL\\  
			\midrule 
			500&9.974e-4&9.842e-4&3.530e-3&7.900e-4&3.053e-2&2.475e-3\\ 
			\midrule
			1000&2.980e-4&6.954e-4&6.270e-3&1.527e-3&3.855e-2&2.904e-3\\
			\midrule
			2000&7.299e-4&5.717e-4&7.719e-3&1.704e-3&7.486e-2&2.506-2\\ 
			\bottomrule
		\end{tabular}
	}
	\label{tab:Table6}
\end{table}

\begin{figure}[H]
	\captionsetup{width=0.85\columnwidth}
	\centering
	\begin{subfigure}[b]{6.0cm}
		\centering\includegraphics[height=6cm,width=6.0cm]{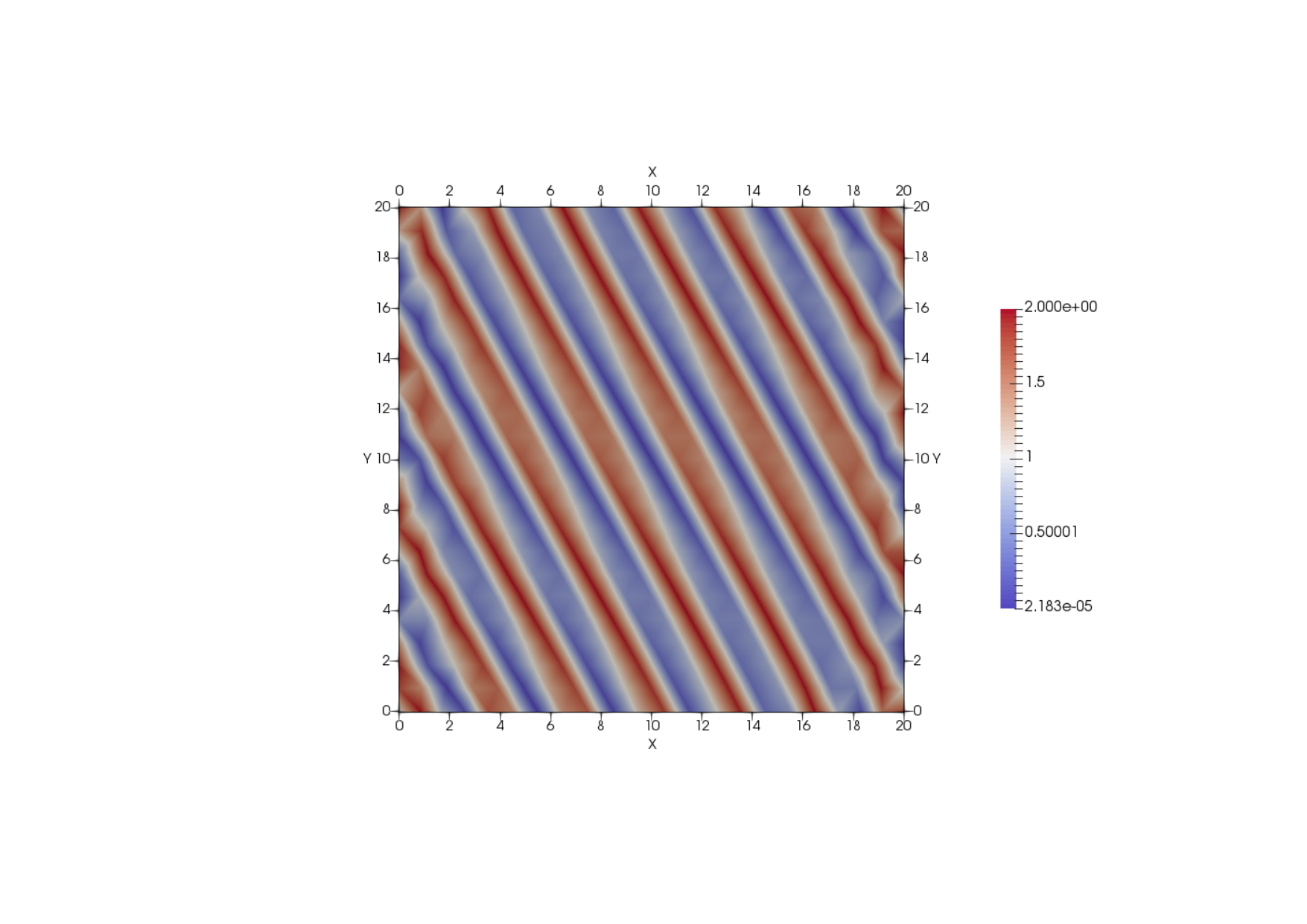}   
		\caption{}
	\end{subfigure}%
	\hspace{0.5cm}
	\begin{subfigure}[b]{6.0cm}
		\centering\includegraphics[height=6cm,width=6.0cm]{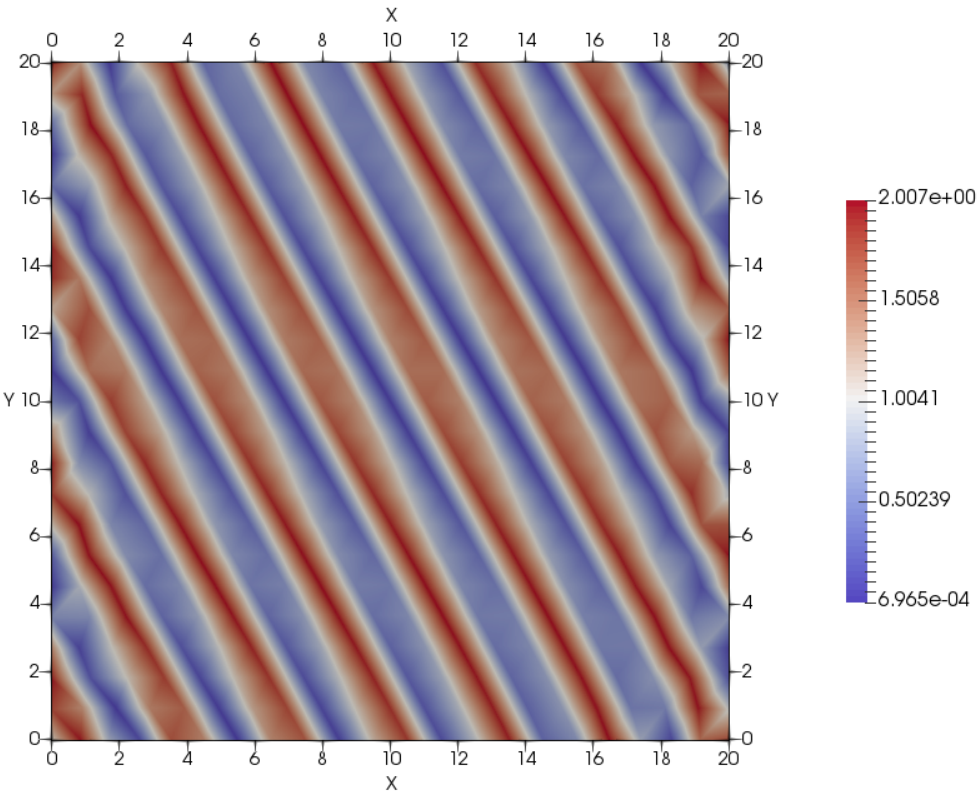}   
		\caption{}
	\end{subfigure}%
	\caption{Two dimensional hydraulic head when $\sigma^2=0.1$, $N=2000$ with exponential correlation $\left(a\right)$ exact solution and $\left(b\right)$ predict solution}
	\label{fig:2Dtestexp}
\end{figure}

\begin{figure}[H]
	\captionsetup{width=0.85\columnwidth}
	\centering
	\begin{subfigure}[b]{6.0cm}
		\centering\includegraphics[height=6cm,width=6.0cm]{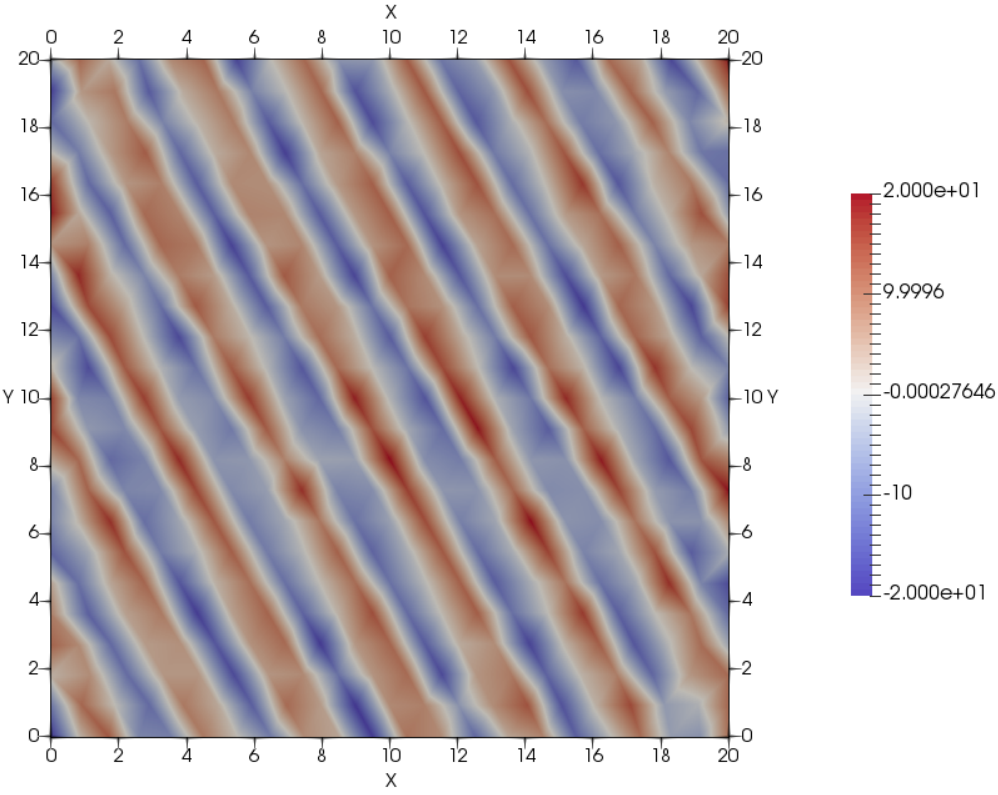}   
		\caption{}
	\end{subfigure}%
	\hspace{0.5cm}
	\begin{subfigure}[b]{6.0cm}
		\centering\includegraphics[height=6cm,width=6.0cm]{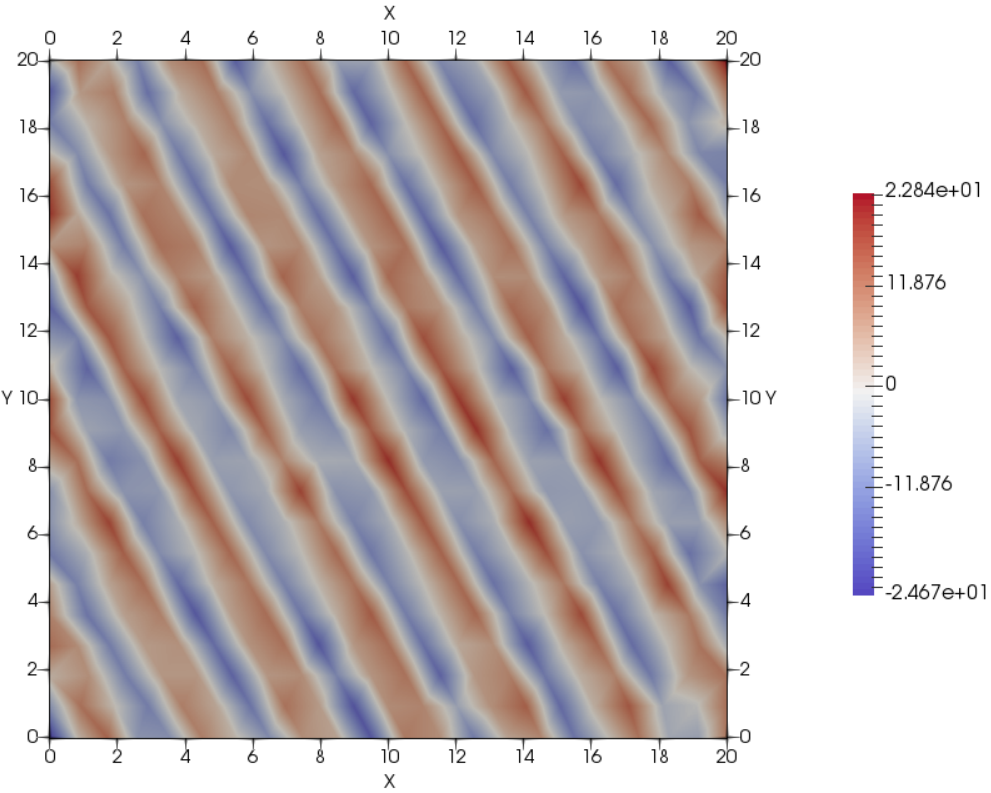}   
		\caption{}
	\end{subfigure}%
	\caption{Two dimensional velocity when $\sigma^2=0.1$, $N=2000$ with exponential correlation $\left(a\right)$ exact solution and $\left(b\right)$ predict solution}
	\label{fig:2Dtestexpv}
\end{figure}

\begin{figure}[H]
	\captionsetup{width=0.85\columnwidth}
	\centering
	\begin{subfigure}[b]{6.0cm}
		\centering\includegraphics[height=6cm,width=6.0cm]{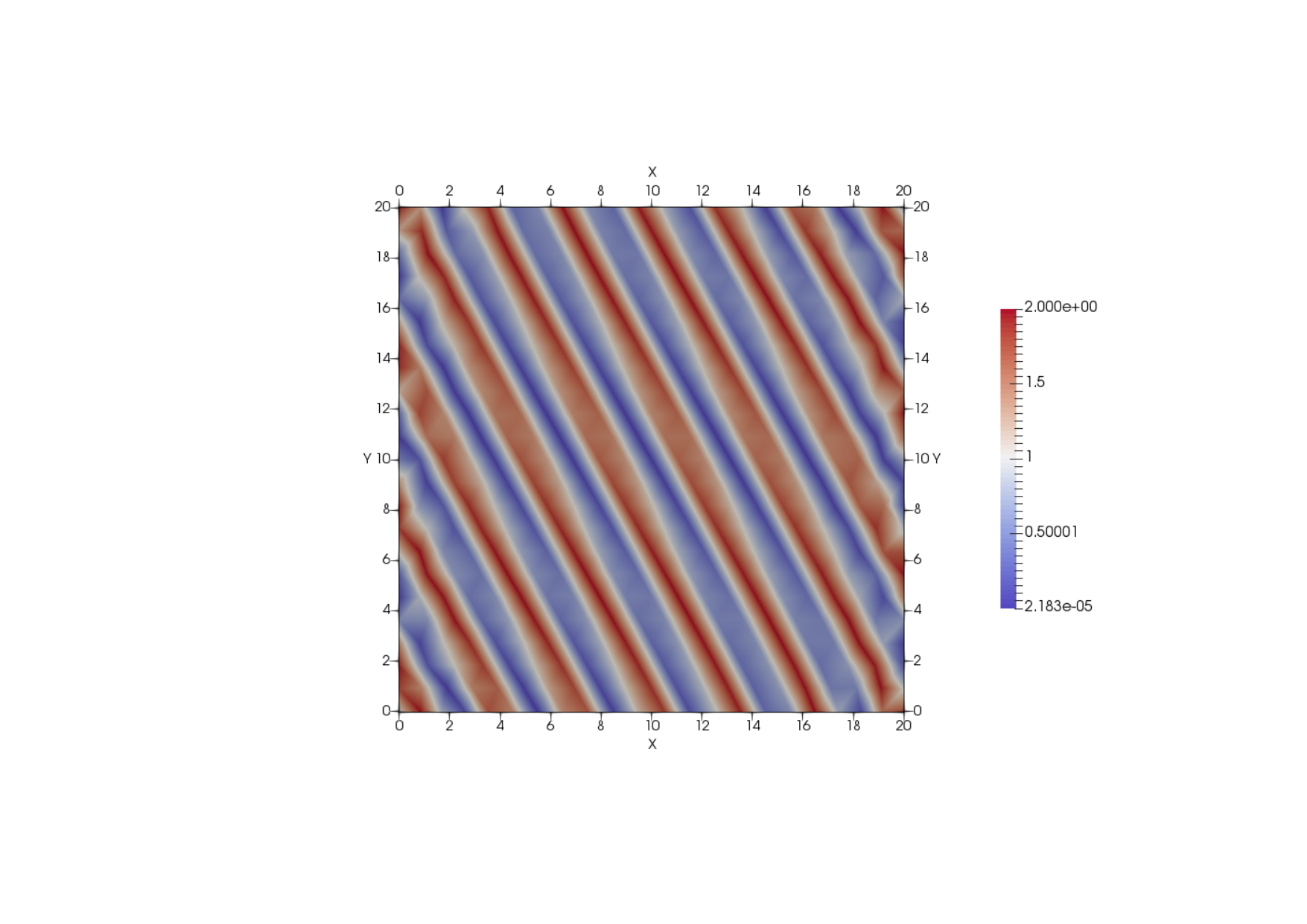}   
		\caption{}
	\end{subfigure}%
	\hspace{0.5cm}
	\begin{subfigure}[b]{6.0cm}
		\centering\includegraphics[height=6cm,width=6.0cm]{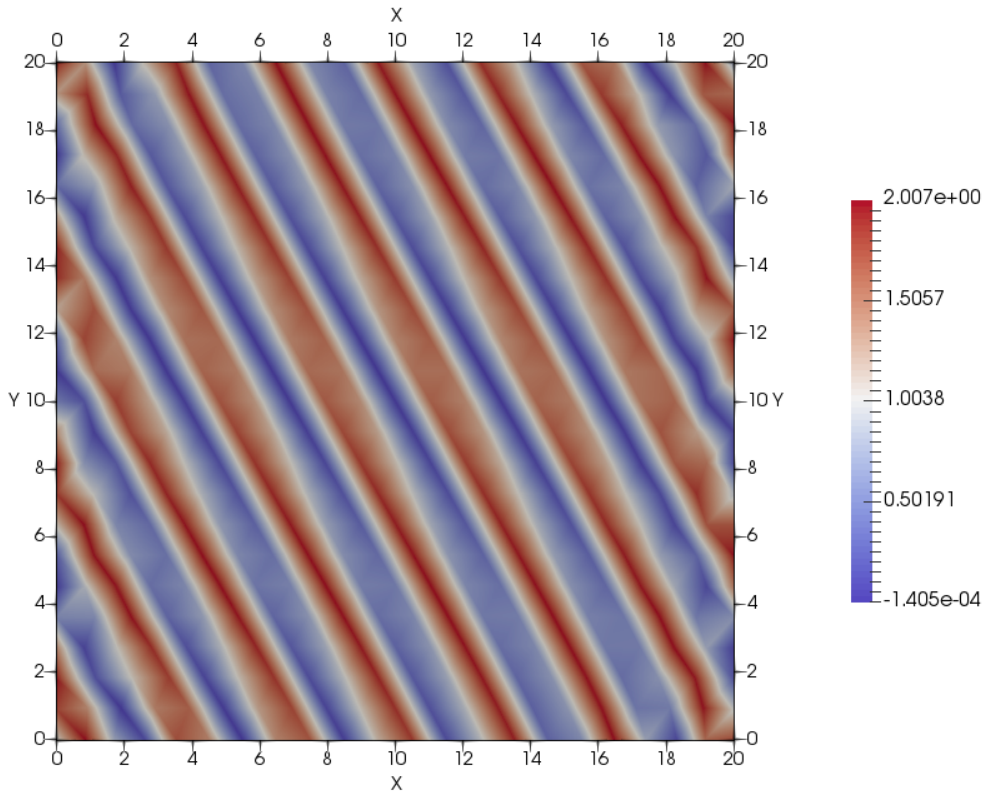}   
		\caption{}
	\end{subfigure}%
	\caption{Two dimensional hydraulic head when $\sigma^2=0.1$, $N=2000$ with Gaussian correlation $\left(a\right)$ exact solution and $\left(b\right)$ predict solution}
	\label{fig:2Dtestgauss}
\end{figure}

\begin{figure}[H]
	\captionsetup{width=0.85\columnwidth}
	\centering
	\begin{subfigure}[b]{6.0cm}
		\centering\includegraphics[height=6cm,width=6.0cm]{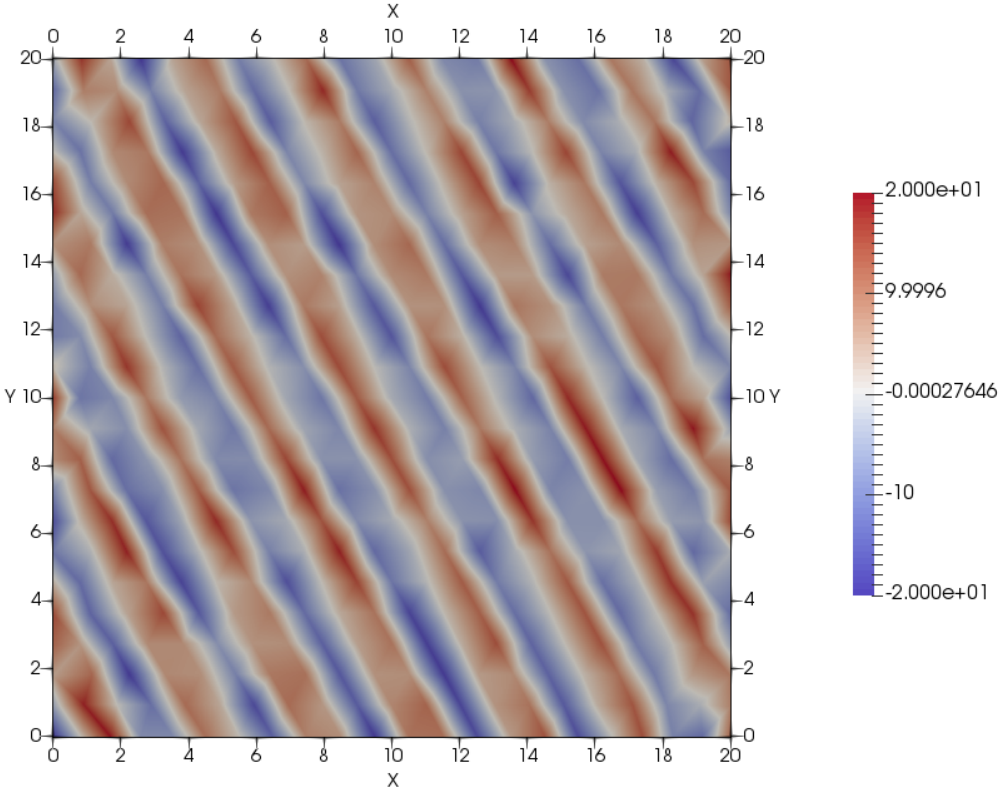}   
		\caption{}
	\end{subfigure}%
	\hspace{0.5cm}
	\begin{subfigure}[b]{6.0cm}
		\centering\includegraphics[height=6cm,width=6.0cm]{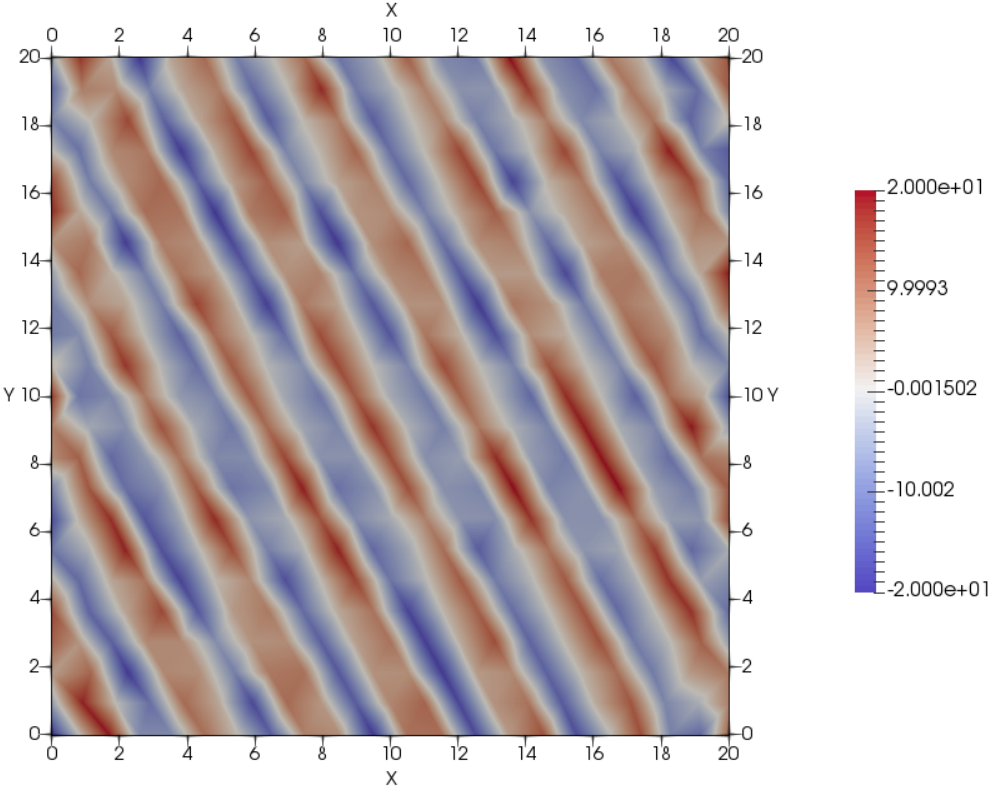}   
		\caption{}
	\end{subfigure}%
	\caption{Two dimensional velocity when $\sigma^2=0.1$, $N=2000$ with Gaussian correlation $\left(a\right)$ exact solution and $\left(b\right)$ predict solution}
	\label{fig:2Dtestgaussv}
\end{figure}

Further, the $log(Loss)$ vs. iteration graph for different parameters in constructing the random log-hydraulic conductivity ﬁeld is demonstrated in Figure~\ref{fig:loss2Dtest}. It becomes more conspicuous that the loss for PINNs with Gaussian correlations is much smaller and decreases faster. For exponential correlations, although the iteration close fast in the L-BFGS stage, the loss is not fully minimized, this is largely improved for PINNs with Gaussian correlations. The effects of transfer learning is well shown in the convergence graph that the loss function plummets with less iterations, which largely reduces the training time. Thus for two dimensional groundwater flow, the Gaussian correlation shows much better performance than the exponential correlation for the physics-informed machine learning. The transfer learning technique can be conducive in boosting the whole model.
\begin{figure}[H]
	\captionsetup{width=0.85\columnwidth}
	\centering
	\begin{subfigure}[b]{6.0cm}
		\centering\includegraphics[height=6cm,width=6.0cm]{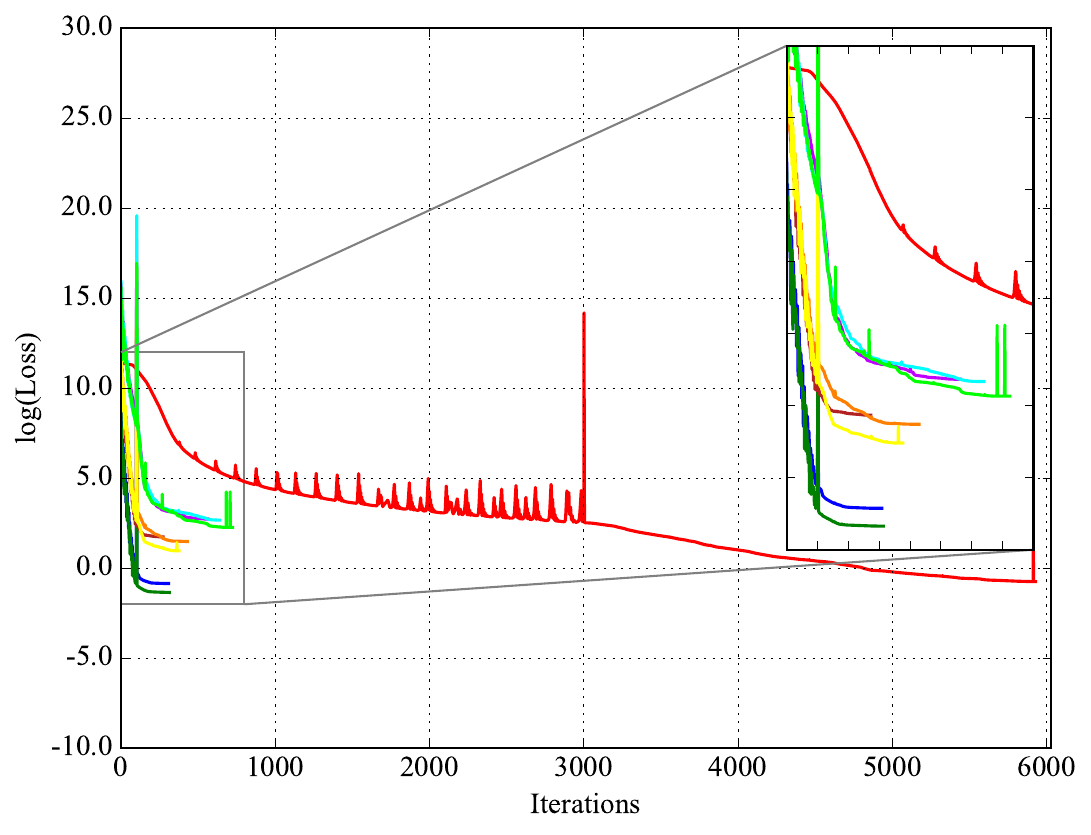}   
		\caption{}
	\end{subfigure}%
	\hspace{0.5cm}
	\begin{subfigure}[b]{6.0cm}
		\centering\includegraphics[height=6cm,width=8.0cm]{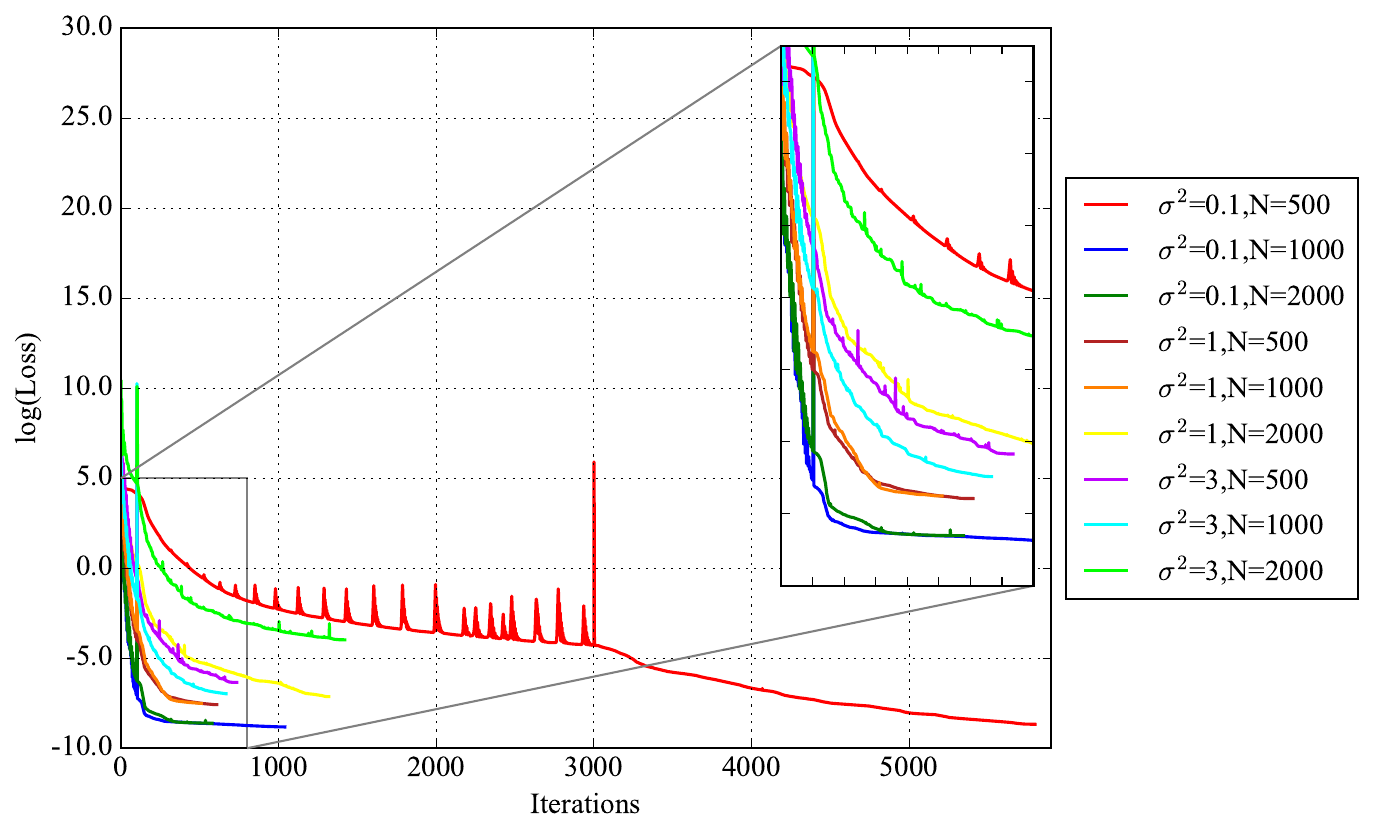}   
		\caption{}
	\end{subfigure}%
	\caption{Two dimensional logarithm loss function with $\left(a\right)$ exponential correlation and $\left(b\right)$ Gaussian correlation}
	\label{fig:loss2Dtest}
\end{figure}

\subsubsection{Three dimensional groundwater flow with both correlations}
\label{subsection 4.1.3:3D comparison of gauss and exp}
The non-homogeneous 3D flow problem for Darcy equation is further studied and verified, which complements the benchmark construction in \cite{alecsa2019benchmark}. The manufactured solution in Equation \ref{eq:u31} is adopted for model verification of three dimensional groundwater flow simulation. The hydraulic conductivity $K$ is constructed according to  Equation \eqref{eq:k3}. The source term $f$ is employing the form in Equation \eqref{eq:f31}.  The exponential and Gaussian correlations for the heterogeneous hydraulic conductivity are tested with varying $\sigma^2$ and $N$ values. Tables ~\ref{tab:Table7} and \ref{tab:Table8} listed the relative error of hydraulic head for DCM with transfer learning or without transfer learning. It can be observed that with for different $\sigma^2$ and $N$ values, the performance of the PINNs with both correlations varies largely. In a whole, the Gaussian correlation yields much better numerical results. With transfer learning model, the predicted results is much more accurate for most cases. This is really promising, since transfer learning not only reduces the iteration number, which will result in less computational time, but improve the accuracy of the model for different $\sigma^2$ and $N$ values and for both correlation functions. The hydraulic head predicted by both correlation functions with $\sigma^2=0.1$ and $N=2000$ are then shown in Figures \ref{fig:3Dtestexp}, \ref{fig:3Dtestexpv}, \ref{fig:3Dtestgauss} and \ref{fig:3Dtestgaussv}. It can be seen that both correlation functions predicts results agreeable with the exact solutions. For the loss-vs.-iteration graph, it is obvious that both formulations with exponential and Gaussian correlation decrease to a stable value around $y=0$ axis, however, the transfer learning model Gaussian correlation function converges with less iteration numbers.

\begin{table}[H] 
	\captionsetup{width=0.85\columnwidth}
	\caption{$\delta \hat{h}$ for 3D case computed with exponential correlation in different variance and number of modes}
	\vspace{-0.3cm}
	\centering 
	\resizebox{0.8\columnwidth}{!}{
		\begin{tabular}{l|c|c|c|c|c|c} 
			\toprule 
			\toprule 
			\multirow{2}*{\diagbox{$N$}{$\sigma^2$}}&\multicolumn{2}{c|}{0.1}&\multicolumn{2}{c|}{1}&\multicolumn{2}{c}{3}\\
			\cline{2-7}
			~ &without TL&with TL&without TL&with TL&without TL&with TL\\ 
			\midrule
			500&3.419e-3&1.529e-2&1.131e0&5.885e-2&6.264e-1&7.340e-2\\ 
			\midrule
			1000&1.219e-1&8.333e-3&3.257e-1&5.668e-2&1.055e0&8.982e-2\\ 
			\midrule
			2000&5.667e-2&1.230e-2&4.287e-1&6.161e-2&1.204e0&5.313e-2\\ 
			\bottomrule
		\end{tabular}
	}
	\label{tab:Table7} 
\end{table}

\begin{table}[H]  
	\captionsetup{width=0.85\columnwidth}
	\caption{$\delta \hat{h}$ for 3D case computed with Gaussian correlation in different variance and number of modes} 
	\vspace{-0.3cm}
	\centering 
	\resizebox{0.8\columnwidth}{!}{
		\begin{tabular}{l|c|c|c|c|c|c}
			\toprule 
			\toprule 
			\multirow{2}*{\diagbox{$N$}{$\sigma^2$}}&\multicolumn{2}{c|}{0.1}&\multicolumn{2}{c|}{1}&\multicolumn{2}{c}{3}\\
			\cline{2-7}
			~ &without TL&with TL&without TL&with TL&without TL&with TL\\
			\midrule
			500&1.161e-2&5.439e-3&6.247e-3&1.540e-2&9.294e-2&1.078e-2\\ 
			\midrule
			1000&1.187e-3&5.620e-3&4.087e-2&1.205e-2&3.004e-1&1.996e-2\\ 
			\midrule
			2000&8.342e-3&1.661e-2&1.218e-2&1.278e-2&1.562e-1&1.952-2\\ 
			\bottomrule
		\end{tabular}
	}
	\label{tab:Table8}
\end{table}

\begin{figure}[H]
	\captionsetup{width=0.85\columnwidth}
	\centering
	\begin{subfigure}[b]{6.0cm}
		\centering\includegraphics[height=6cm,width=6.0cm]{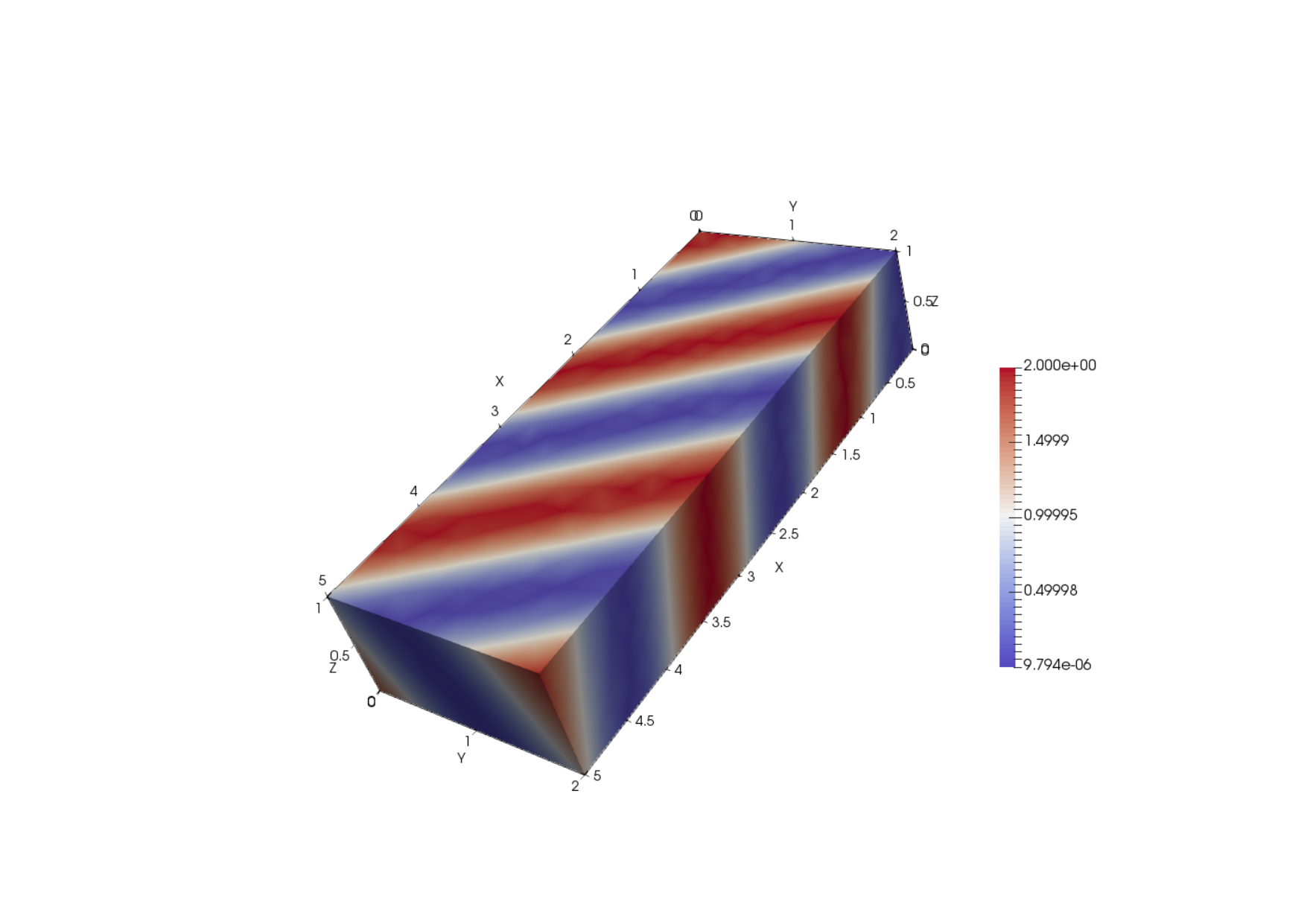}   
		\caption{}
	\end{subfigure}%
	\hspace{0.5cm}
	\begin{subfigure}[b]{6.0cm}
		\centering\includegraphics[height=6cm,width=6.0cm]{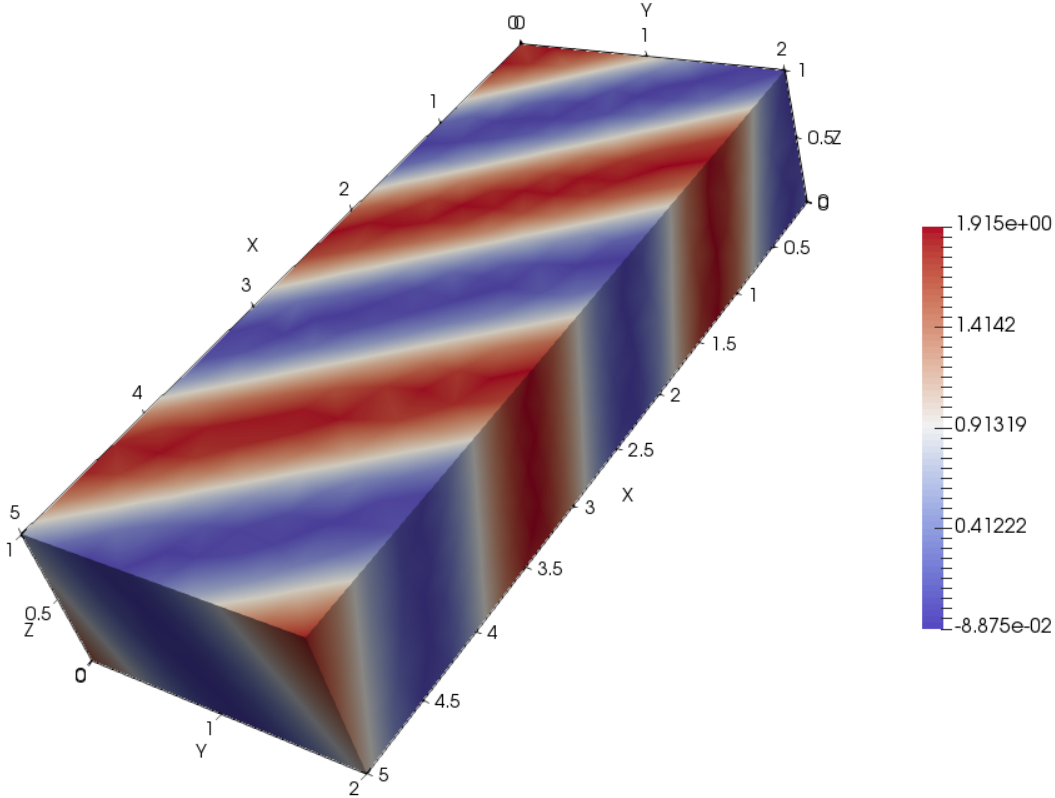}   
		\caption{}
	\end{subfigure}%
	\caption{Three dimensional hydraulic head when $\sigma^2=0.1$, $N=2000$ with exponential correlation $\left(a\right)$ exact solution and $\left(b\right)$ predict solution}
	\label{fig:3Dtestexp}
\end{figure}

\begin{figure}[H]
	\captionsetup{width=0.85\columnwidth}
	\centering
	\begin{subfigure}[b]{6.0cm}
		\centering\includegraphics[height=6cm,width=6.0cm]{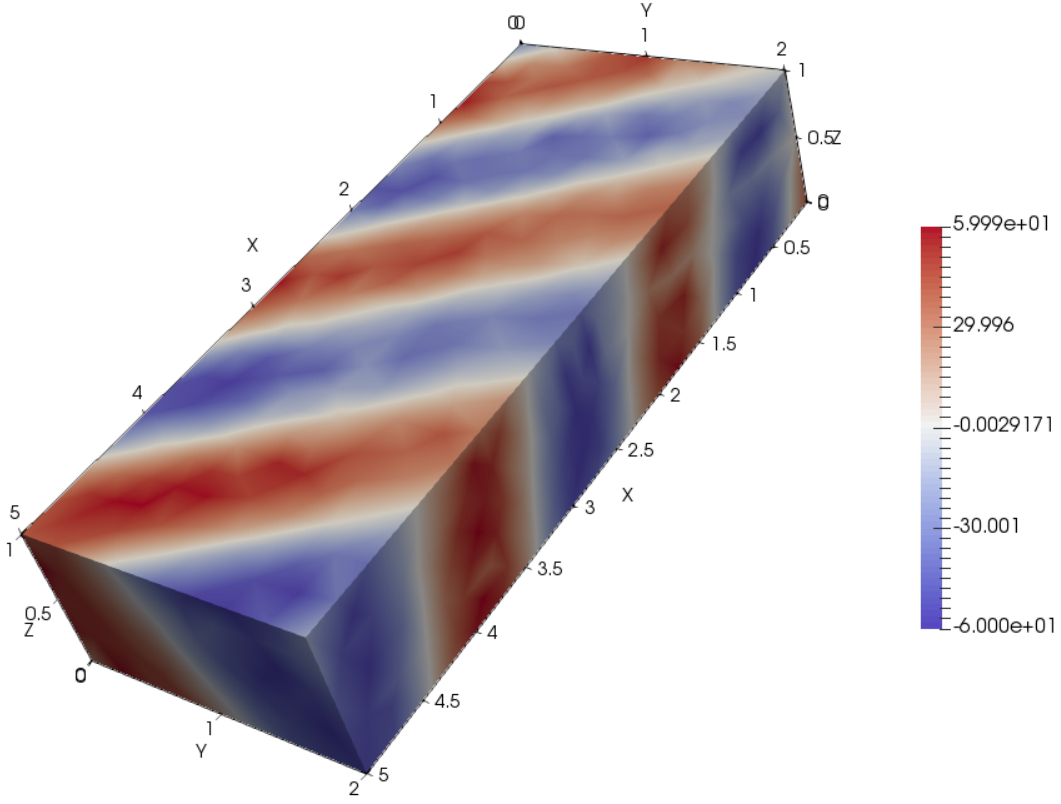}   
		\caption{}
	\end{subfigure}%
	\hspace{0.5cm}
	\begin{subfigure}[b]{6.0cm}
		\centering\includegraphics[height=6cm,width=6.0cm]{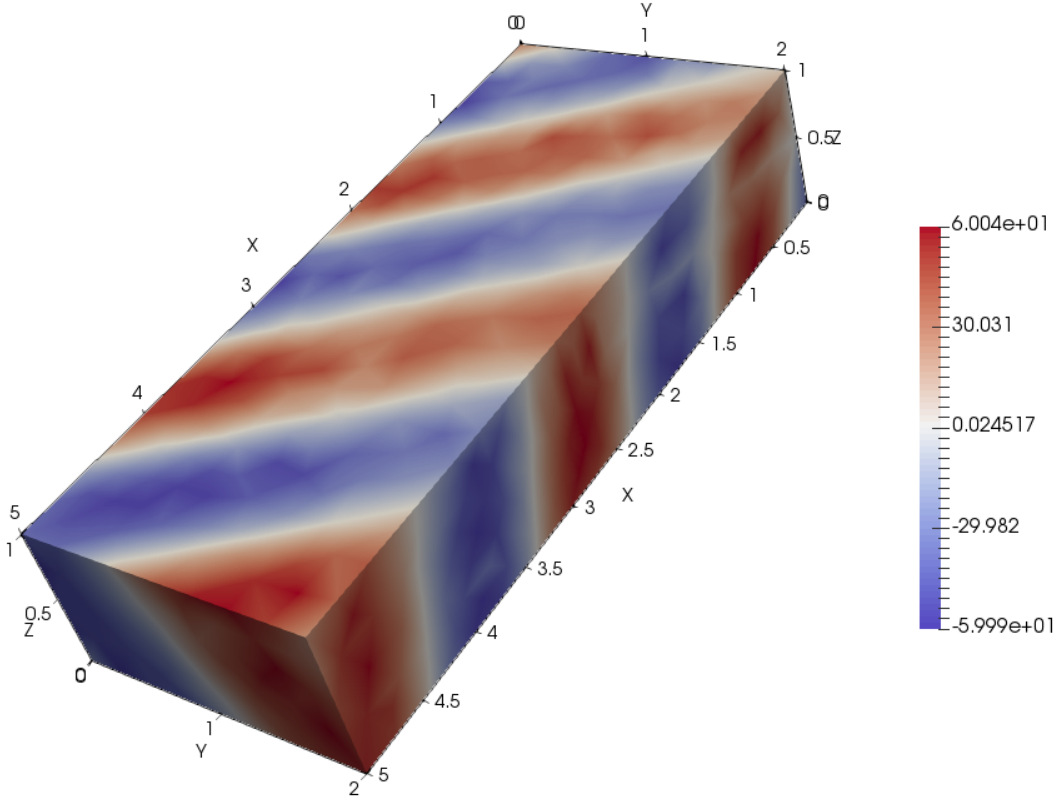}   
		\caption{}
	\end{subfigure}%
	\caption{Three dimensional velocity when $\sigma^2=0.1$, $N=2000$ with exponential correlation $\left(a\right)$ exact solution and $\left(b\right)$ predict solution}
	\label{fig:3Dtestexpv}
\end{figure}

\begin{figure}[H]
	\captionsetup{width=0.85\columnwidth}
	\centering
	\begin{subfigure}[b]{6.0cm}
		\centering\includegraphics[height=6cm,width=6.0cm]{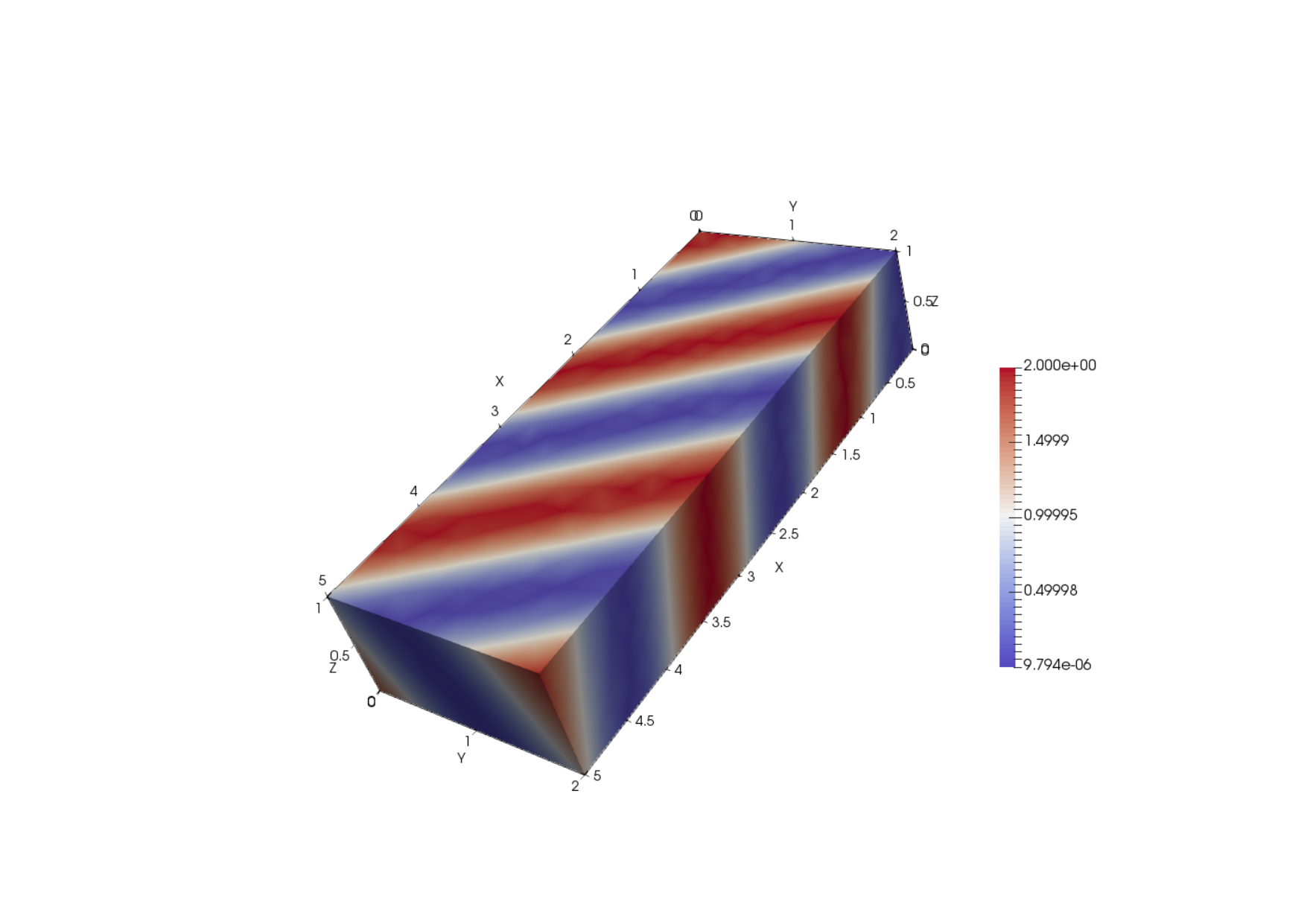}   
		\caption{}
	\end{subfigure}%
	\hspace{0.5cm}
	\begin{subfigure}[b]{6.0cm}
		\centering\includegraphics[height=6cm,width=6.0cm]{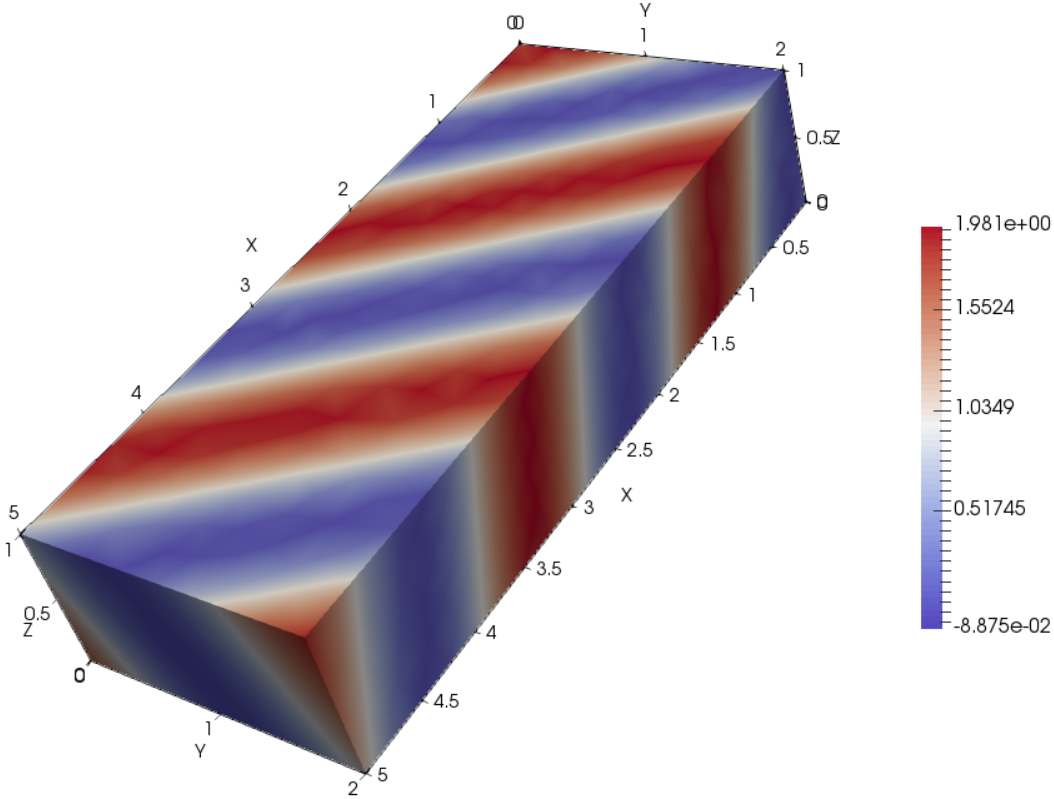}   
		\caption{}
	\end{subfigure}%
	\caption{Three dimensional hydraulic head when $\sigma^2=0.1$, $N=2000$ with Gaussian correlation $\left(a\right)$ exact solution and $\left(b\right)$ predict solution}
	\label{fig:3Dtestgauss}
\end{figure}

\begin{figure}[H]
	\captionsetup{width=0.85\columnwidth}
	\centering
	\begin{subfigure}[b]{6.0cm}
		\centering\includegraphics[height=6cm,width=6.0cm]{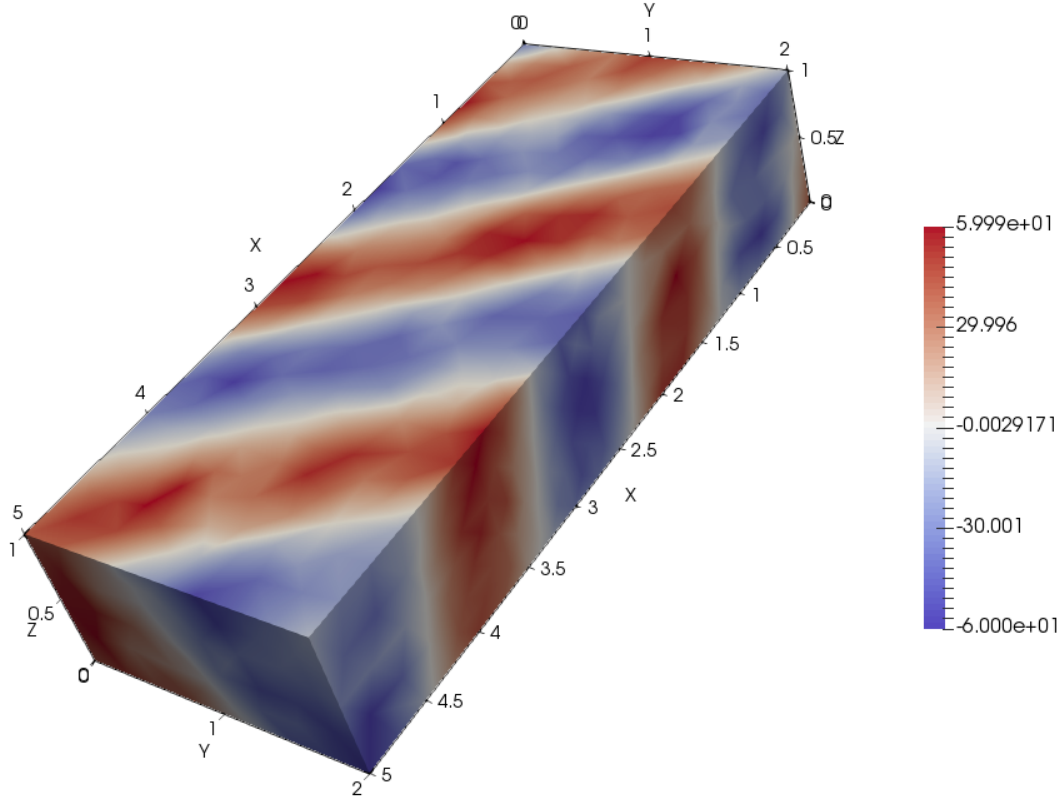}   
		\caption{}
	\end{subfigure}%
	\hspace{0.5cm}
	\begin{subfigure}[b]{6.0cm}
		\centering\includegraphics[height=6cm,width=6.0cm]{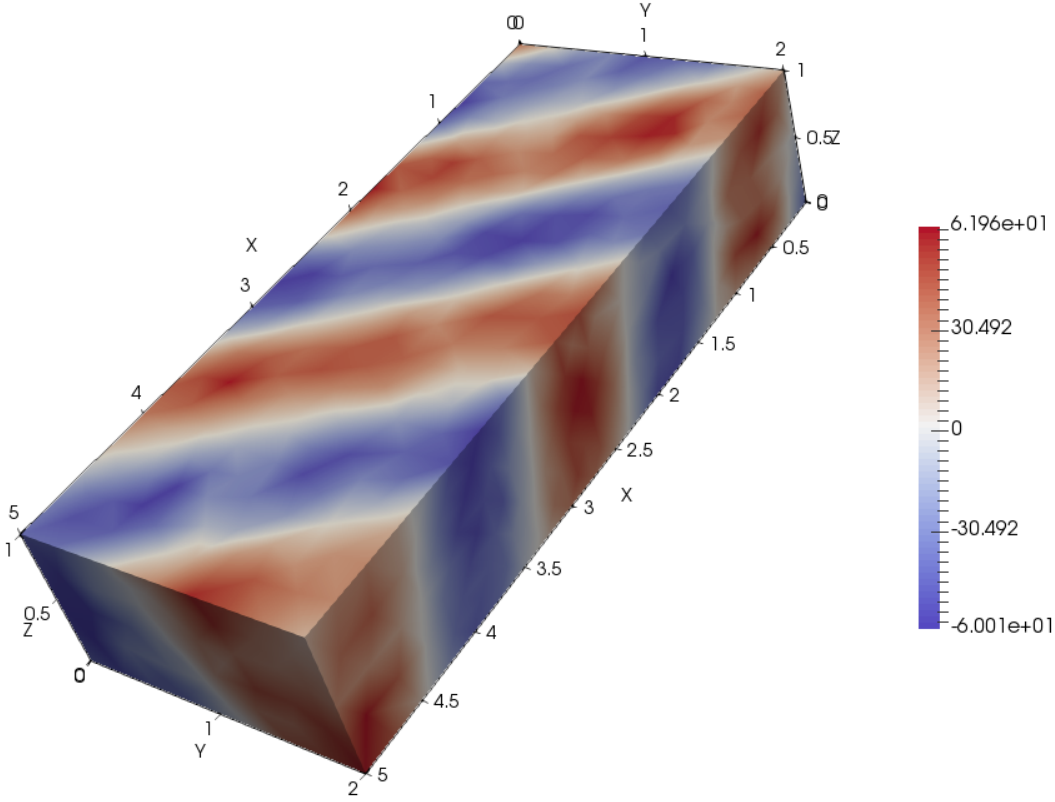}   
		\caption{}
	\end{subfigure}%
	\caption{Three dimensional velocity when $\sigma^2=0.1$, $N=2000$ with Gaussian correlation $\left(a\right)$ exact solution and $\left(b\right)$ predict solution}
	\label{fig:3Dtestgaussv}
\end{figure}

\begin{figure}[H]
	\captionsetup{width=0.85\columnwidth}
	\centering
	\begin{subfigure}[b]{6.0cm}
		\centering\includegraphics[height=6cm,width=6.0cm]{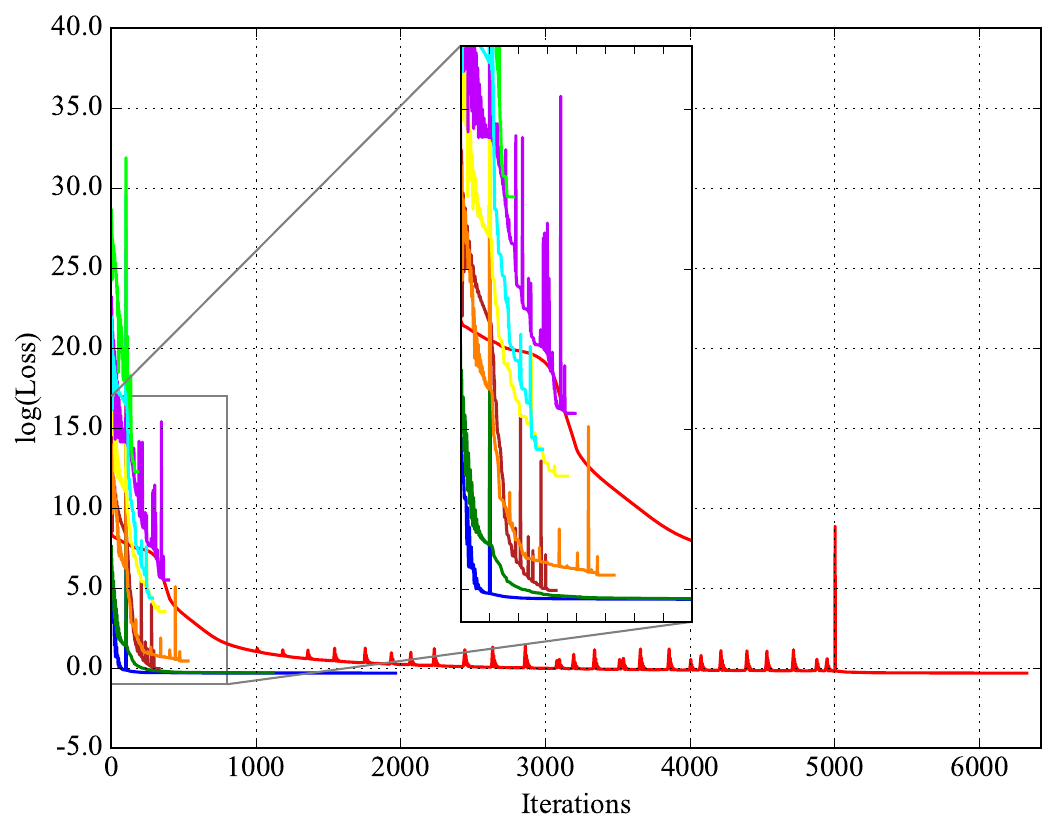}   
		\caption{}
	\end{subfigure}%
	\hspace{0.5cm}
	\begin{subfigure}[b]{6.0cm}
		\centering\includegraphics[height=6cm,width=8.0cm]{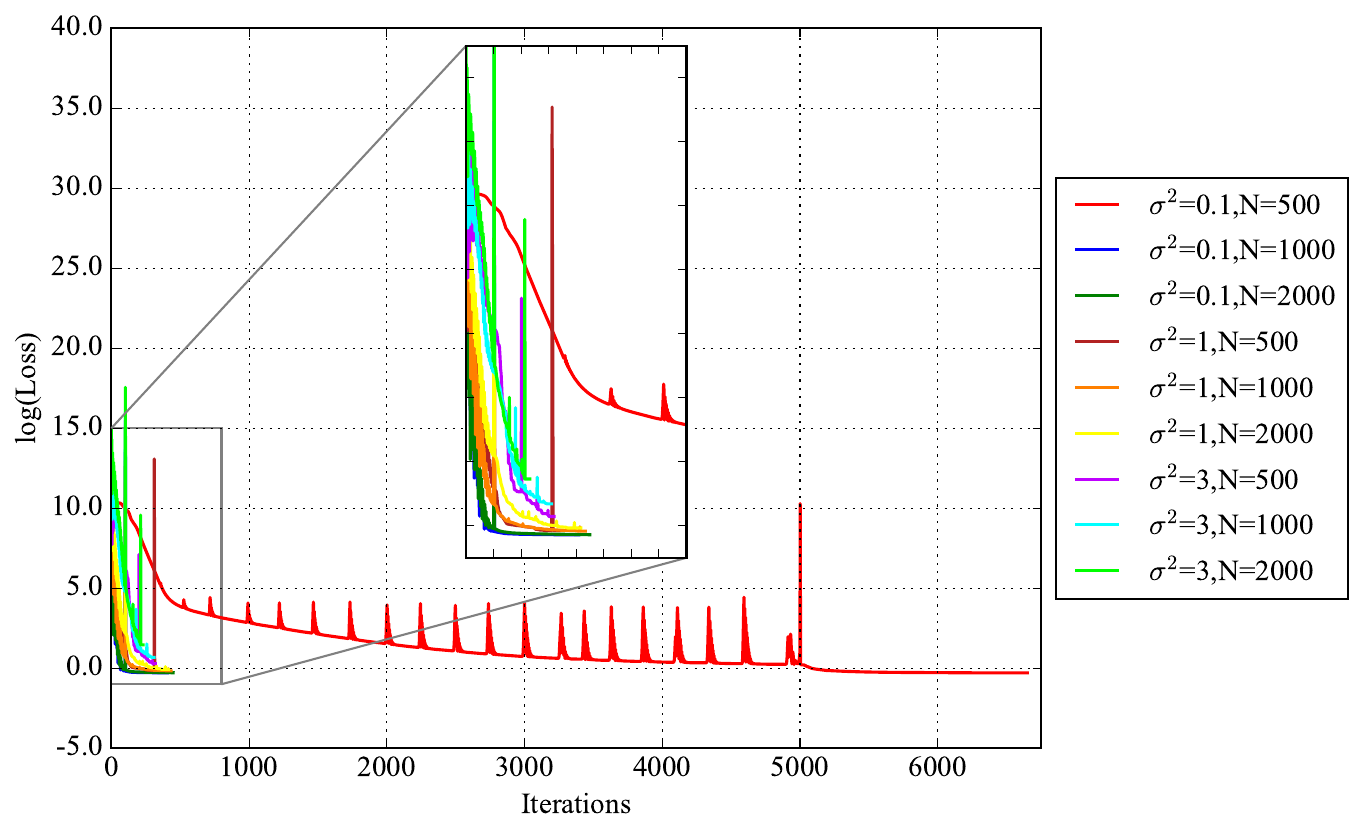}   
		\caption{}
	\end{subfigure}%
	\caption{Three dimensional logarithm loss function with $\left(a\right)$ exponential correlation and $\left(b\right)$ Gaussian correlation}
	\label{fig:loss3Dtest}
\end{figure}

Above all, the calculation training times involving for DCM with both correlation functions are shown in Table \ref{tab:Table9}, it is obvious that the Gaussian correlation function occupies less calculation time. When equipped with transfer learning model, the training time decreases exceedingly, which is favorably considering the loss with physics-informed constraints is usually difficult to train.
\begin{table}[H] 
	\captionsetup{width=0.85\columnwidth}
	\caption{Calculation time required in different dimensions} 
	\vspace{-0.3cm}
	\centering 
	\resizebox{0.9\columnwidth}{!}{
		\begin{tabular}{l|c|c|c|c|c|c}
			\toprule 
			\toprule 
			\multirow{2}*{\diagbox{Correlation}{Dimension}}&\multicolumn{2}{c|}{1}&\multicolumn{2}{|c|}{2}&\multicolumn{2}{|c}{3}\\ 
			\cline{2-7}
			~ &without TL&with TL&without TL&with TL&without TL&with TL\\
			\midrule
			Exponential&30s&3.0s&97s&6.5s&58s&7.8s\\ 
			\midrule
			Gaussian&28s&3.0s&58s&9.8s&52s&5.9s\\
			\bottomrule
		\end{tabular}
	}
	\label{tab:Table9} 
\end{table}

In summary, the comparison reveals that the loss function in the Gaussian correlation tends to decrease faster than the exponential, and that the error under Gaussian is much smaller and more stable than the exponential. Regarding the training time, in all three dimensions, using Gaussian correlations requires less computation time than exponential.

It is worth noting that in the three-dimensional case, the loss function of the exponential correlation coefficient explodes in a gradient explosion when the number of collocation points is greater than 150, while in the Gaussian correlation coefficient this does not occur even if the number of collocation points is equal to 1000.

From this we conclude that the PINNs model fits better with Gaussian correlations, and therefore we will use it in the next experiments.

\subsection{Sensitivity analysis results}
\label{subsection 4.2:SARESULTS}
The purpose of sensitivity analysis is to help us eliminate irrelevant variables, especially when there are a lot of parameter variables, and to help us reduce the time it takes to run the hyperparameter optimizer. It is often used in the analysis of practical problems because of the complexity of the actual situation. The hyper-parameters in this flow problem are assumed to be these five:

\begin{table}[H] 
	\captionsetup{width=0.85\columnwidth}
	\caption{Hyper-parameters and their intervals in groundwater flow problem}
	\vspace{-0.3cm}
	\centering 
	\resizebox{0.65\columnwidth}{!}{
		\begin{tabular}{c| c}
			\toprule
			\toprule 
			Hyper-parameters & Intervals\\
			\midrule
			Layers of NNs & $[2,30]$\\ 
			\midrule
			Neurons per layer&$[10,50]$\\ 
			\midrule
			Number of iterations&$[1500,3000]$\\
			\midrule
			Number of collation points&$[800,2000]$\\
			\midrule
			Maximum line search of L-BFGS algorithm&$[30,300]$\\
			\bottomrule 
		\end{tabular}
	}
	\label{tab:Table10} 
\end{table}

Substituting these parameters in Table~\ref{tab:Table7} into our model for the sensitivity analysis, we obtain the following results:

\begin{figure}[H]
	\captionsetup{width=0.9\columnwidth}
	\includegraphics[height=9cm]{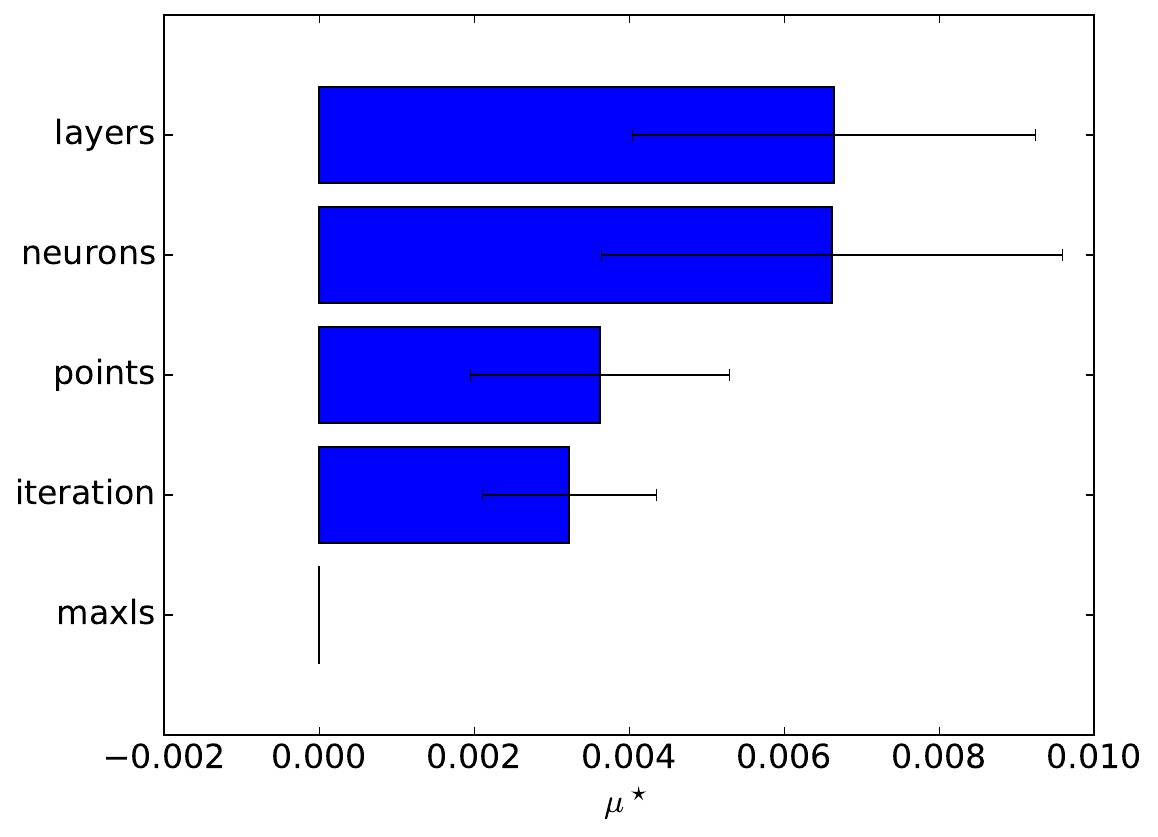}
	\centering
	\caption{Sensitivity histogram of Morris}
	\label{fig:morris1}
\end{figure}

\begin{figure}[H]
	\captionsetup{width=0.9\columnwidth}
	\includegraphics[height=9cm]{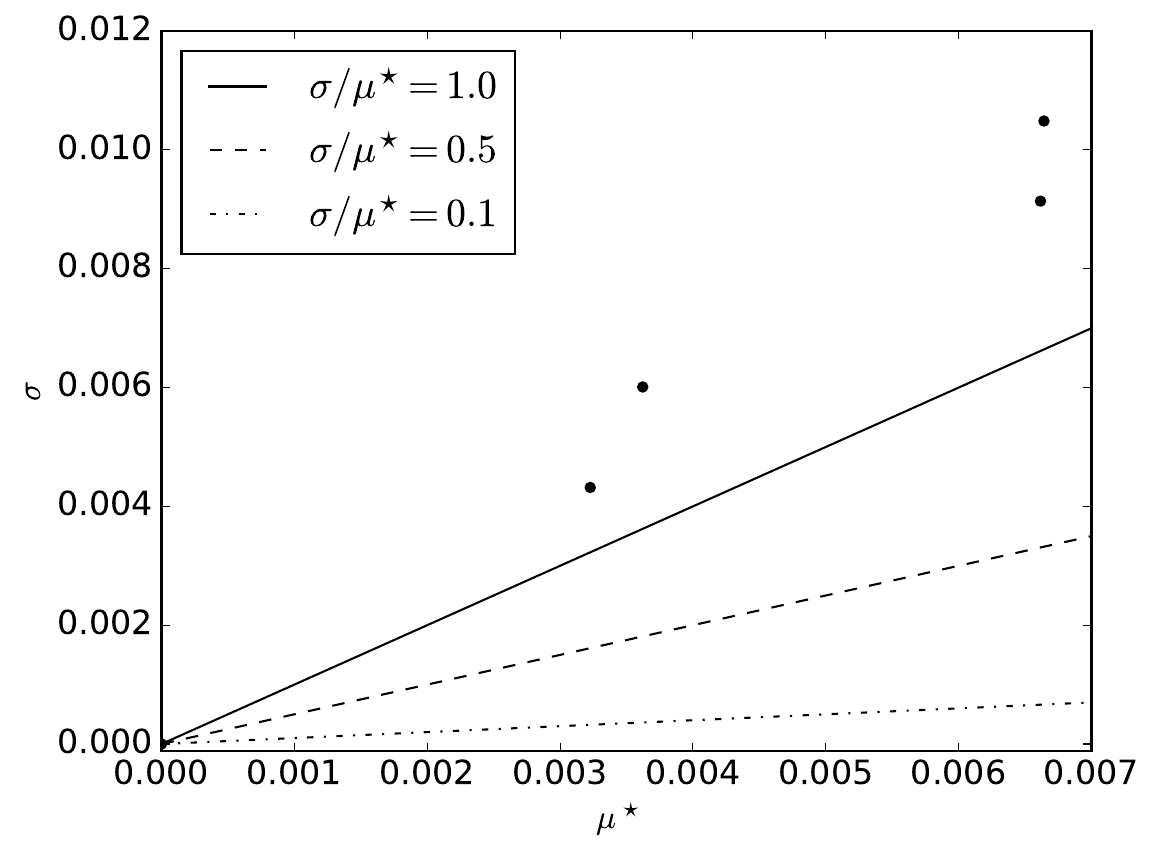}
	\centering
	\caption{Morris $\mu^*$ and $\sigma$ computed using the Morris screening algorithm}
	\label{fig:morris2}
\end{figure}

\begin{figure}[H]
	\captionsetup{width=0.9\columnwidth}
	\includegraphics[height=5cm]{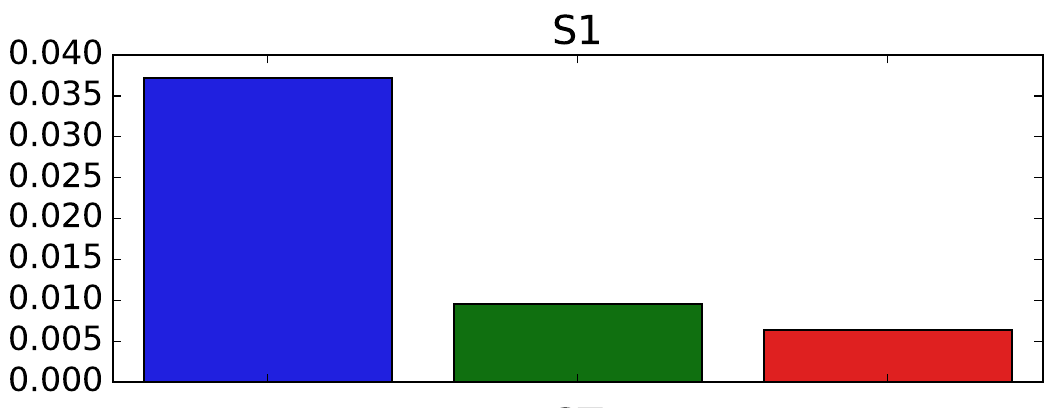}
	\centering
	\caption{Sensitivity histogram of eFAST}
	\label{fig:eFAST}
\end{figure}

From Figure~\ref{fig:morris1} and Figure~\ref{fig:morris2} we can easily conclude that the number of layers have the greatest impact on system sensitivity, and the neurons is almost as large as it, in contrast, the maximum line search of L-BFGS has almost no effect on it. So we remove the number of layers and the maxls, and continue to calculate the sensitivity of the remaining three in the eFast model, the results is shown in Figure~\ref{fig:eFAST}. We can draw that, neurons is the second parameter that we should emphasize on. As a result, the layers and the neurons are chosen as the hyper-parameters in search space to be selected by the automated machine learning program.

\subsection{Hyperparameter optimizations method comparison}
\label{subsection 4.3:hpo}
\quad In order to select the hyperparameter optimization algorithm that is more suitable for our model, namely with small fitness value and fast computational speed, we use the two hyperparameters selected from the sensitivity analysis~\ref{subsection 4.2:SARESULTS} as search variables in search space and compute the four algorithms presented in the previous chapter, all remaining conditions setting equal. The horizontal coordinates in the Figure \ref{fig:HyperOpt} below represent the number of neurons per layer and the vertical coordinates represent the number of hidden layers, and the points circled by the box are the points where the optimal configuration is located, corresponding to the number of optimal layers and the number of neurons. The colormap is setting to be the value of the performance estimation strategy defined in Equation \ref{eq:Estimation Strategy}.

\begin{figure}[H]
	\captionsetup{width=0.85\columnwidth}
	\centering
	\begin{subfigure}[b]{7.0cm}
		\centering\includegraphics[height=6cm,width=7.5cm]{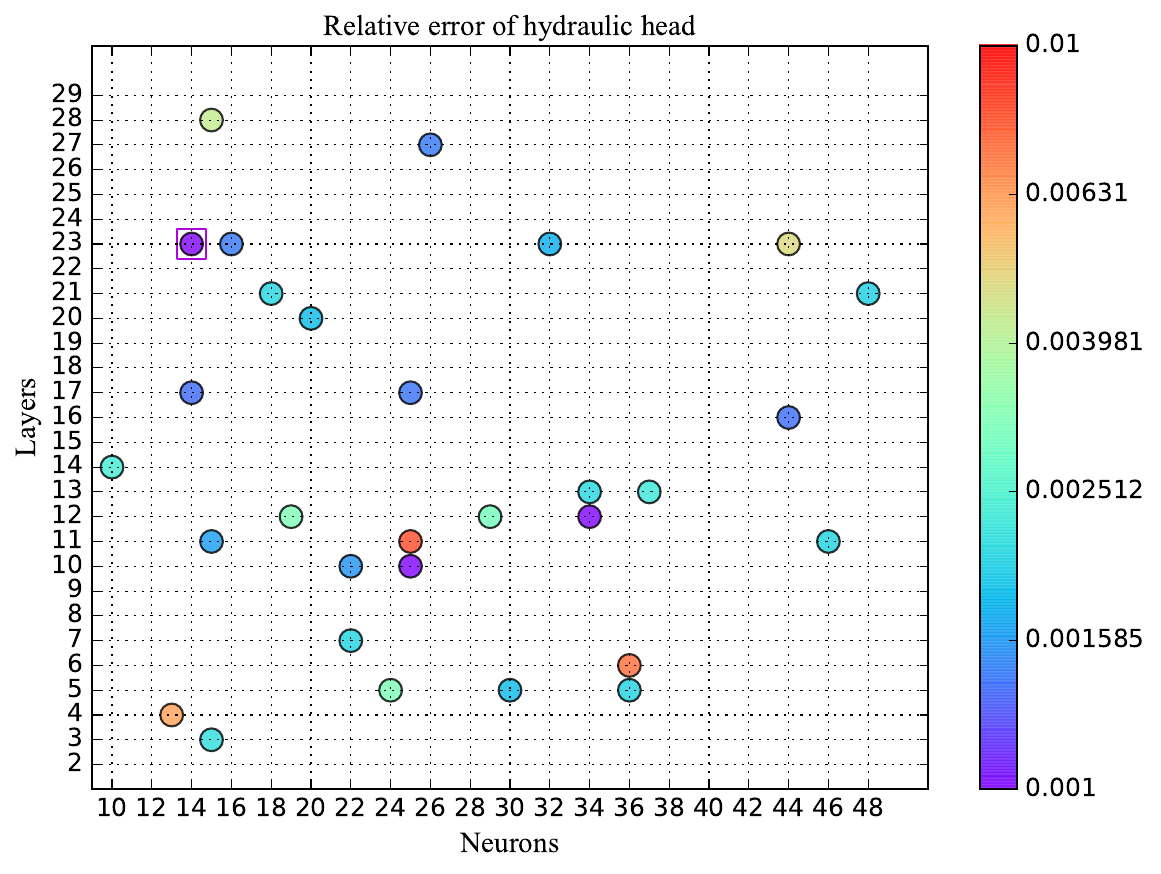}   
		\caption{}
	\end{subfigure}%
	\hspace{0.1cm}
	\begin{subfigure}[b]{7.0cm}
		\centering\includegraphics[height=6cm,width=7.5cm]{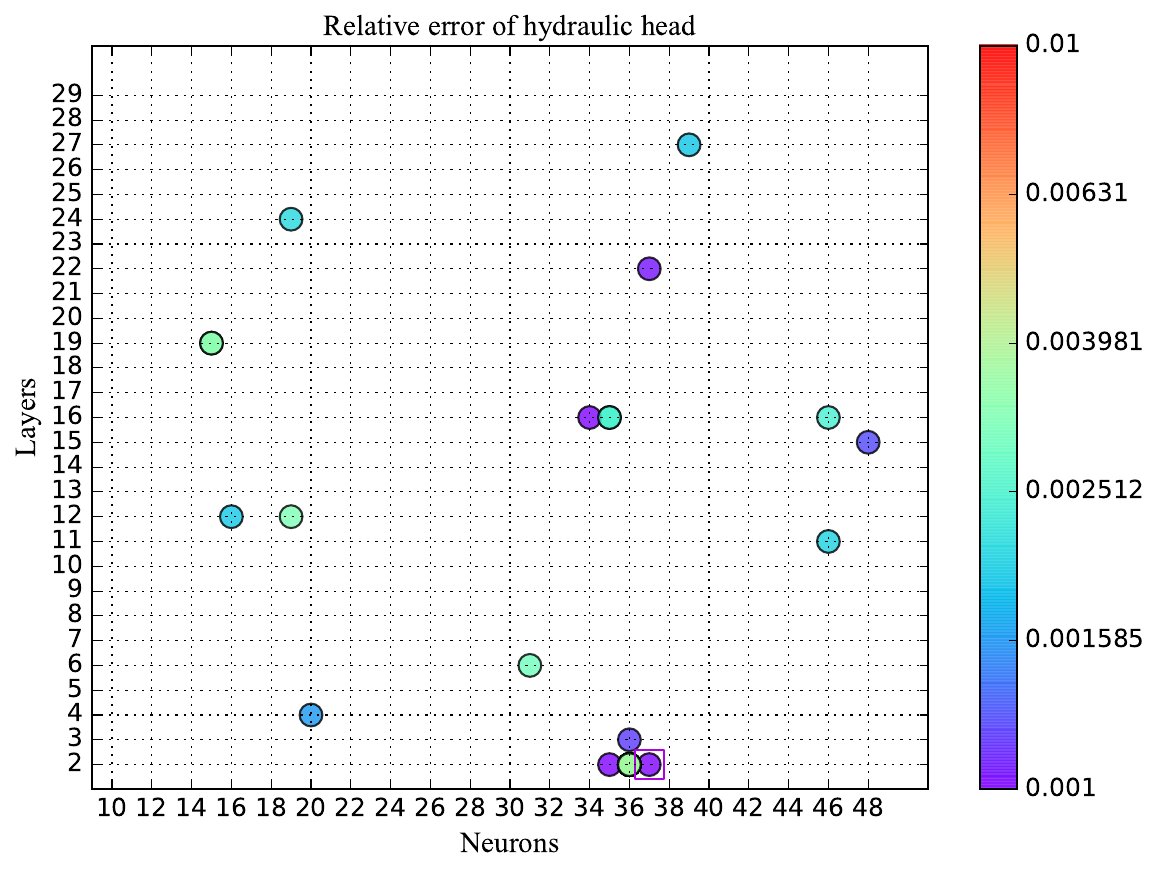}   
		\caption{}
	\end{subfigure}%
	\vspace{0.2cm}
	\begin{subfigure}[b]{7.0cm}
		\centering\includegraphics[height=6cm,width=7.5cm]{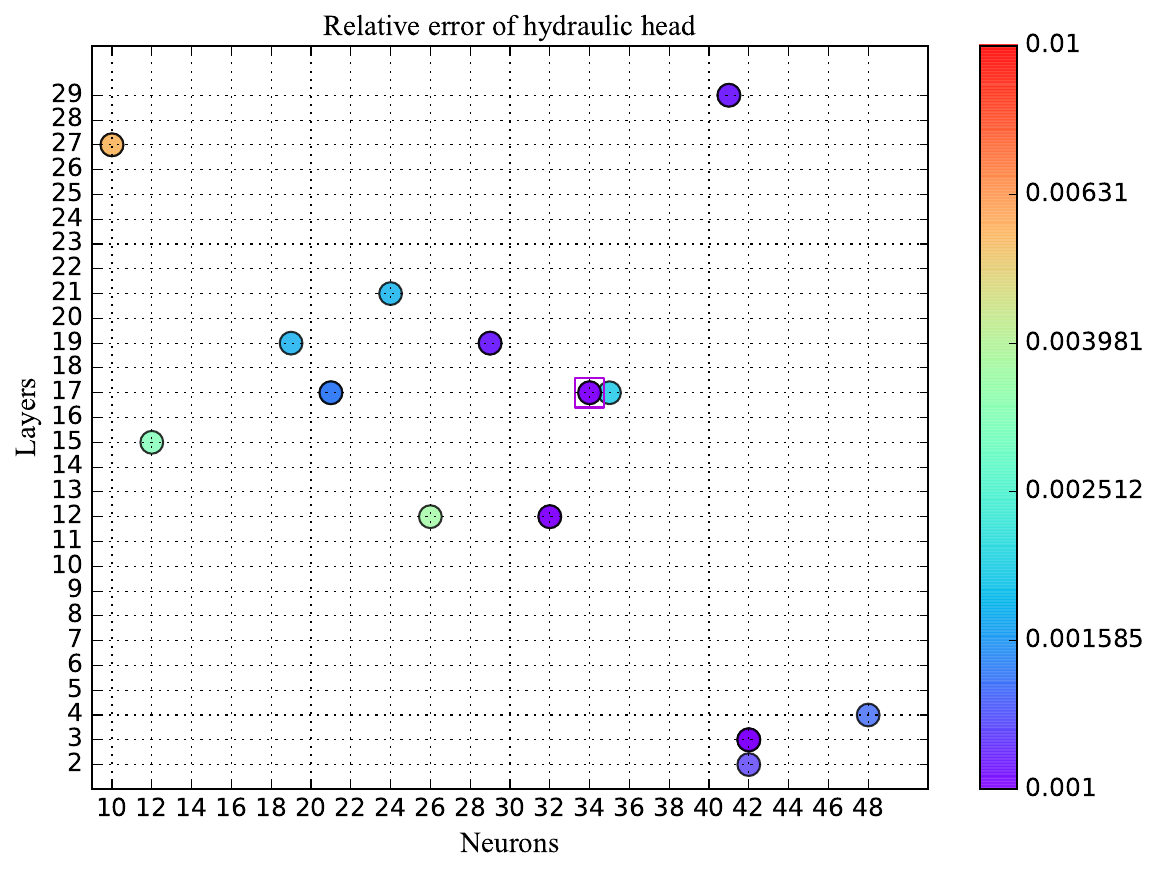}   
		\caption{}
	\end{subfigure}%
	\hspace{0.1cm}
	\begin{subfigure}[b]{7.0cm}
		\centering\includegraphics[height=6cm,width=7.5cm]{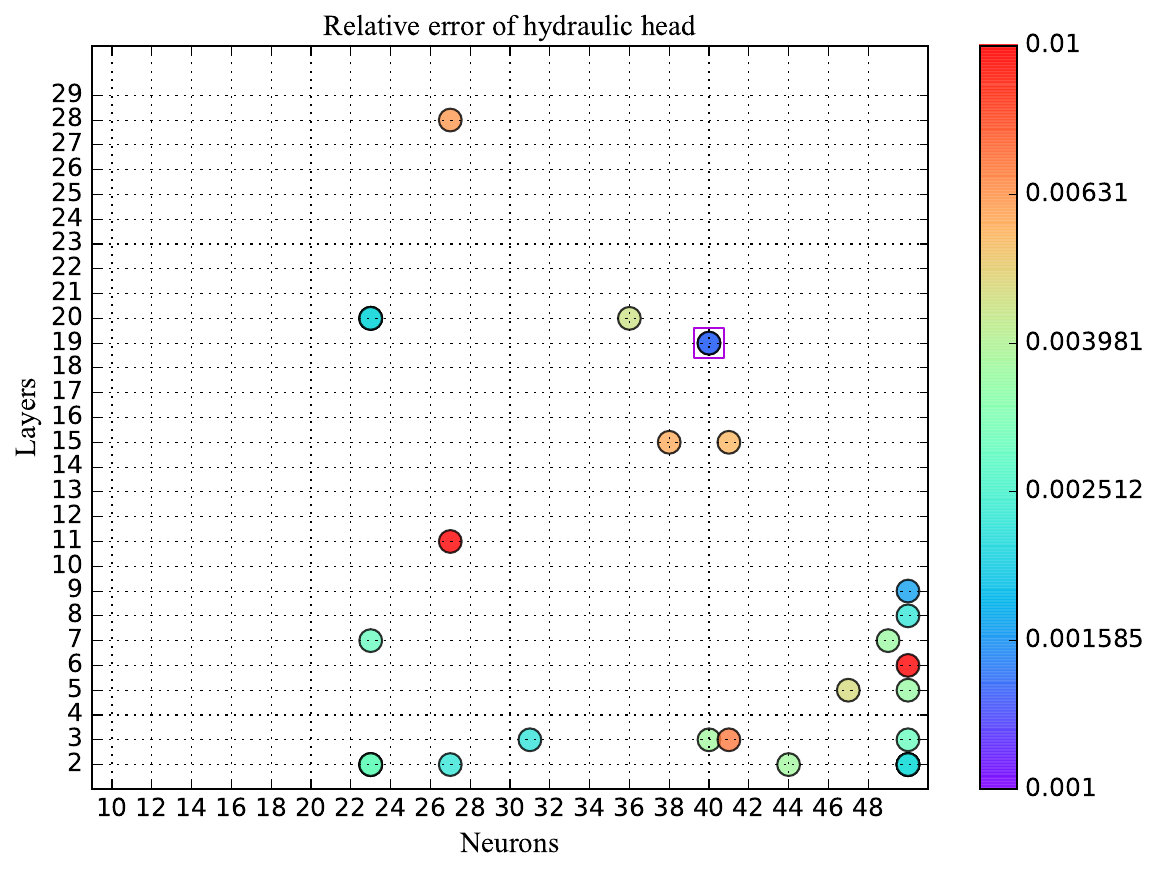}   
		\caption{}
	\end{subfigure}%
	\caption{Neural network configuration search results with $\left(a\right)$.randomization search method;$\left(b\right)$.Bayesian optimization;$\left(c\right)$.Hyperband optimization;$\left(d\right)$.Jaya optimization}
	\label{fig:HyperOpt}
\end{figure}

The time required for each method and the search accuracy are shown in the table \ref{tab:Table11}:
\begin{table}[H]
	\captionsetup{width=0.55\columnwidth}
	\caption{Hyper-parameters search results with different algorithms}
	\vspace{-0.3cm}
	\centering 
	\resizebox{0.4\columnwidth}{!}{
		\begin{tabular}{c|c|c} 
			\toprule 
			\toprule 
			Algorithms & Time & Relative error\\ 
			\midrule 
			RSM  & 1830s & 0.00051\\
			\midrule
			Bayesian  &  1395s & 0.00032\\ 
			\midrule
			Hyperband  & 1449s & 0.00058\\ 
			\midrule
			Jaya   &  1757s & 0.00139\\ 
			\bottomrule 
		\end{tabular}
	}
	\label{tab:Table11}
\end{table}

From this we can conclude that for our model, the Bayesian method gives the best accuracy in the shortest possible time. Therefore, in the following calculations, we will take the Bayesian algorithm.

It should be noted that, due to the limited numbers of search, the optimal solution searched by the algorithm is not necessarily the best and will gradually approach the optimal configuration as the number of searches increases, but our experiments show that the configuration searched by us can still achieve a particularly good accuracy even when the number of search is small.

We continue the screening of 2D and 3D neural network configurations using a Bayesian approach, with the following results:

\begin{figure}[H]
	\captionsetup{width=0.9\columnwidth}
	\includegraphics[height=9cm]{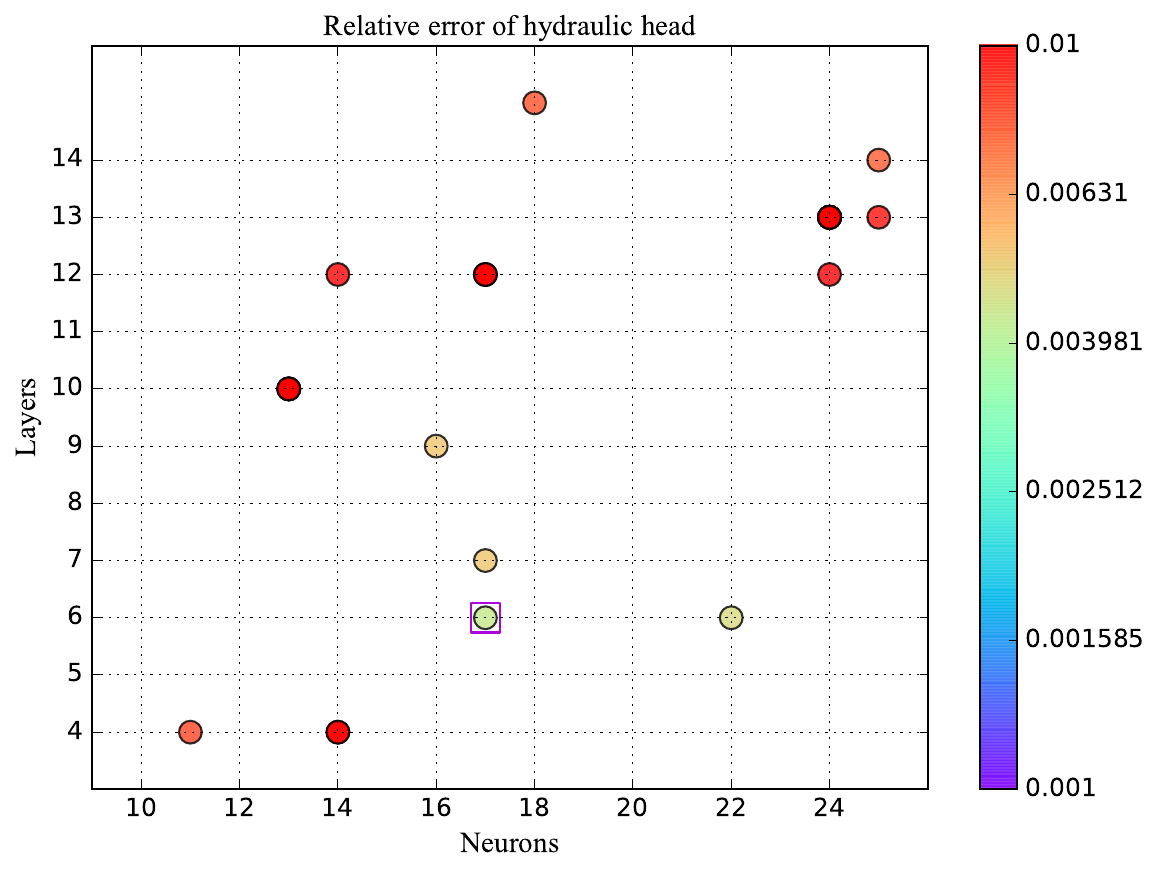}
	\centering
	\caption{Neural network configuration search results of Bayesian optimization in two dimension}
	\label{fig:bo2d}
\end{figure}

\begin{figure}[H]
	\captionsetup{width=0.9\columnwidth}
	\includegraphics[height=9cm]{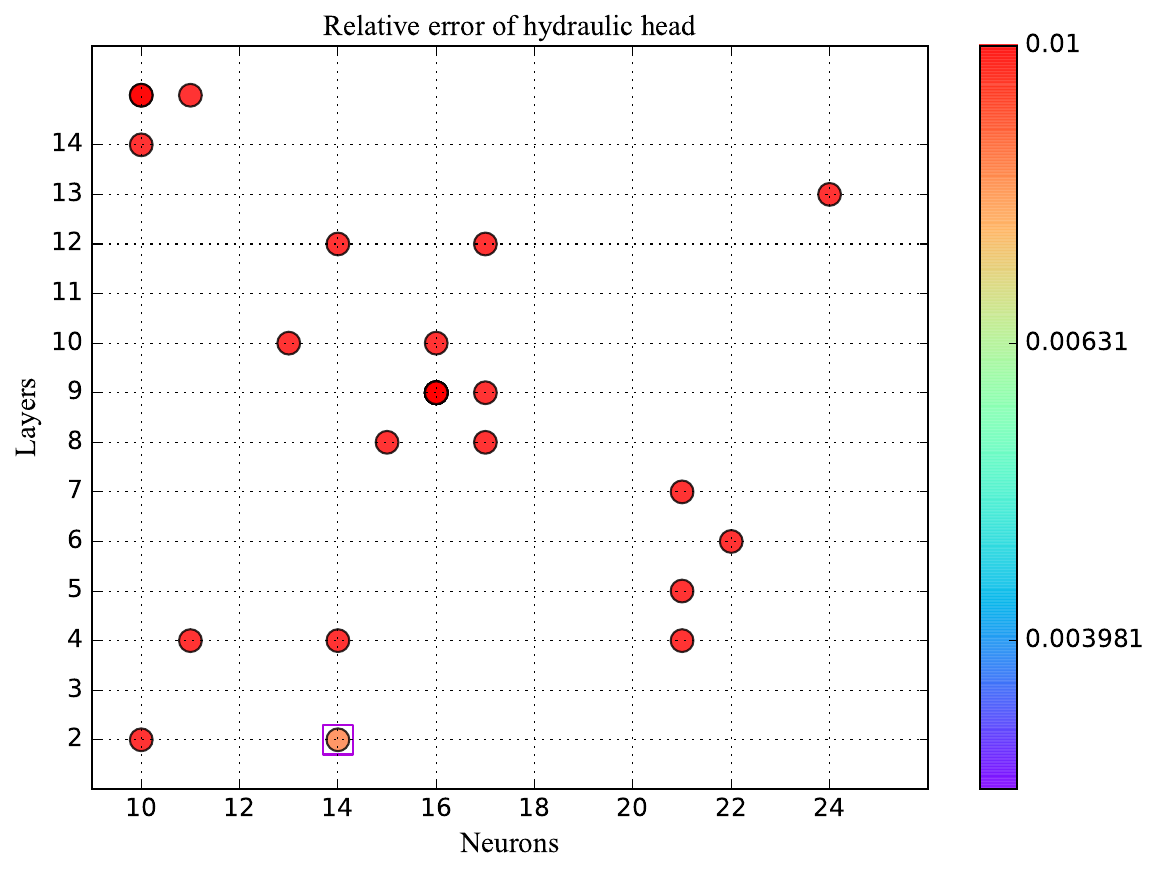}
	\centering
	\caption{Neural network configuration search results of Bayesian optimization in three dimension}
	\label{fig:bo3d}
\end{figure}

The optimal configuration obtained after screening is shown in the following table:

\begin{table}[H]
	\captionsetup{width=0.85\columnwidth}
	\caption{Neural architecture search results with Bayesian optimization}
	\vspace{-0.3cm}
	\centering 
	\resizebox{0.4\columnwidth}{!}{
		\begin{tabular}{c|c|c}
			\toprule 
			\toprule 
			Dimension & Layer & Neurons\\
			\midrule
			1D  & 2 & 37\\ 
			\midrule
			2D  &  6 & 17\\ 
			\midrule
			3D  & 2 & 14\\ 
			\bottomrule 
		\end{tabular}
	}
	\label{tab:Table12}
\end{table}

These neural network configurations will be used as input for the next numerical tests.

\subsection{Finite difference method(FDM)}
\label{subsection 4.4:FDM}
In this section, the Finite difference methods(FDM) for solving the modified governing equation is introduced, which will be used as comparisons for later model evaluation. We also tried FEM for solving this groundwater flow in heterogeneous porous media, the numerical results are rather bad, especially with the increasing of $\sigma^2$ in the heterogeneous random hydraulic conductivity. Instead, FDM results is more advantageous. Finite difference methods(FDM) is a type of the numerical method of differential equations seeks approximate solutions to differential equations by approximating the derivative by finite differences. When the target region is divided approximately closely, the more accurate the approximate solution obtained with the finite difference method will be, although the corresponding computation time will be considerably higher. The modified Darcy equation is solved with the following FDM scheme \cite{li2017numerical}:

For 1D:
\begin{equation}\label{eq:FDM1}
\begin{split}
K(x_i-\frac{\Delta x}{2})\hat{h}_{i-1}-[K(x_i+\frac{\Delta x}{2})+K(x_i-\frac{\Delta x}{2})]\hat{h}_i+K(x_i+\frac{\Delta x}{2})\hat{h}_{i+1}=\Delta x^2 f_i,
\end{split}
\end{equation}
where $f_i$ represents the data $f$ at the collation points $x_i$ and $\hat{h}_i$ is the value of the numerical solution at the same collation points.

For 2D:
\begin{equation}\label{eq:FDM2}
\begin{split}
A_{i,j} \hat{h}_{i-1,j}+B_{i,j} \hat{h}_{i,j-1}+C_{i,j} \hat{h}_{i,j}+D_{i,j} \hat{h}_{i+1,j}+E_{i,j} \hat{h}_{i,j+1}=f_{i,j},
\end{split}
\end{equation}
where $f_{i,j}$ represents the data $f$ at the grid points $(x_i,y_j)$ and $\hat{h}_{i,j}$ is the value of the numerical solution at the same grid points. In formula~\eqref{eq:FDM2}, we have used the following notations:

\begin{equation}\label{eq:FDMparameter2}
\left\{
\begin{array}{lr}
A_{i,j}:=\frac{1}{(\Delta x)^2}K(x_i-\frac{\Delta x}{2},y_j), &  \\
B_{i,j}:=\frac{1}{(\Delta y)^2}K(x_i,y_i-\frac{\Delta y}{2}), &  \\
D_{i,j}:=\frac{1}{(\Delta z)^2}K(x_i+\frac{\Delta x}{2},y_j), &  \\ 
E_{i,j}:=\frac{1}{(\Delta x)^2}K(x_i,y_j+\frac{\Delta y}{2}), &  \\
C_{i,j}:= -[A_{i,j}+B_{i,j}+D_{i,j}+E_{i,j}].
\end{array}
\right.
\end{equation}

For 3D:
\begin{equation}\label{eq:FDM3}
\begin{split}
&A_{i,j,k} \hat{h}_{i-1,j,k}+B_{i,j,k} \hat{h}_{i,j-1,k}+C_{i,j,k} \hat{h}_{i,j,k-1}+D_{i,j,k} \hat{h}_{i,j,k}\\
&+E_{i,j,k} \hat{h}_{i+1,j,k}+F_{i,j,k} \hat{h}_{i,j+1,k}+G_{i,j,k}\hat{h}_{i,j,k+1}=f_{i,j,k},
\end{split}
\end{equation}
where $f_{i,j,k}$ represents the data $f$ at the grid points $(x_i,y_j,z_k)$ and $\hat{h}_{i,j,k}$ is the value of the numerical solution at the same grid points. In formula~\eqref{eq:FDM3}, we have used the following notations:

\begin{equation}\label{eq:FDMparameter3}
\left\{
\begin{array}{lr}
A_{i,j,k}:=\frac{1}{(\Delta x)^2}K(x_i-\frac{\Delta x}{2},y_j,z_k), &  \\
B_{i,j,k}:=\frac{1}{(\Delta y)^2}K(x_i,y_i-\frac{\Delta y}{2},z_k), &  \\
C_{i,j,k}:=\frac{1}{(\Delta z)^2}K(x_i,y_j,z_k-\frac{\Delta z}{2}), &  \\ 
E_{i,j,k}:=\frac{1}{(\Delta x)^2}K(x_i+\frac{\Delta x}{2},y_j,z_k), &  \\
F_{i,j,k}:=\frac{1}{(\Delta y)^2}K(x_i,y_i+\frac{\Delta y}{2},z_k), &  \\
G_{i,j,k}:=\frac{1}{(\Delta z)^2}K(x_i,y_j,z_k+\frac{\Delta z}{2}), &  \\ 
D_{i,j,k}:= -[A_{i,j,k}+B_{i,j,k}+C_{i,j,k}+E_{i,j,k}+F_{i,j,k}+G_{i,j,k}].
\end{array}
\right.
\end{equation}

Applying above Equations, we can write matlab programs to obtain approximate solutions.

\subsection{Model validation in different dimensional}
\label{subsetion 4.5: numerival tests}
Now we solve the modified Darcy Equation~\eqref{eq:darcy with f} by the DCM method with PINNs configurations searched from modified NAS model, which is constructed in section ~\ref{section 3:neural networks}, and the FDM method in section~\ref{subsection 4.4:FDM} is added in the comparison, to test the feasibility of our approach. We fix $\sigma_2$ to 0.1, and $N$ to 1000, and observe the hydraulic head and velocity distribution inside the physical domain. In order to reflect the adaptability of our algorithm, we construct a new set of manufactured solutions, the specific form and source term are detailed in~\ref{appendix b}.

\subsubsection{One dimensional case model validation}
\label{subsubsection 4.5.1:1D}
The manufactured solution we used for 1D case is Equation~\eqref{eq:u1}. We validate the two methods by comparing hydraulic head in the $x$-direction over the interval $[0,25]$.

\begin{figure}[H]
	\captionsetup{width=0.9\columnwidth}
	\includegraphics[height=9cm]{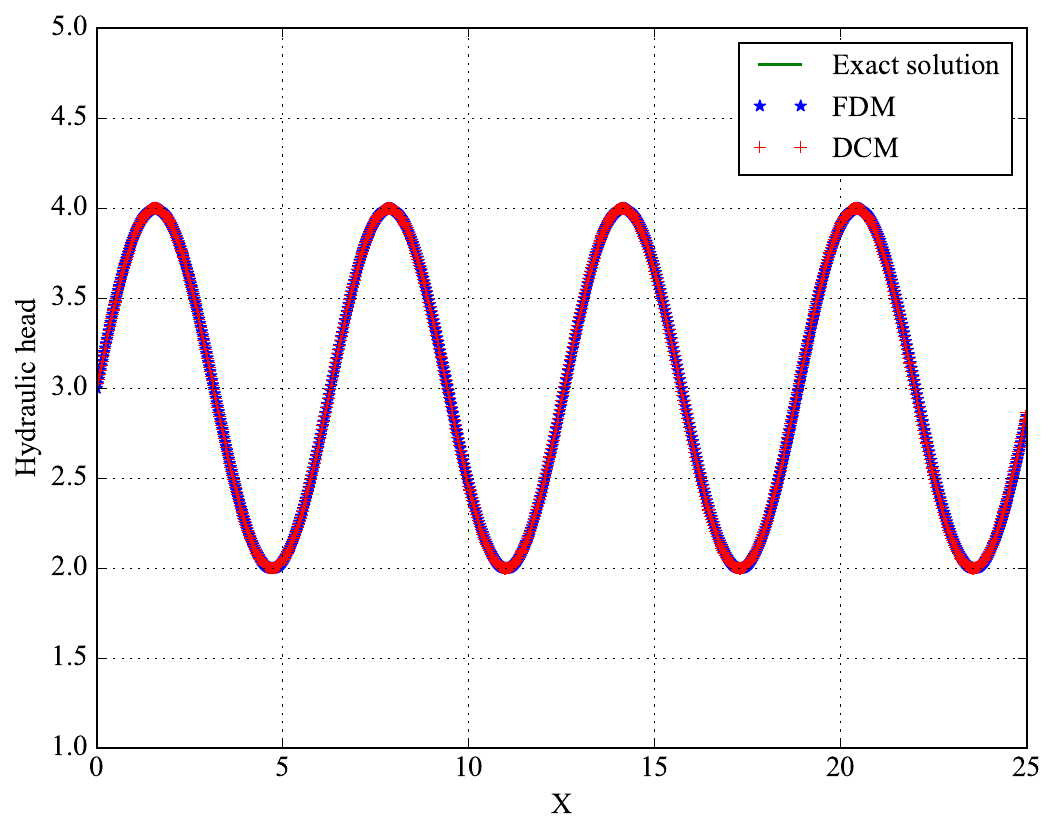}
	\centering
	\caption{Hydraulic head calculated by FDM and DCM methods in one dimension}
	\label{fig:fp1d}
\end{figure}

It is clear from Figure~\ref{fig:fp1d} that, in one dimension,  both methods work particularly well for our approximation of hydraulic head for partial differential equations.

\subsubsection{Two dimensional case model validation}
\label{subsubsection 4.5.2:2D}
The manufactured solution used for 2D case is Equation~\eqref{eq:u21}. We validate the two methods by comparing hydraulic head and velocity calculated in the $x$-direction, along midlines $y=10$, over the interval $[0,20]\times[0,20]$.

\begin{figure}[H]
	\captionsetup{width=0.9\columnwidth}
	\includegraphics[height=9cm]{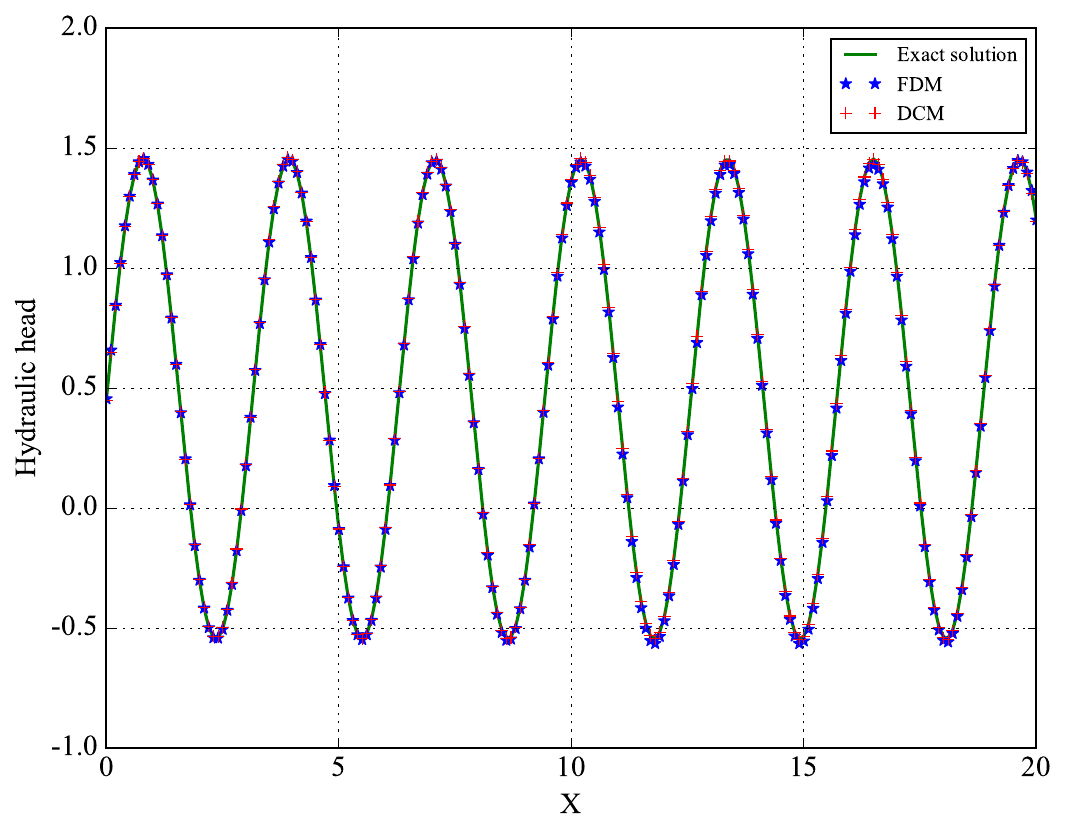}
	\centering
	\caption{Hydraulic head along $y=10$ calculated by FDM and DCM methods in two dimension}
	\label{fig:fp2d}
\end{figure}
\begin{figure}[H]
	\captionsetup{width=0.9\columnwidth}
	\includegraphics[height=9cm]{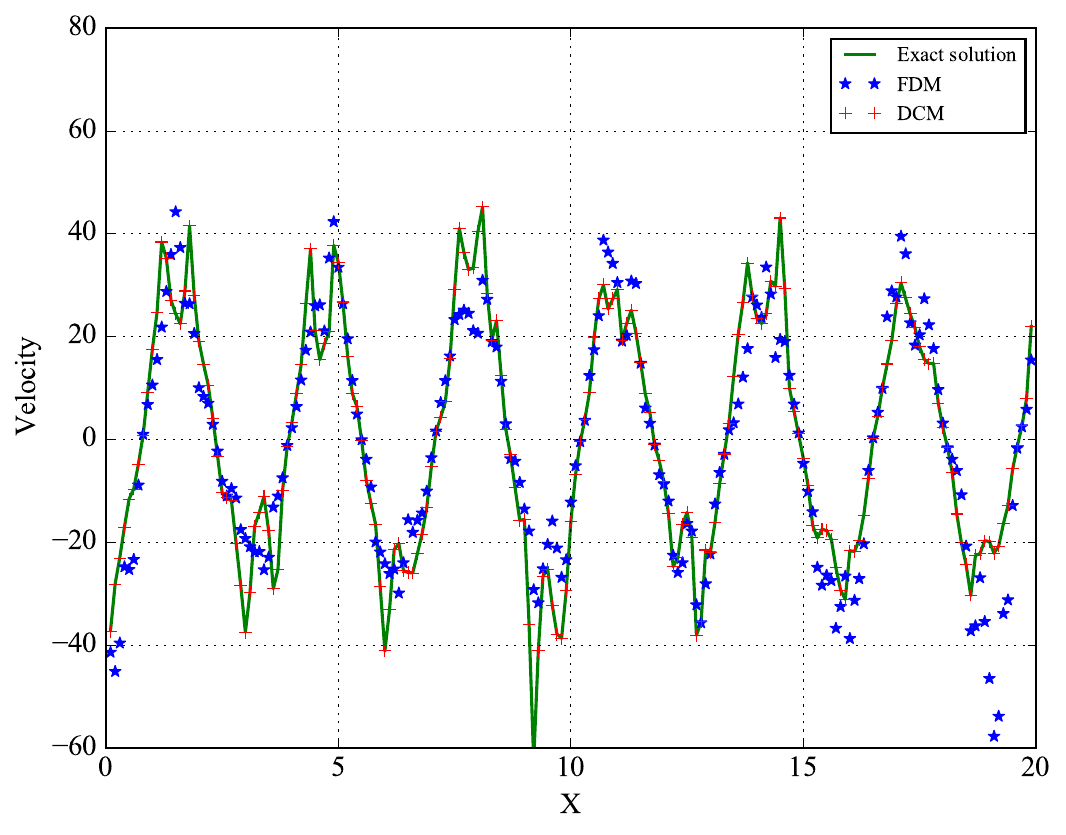}
	\centering
	\caption{Velocity in x-direction along $y=10$ calculated by FDM and DCM methods in two dimension}
	\label{fig:fpv2d}
\end{figure}

Seen from Figure~\ref{fig:fp2d}, for this two dimensional case, both methods match well with the exact solution as to compute the hydraulic head. But as shown in Figure~\ref{fig:fpv2d}, the FDM method's simulation results of $v_x$ are a bit off, while it still agrees well for results predicted by the proposed method.

\subsubsection{Three dimensional case model validation}
\label{subsubsection 4.5.3:3D}
The manufactured solution we used for 3D case is in Equation~\eqref{eq:u31}. We validate the two methods by comparing hydraulic head and velocity in the $x$-direction, along along the $y=1, z=0.5$, over the interval $[0,5]\times[0,2]\times[0,1]$.

The situation is quite different in 3D case, with the mesh density, the results obtained with the FDM method are far diverting from the actual situation and do not fit at all, while the results predicted with the DCM method are still in good agreement of exact solution. The specific accuracy and calculation times for the two methods are shown in Tables~\ref{tab:Table13} and \ref{tab:Table14}:

\begin{table}[H] 
	\captionsetup{width=0.85\columnwidth}
	\caption{Solving darcy equation with FDM} 
	\vspace{-0.3cm}
	\centering 
	\resizebox{0.55\columnwidth}{!}{
		\begin{tabular}{l|c|c|c} 
			\toprule 
			\toprule
			\diagbox{Results}{Dimension}&1&2&3\\ 
			\midrule 
			Relative error&6.443e-5&0.017&5.711\\ 
			\midrule
			Time&2.8s&180s&1245s\\ 
			\bottomrule 
		\end{tabular}
	}
	\label{tab:Table13}
\end{table}

\begin{table}[H]   
	\captionsetup{width=0.85\columnwidth}
	\caption{Solving darcy equation with DCM} 
	\vspace{-0.3cm}
	\centering 
	\resizebox{0.8\columnwidth}{!}{
		\begin{tabular}{l|c|c|c|c|c|c} 
			\toprule 
			\toprule 
			\multirow{2}*{\diagbox{Results}{Dimension}}&\multicolumn{2}{c|}{1}&\multicolumn{2}{c|}{2}&\multicolumn{2}{c}{3}\\
			\cline{2-7}
			~ &without TL&with TL&without TL&with TL&without TL&with TL\\   
			\midrule
			Relative error&1.369e-4&1.195e-4&4.262e-3&4.405e-3&8.915e-3&8.864e-3\\ 
			\midrule
			Time&14.3s&1.5s&108.5s&9.8s&32.2s&1.9s\\ 
			\bottomrule 
		\end{tabular}
	}
	\label{tab:Table14}
\end{table}

\begin{figure}[H]
	\captionsetup{width=0.9\columnwidth}
	\includegraphics[height=9cm]{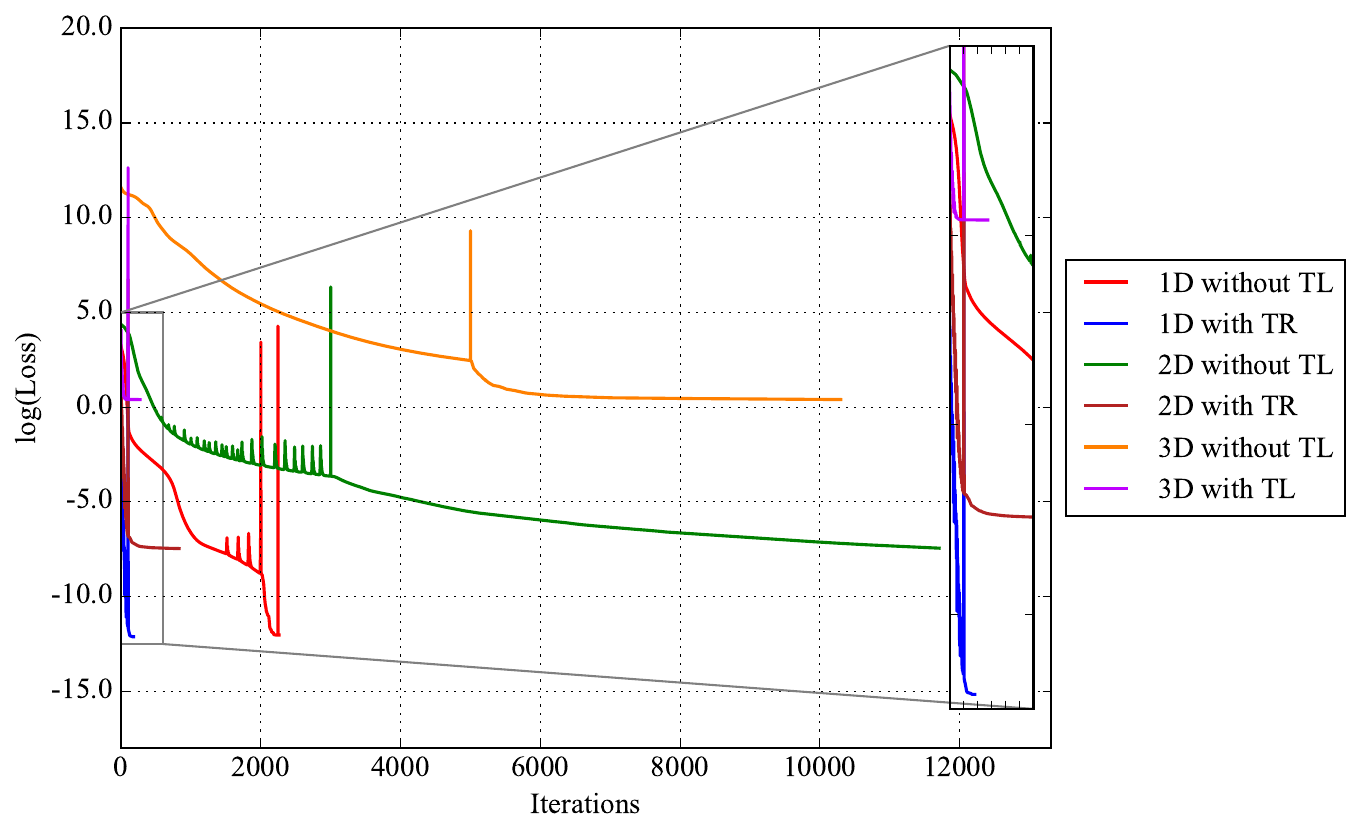}
	\centering
	\caption{Logarithm loss function with and without transfer learning}
	\label{fig:lossgauss}
\end{figure}

The results showed that the higher the dimensionality, the more pronounced the difference between the effects of the two methods. It's predictable, that, while the results with FDM method become more accurate as the mesh is more finely divided, this takes an exceptionally long time and has absolutely no value for use in practical applications. However for the proposed method, with transfer learning, the calculation time is less reduced, whose computational even increased in parallel with FDM for one dimensional case. For higher dimensions, the computational time with Transfer learning is also greatly reduced, and with accuracy even slightly improved. 

Then, the contour for hydraulic head and velocity for the three dimensional case with the proposed method is shown below:
\begin{figure}[H]
	\captionsetup{width=0.85\columnwidth}
	\centering
	\begin{subfigure}[b]{6.0cm}
		\centering\includegraphics[height=6cm,width=6.0cm]{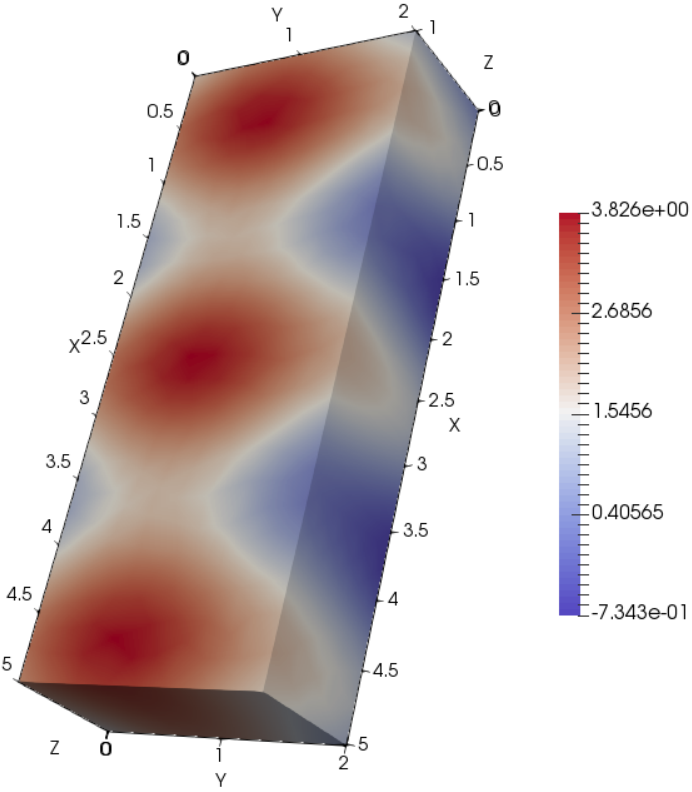}   
		\caption{}
	\end{subfigure}%
	\hspace{0.5cm}
	\begin{subfigure}[b]{6.0cm}
		\centering\includegraphics[height=6cm,width=6.0cm]{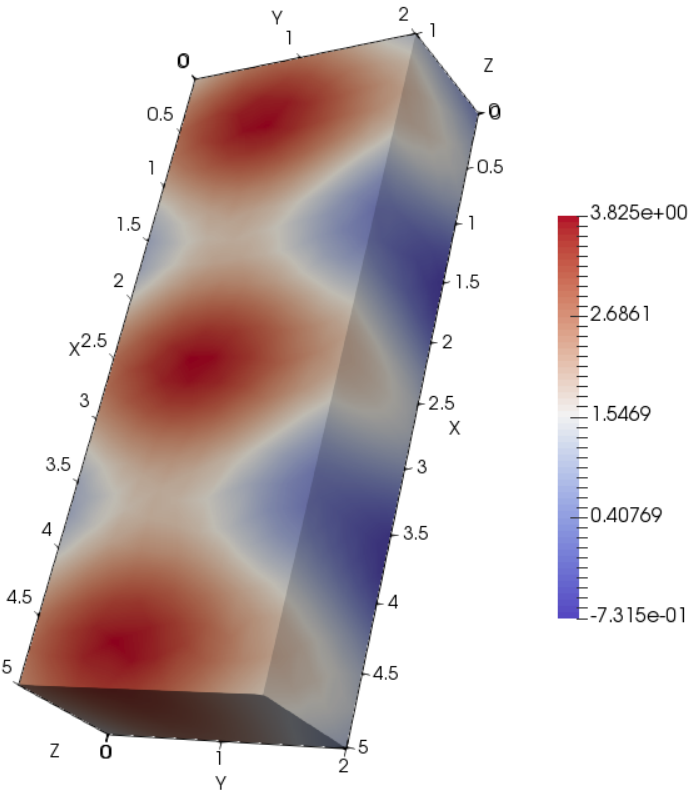}   
		\caption{}
	\end{subfigure}%
	\caption{Three dimensional hydraulic head when $\sigma^2=0.1$, $N=1000$ with Gaussian correlation $\left(a\right)$ exact solution and $\left(b\right)$ predict solution}
	\label{fig:3Dgaussu}
\end{figure}
It demonstrates that the proposed method agrees well with the adopted exact solution and the Gaussian random field with Gaussian correlation function fits really well with the PINNs based model. 
\begin{figure}[H]
	\captionsetup{width=0.85\columnwidth}
	\centering
	\begin{subfigure}[b]{6.0cm}
		\centering\includegraphics[height=6cm,width=6.0cm]{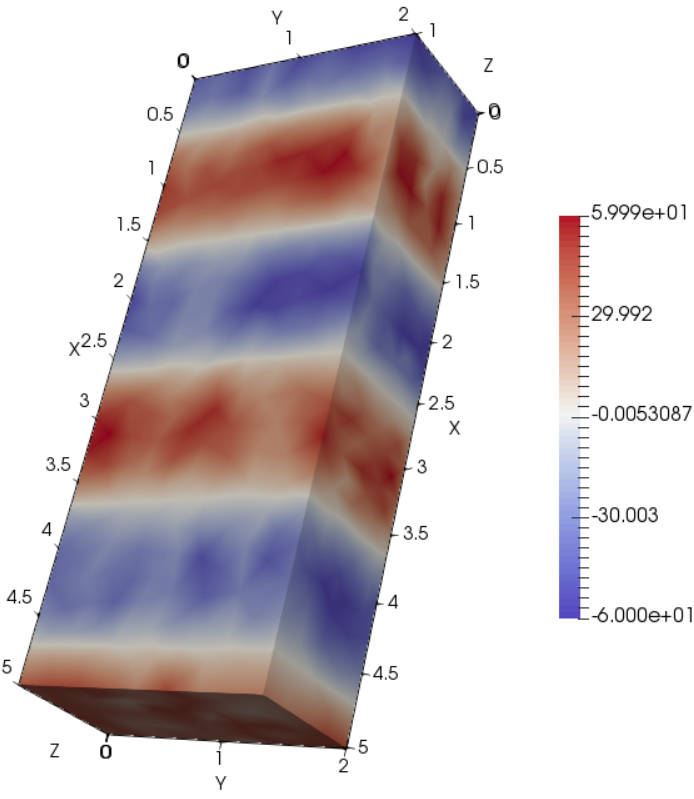}   
		\caption{}
	\end{subfigure}%
	\hspace{0.5cm}
	\begin{subfigure}[b]{6.0cm}
		\centering\includegraphics[height=6cm,width=6.0cm]{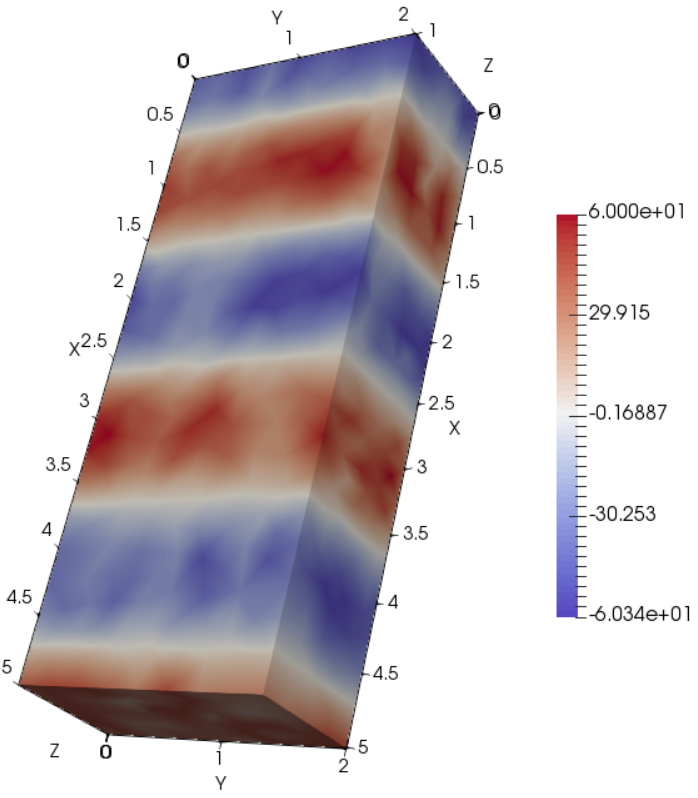}   
		\caption{}
	\end{subfigure}%
	\caption{Three dimensional velocity when $\sigma^2=0.1$, $N=1000$ with Gaussian correlation $\left(a\right)$ exact solution and $\left(b\right)$ predict solution}
	\label{fig:3Dgaussv}
\end{figure}

The isosurface diagram of the predicted head and velocity in the 3D case is shown below:
\begin{figure}[H]
	\captionsetup{width=0.85\columnwidth}
	\centering
	\begin{subfigure}[b]{6.0cm}
		\centering\includegraphics[height=6cm,width=6.0cm]{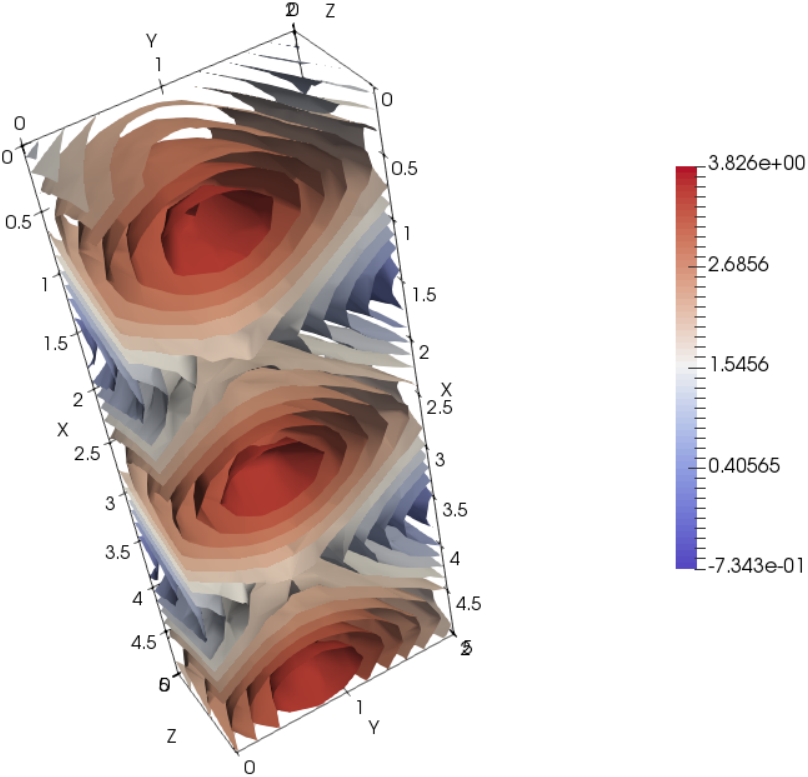}   
		\caption{}
	\end{subfigure}%
	\hspace{0.5cm}
	\begin{subfigure}[b]{6.0cm}
		\centering\includegraphics[height=6cm,width=6.0cm]{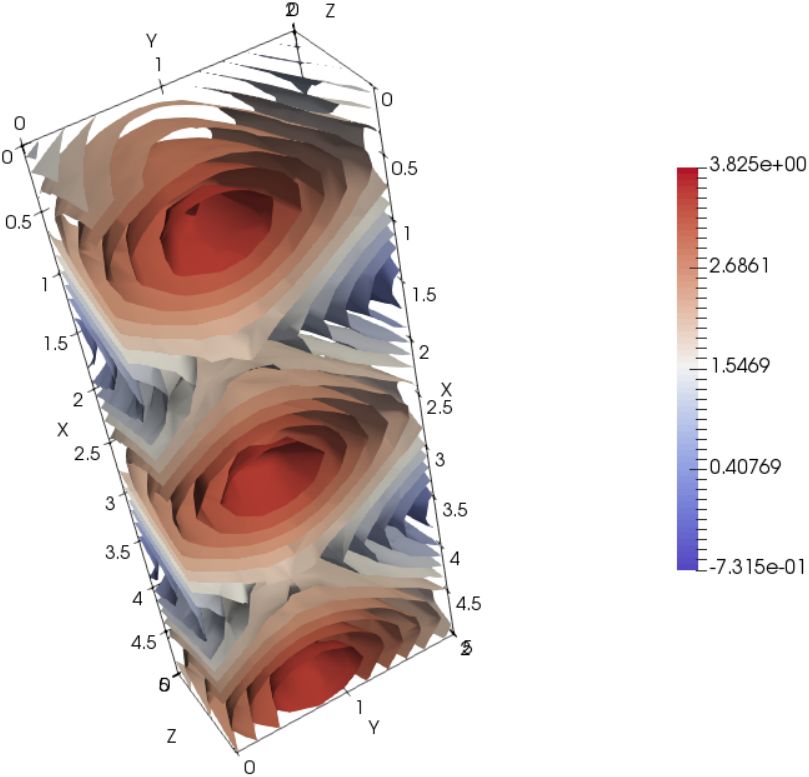}   
		\caption{}
	\end{subfigure}%
	\caption{Three dimensional isosurface diagram of hydraulic head when $\sigma^2=0.1$, $N=1000$ with Gaussian correlation $\left(a\right)$ exact solution and $\left(b\right)$ predict solution}
	\label{fig:3Dgaussuc}
\end{figure}

\begin{figure}[H]
	\captionsetup{width=0.85\columnwidth}
	\centering
	\begin{subfigure}[b]{6.0cm}
		\centering\includegraphics[height=6cm,width=6.0cm]{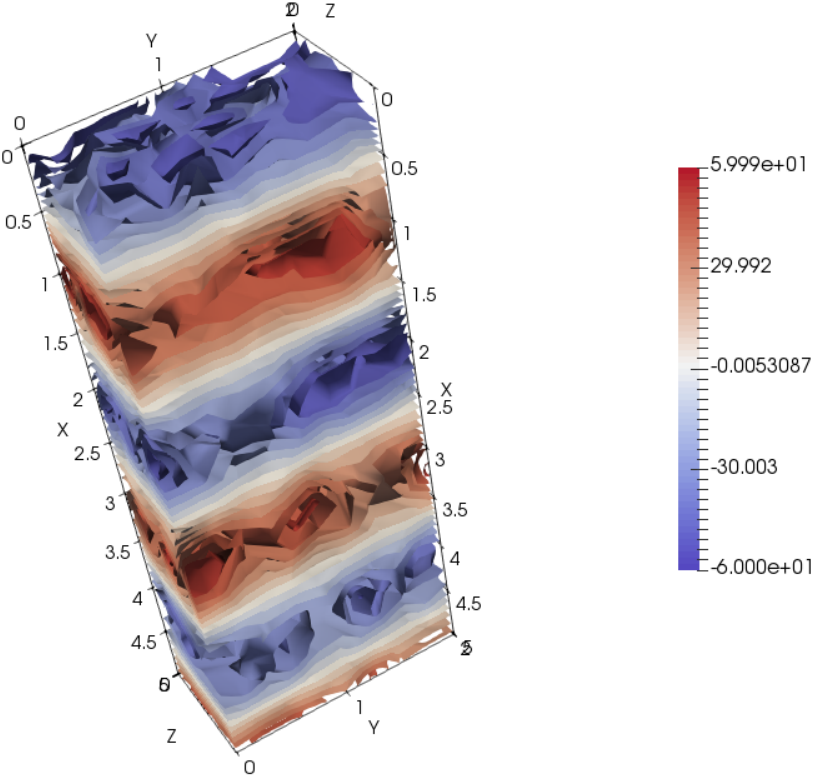}   
		\caption{}
	\end{subfigure}%
	\hspace{0.5cm}
	\begin{subfigure}[b]{6.0cm}
		\centering\includegraphics[height=6cm,width=6.0cm]{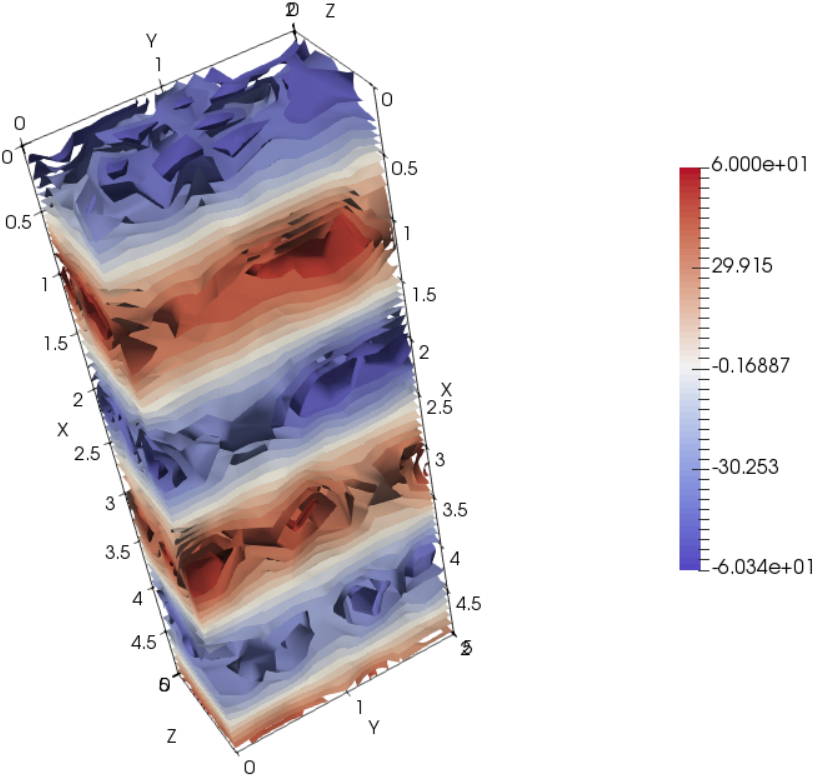}   
		\caption{}
	\end{subfigure}%
	\caption{Three dimensional isosurface diagram of velocity when $\sigma^2=0.1$, $N=1000$ with Gaussian correlation $\left(a\right)$ exact solution and $\left(b\right)$ predict solution}
	\label{fig:3Dgaussvc}
\end{figure}	

We can also see from the Figures~\ref{fig:3Dgaussu}-\ref{fig:3Dgaussvc} that our model performs the task of fitting the modified Darcy equation for the three-dimensional case very well with higher precision and less computational time compared with FDM.

\section{Conclusion}
\label{section 5:conclusion}

In this paper, we mainly describe the composition of the physics-informed modified NAS model fitted to the deep collocation method with sensitivity analysis and transfer learning. The random spectral method in closed form is adopted for the generation of log-normal hydraulic conductivity fields. It was calibrated computationally to generate the heterogeneous hydraulic conductivity field with Gaussian correlation function, and by sensitivity analysis and comparing hyperparameter selection optimizers, Bayesian algorithm was identified as the most suitable optimizer for the search strategy in NAS model. Further, the three dimensional benchmark for groundwater flow in highly heterogeneous aquifers are constructed and used as the verification of the proposed method. By introducing transfer learning, the training time is greatly reduced and accuracy improved. 

Since no feature engineering is involved in our physics-informed model, the modified NAS based deep collocation method can be considered as truly automated "meshfree" method and can be used to approximate any continuous function, which is verified to be very suitable for the analysis of groundwater flow problems. The automated deep collocation method is very simple in implementation, which can be further applied in a wide variety of engineering problems.

From our numerical experiments, it is clearly demonstrated that the modified NAS based deep collocation method with randomly distributed collocations and a physics-informed neural networks perform very well with a combined MSE loss function minimized by the two step Adam and L-BFGS optimizer. In order to achieve the same accuracy, the FDM method requires a tighter grid, and the computation time, although only takes 2.8s in one dimensional with FDM method, time required with our method with transfer learning is much faster and much more accurate in higher dimensions than with the FDM method, especially in the case of 3D. This lends more credence to extend the application of the proposed approaches to more complex practical situations. Most importantly, once those deep neural networks are trained, they can be used to evaluate the solution at any desired points with minimal additional computation time.

\section*{Acknowledgement}
\label{Acknowledgement}
The authors would like to thank Dawei Liang for his help regarding code development.

\newpage
\appendix
\section{Choices of random variables $\bm{k}$}\label{appendix a}
From section~\ref{subsection 2.2:generate the hydraulic conductivity fields}, we can derive the following probability density function(PDF) $p(\bm{k})$ for exponential and Gaussian correlations:
\begin{equation}\label{eq:exp pdf}
p(\bm{k})= \lambda^d \frac{\Gamma[\frac{d+1}{2}]}{(\pi(1+(\bm{k}\lambda)^2))^{\frac{d+1}{2}}}
\end{equation}
\begin{equation}\label{eq:gauss pdf}
p(\bm{k})= \pi^{d/2}\lambda^d e^{-(\pi\bm{k}\lambda)^2}
\end{equation}
where $\Gamma$ means the gamma function, with $\Gamma(n)=(n-1)!$ and $\Gamma(n+1/2)=\frac{\sqrt{\pi}}{2^n}(2n-1)!!$ for $n=1,2,3...$

For Gaussian correlations in three dimensional case, the PDF of $\bm{k}$ can be transformed into the following form:
\begin{equation}\label{eq:gauss pdf2}
p(\bm{k})= (\sqrt{\pi}\lambda_1 e^{-(\pi k_1\lambda_1)^2})(\sqrt{\pi}\lambda_2 e^{-(\pi k_2\lambda_2)^2})(\sqrt{\pi}\lambda_3 e^{-(\pi k_3\lambda_3)^2})
\end{equation}

Each part of equation~\eqref{eq:gauss pdf2} can be considered as a normal distribution with $\mu=0$ and $\sigma=\frac{1}{\sqrt{2}\pi\lambda_i}$. Thus, the random vector $\bm{k}$ can be simulated by the formula $\bm{k}=\frac{1}{\sqrt{2}\pi}(\mu_1/\lambda_1,\mu_2/\lambda_2,\mu_3/\lambda_3)$, where $\mu_i$ are independent standard Gaussian random variables.

The $\bm{k}$ in the two-dimensional case can be derived by analogy from the above inference as $\bm{k}=\frac{1}{\sqrt{2}\pi}(\mu_1/\lambda_1,\mu_2/\lambda_2)$.

For exponential correlations in two dimensional case, the PDF of $\bm{k}$ can be transformed into the following form:

\begin{equation}\label{eq:exp 2D}
p(\bm{k})= \frac{\lambda_1 \lambda_2} {2\pi\big (1+(k_1\lambda_1)^2+(k_2\lambda_2)^2\big )^{\frac{3}{2}}}
\end{equation}

A possible solution to calculate the cumulative distribution function(CDF) is the transformation from Cartesian into polar coordinates, i.e.
a representation like:
\begin{equation}\label{eq:exp k2}
\begin{split}
k_1=r cos(2\pi \hat{h})/\lambda_1,\\
k_2=r sin(2\pi \hat{h})/\lambda_2.
\end{split}
\end{equation}

Here $\hat{h}$ is a uniformly distributed random variable and $r$ is a
random variable distribute according to
\begin{equation}\label{eq:exp E2}
p_r(r)=\frac{2\pi r p(r)}{\lambda_1\lambda_2}.
\end{equation}

Integrating the equation~\eqref{eq:exp E2} yields the CDF

\begin{equation}\label{eq:exp cdf2}
\begin{split}
F(r)&=\int_{-\infty}^{r} p_r(r) \mathrm{d}r\\
&=\int_{-\infty}^{r} \frac{r}{(1+r^2)^{3/2}} \mathrm{d}r\\
&=-\frac{1}{(1+r^2)^{1/2}}\bigg|_{-\infty}^{r}\\
&=-\frac{1}{(1+r^2)^{1/2}}
\end{split}
\end{equation}

Choose a uniformly distributed random variable $\mu$, the inverse function $k=F^{-1}(\mu)$ can be obtained

\begin{equation}\label{eq:exp r}
\begin{split}
\mu&=-\frac{1}{(1+r^2)^{1/2}}\\
\mu^2&=\frac{1}{1+r^2}\\
r&=\sqrt{1/\mu^2-1}
\end{split}
\end{equation}

Substitute equation~\eqref{eq:exp k2} into equation~\eqref{eq:exp k2}, we get the $k_1=(1/\mu^2-1)^{1/2}cos(2\pi \hat{h})/\lambda_1$, and $k_2=(1/\mu^2-1)^{1/2}sin(2\pi \hat{h})/\lambda_2$.

For exponential correlations in three dimensional case, the PDF of $\bm{k}$ can be transformed into the following form:

\begin{equation}\label{eq:exp 3D}
p(\bm{k})= \frac{\lambda_1 \lambda_2 \lambda_3} {\pi^2(1+(k_1\lambda_1)^2+(k_2\lambda_2)^2+(k_3\lambda_3)^2)^2}
\end{equation}

A similar procedure can be used, where spherical instead of polar coordinates are used

\begin{equation}
\begin{split}
k_1&=r sin(\theta) cos(2\pi \gamma)/\lambda_1,\\
k_2&=r sin(\theta) sin(2\pi \gamma)/\lambda_2,\\
k_3&=r cos(\theta)/\lambda_3.
\end{split}
\end{equation}

Here $\gamma$ is again a uniformly distribute random variable and $\theta$ is given as

\begin{equation}\label{eq:theta}
\theta = \arccos(1 - 2\xi),
\end{equation}
with $\xi$ being a uniformly distribute random variable. The two random variables were chosen with reference to Weissten's research on generating random points on the surface of a unit sphere \cite{weisstein2002sphere}. The radius $r$ is distributed according to

\begin{equation}\label{eq:exp E3}
p_r(r)=4\pi r^2 p(r).
\end{equation}

The CDF can be calculated as follows,

\begin{equation}\label{eq:exp cdf3d}
\begin{split}
F(r)&=\int_{-\infty}^{r} p_r(r) dr\\
&=\int_{-\infty}^{r} \frac{4 r^2}{\pi(1+r^2)^2} dr\\
&=\frac{2}{\pi}(-\frac{r}{1+r^2}\bigg|_{-\infty}^{r}-\int_{-\infty}^{r} -\frac{1}{1+r^2} dr)\\
&=\frac{2}{\pi}(\arctan(r)-\frac{r}{1+r^2})
\end{split}
\end{equation}

Choose a uniformly distributed random variable $\gamma_1$, $r$ can be obtained by solving the next equation~\eqref{eq:r}:

\begin{equation}\label{eq:r}
\frac{2}{\pi}(\arctan(r)-\frac{r}{1+r^2})=\gamma_1
\end{equation}

\section{Manufactured solutions}\label{appendix b}
For the 1D case, we have selected the following manufactured solution :
\begin{equation}\label{eq:u1}
\hat{h}_{MMS}(x)=3+sin(x), with \; x \in [0,25].
\end{equation}

This leads to the following Dirichlet boundary conditions :
\begin{equation}\label{eq:b1}
\left\{
\begin{array}{lr}
\hat{h}(0)=3, &  \\
\hat{h}(25)=3+sin(25). &  
\end{array}
\right.
\end{equation}

The function $K$ is now given by:
\begin{equation}\label{eq:k1}
K(x)=C_1 exp\bigg (C_2\sum_{i=1}^{N}cos\big (\xi_1 +2\pi(k_{i,1}x+k_{i,2})\big )\bigg ),
\end{equation}
where we use the shorthand notations $C_1=\langle K\rangle exp(-\frac{\sigma^2}{2})$ and $C_2=\sigma\sqrt{\frac{2}{N}}$. And the source term $f$ has the following form:
\begin{equation}\label{eq:f11}
\begin{split}
f(x)=&C_1 exp\bigg (C_2\sum_{i=1}^{N}cos\big (\xi_1 +2\pi (k_{i,1}x+k_{i,2})\big )\bigg )\\
&\cdot \bigg ((-2\pi)C_2k_{i,1}\sum_{i=1}^{N}sin\big (\xi_1 +2\pi (k_{i,1}x+k_{i,2})\big )cos(x)-sin(x)\bigg ).
\end{split}
\end{equation}

For the 2D case, we consider the following smooth manufactured solution :
\begin{equation}\label{eq:u2}
\hat{h}_{MMS}(x,y)=1+sin(2x+y), \quad with\quad x \in [0,20] \quad and\quad  y \in [0,20],
\end{equation}
along with the Dirichlet and Neumann boundary conditions:
\begin{equation}\label{eq:b2}
\left\{
\begin{array}{lr}
\hat{h}(0,y)=1+sin(y), & \forall  y \in [0,20],\\
\hat{h}(20,y)=1+sin(2\times20+y), &  \forall  y \in [0,20],\\
\frac{\partial \hat{h}}{\partial y}(x,0)=cos(2x), & \forall  x \in [0,20],\\
\frac{\partial \hat{h}}{\partial y}(x,20)=cos(2x+20), & \forall  x \in [0,20].\\
\end{array}
\right.
\end{equation}

The function $K$ is now given by:
\begin{equation}\label{eq:k2}
K(x,y)=C_1 exp\bigg (C_2\sum_{i=1}^{N}cos\big (\xi_1 +2\pi(k_{i,1}x+k_{i,2}y)\big )\bigg ),
\end{equation}
where we use the shorthand notations $C_1$ and $C_2$ same as in 1 dimensional case. And the source term $f$ has the following form:
\begin{equation}\label{eq:f22}
\begin{split}
f(x,y)=&2C_1C_2 exp\bigg (C_2\sum_{i=1}^{N}cos\big (\xi_1 +2\pi (k_{i,1}x+k_{i,2}y)\big )\bigg )\\
&\cdot \sum_{i=1}^{N}\bigg (-2\pi k_{i,1}sin\big (\xi_1 +2\pi (k_{i,1}x+k_{i,2})\big )\bigg )cos(2x+y)\\
&-5C_1exp\bigg (C_2\sum_{i=1}^{N}cos\big (\xi_1 +2\pi (k_{i,1}x+k_{i,2}y)\big )\bigg )sin(2x+y)\\
&+ C_1C_2exp\bigg (C_2\sum_{i=1}^{N}cos\big (\xi_1 +2\pi (k_{i,1}x+k_{i,2}y)\big )\bigg )\\
&\cdot \sum_{i=1}^{N}\bigg (-2\pi k_{i,2}sin\big (\xi_1 +2\pi (k_{i,1}x+k_{i,2})\big )\bigg )cos(2x+y).
\end{split}
\end{equation}

An alternative manufactured solution is:
\begin{equation}\label{eq:u21}
\hat{h}_{MMS}(x,y)=1+sin(2x)+sin(y), \quad with\quad x \in [0,20] \quad and\quad  y \in [0,20].
\end{equation}
along with the Dirichlet and Neumann boundary conditions:
\begin{equation}\label{eq:b21}
\left\{
\begin{array}{lr}
\hat{h}(0,y)=1+sin(y), & \forall  y \in [0,20],\\
\hat{h}(20,y)=1+sin(2\times20)+sin(y), &  \forall  y \in [0,20],\\
\frac{\partial \hat{h}}{\partial y}(x,0)=cos(0), & \forall  x \in [0,20],\\
\frac{\partial \hat{h}}{\partial y}(x,20)=cos(20), & \forall  x \in [0,20].\\
\end{array}
\right.
\end{equation}

The source term $f$ has the following form:
\begin{equation}\label{eq:f21}
\begin{split}
f(x,y)=&2C_1C_2 exp\bigg (C_2\sum_{i=1}^{N}cos\big (\xi_1 +2\pi (k_{i,1}x+k_{i,2}y)\big )\bigg )\\
&\cdot \sum_{i=1}^{N}\bigg (-2\pi k_{i,1}sin\big (\xi_1 +2\pi (k_{i,1}x+k_{i,2})\big )\bigg )cos(2x)\\
&-C_1exp\bigg (C_2\sum_{i=1}^{N}cos\big (\xi_1 +2\pi (k_{i,1}x+k_{i,2}y)\big )\bigg )\big (4sin(2x)+sin(y)\big )\\
&+ C_1C_2exp\bigg (C_2\sum_{i=1}^{N}cos\big (\xi_1 +2\pi (k_{i,1}x+k_{i,2}y)\big )\bigg )\\
&\cdot \sum_{i=1}^{N}\bigg (-2\pi k_{i,2}sin\big (\xi_1 +2\pi (k_{i,1}x+k_{i,2})\big )\bigg )cos(y).
\end{split}
\end{equation}

For the 3D case, we consider the following smooth manufactured solution :
\begin{equation}\label{eq:u3}
\hat{h}_{MMS}(x,y,z)=1+sin(3x+2y+z), \quad with\quad x \in [0,5],\quad  y \in [0,2]\quad and\quad x \in [0,1],
\end{equation}
along with the Dirichlet and Neumann boundary conditions:
\begin{equation}\label{eq:b3}
\left\{
\begin{array}{lr}
\hat{h}(0,y,z)=1+sin(2y+z), & \forall  y \in [0,2],\forall  z \in [0,1],\\
\hat{h}(5,y,z)=1+sin(3\times5+2y+z), &  \forall  y \in [0,2],\forall  z \in [0,1],\\
\frac{\partial \hat{h}}{\partial y}(x,0,z)=2cos(3x+z), & \forall  x \in [0,5],\forall  z \in [0,1],\\
\frac{\partial \hat{h}}{\partial y}(x,2,z)=2cos(3x+2\times2+z), & \forall  x \in [0,5],\forall  z \in [0,1],\\
\frac{\partial \hat{h}}{\partial z}(x,y,0)=cos(3x+2y), & \forall  x \in [0,5],\forall  y \in [0,2],\\
\frac{\partial \hat{h}}{\partial z}(x,y,1)=cos(3x+2y+1), & \forall  x \in [0,5],\forall  y \in [0,2].
\end{array}
\right.
\end{equation}

The function $K$ is now given by:
\begin{equation}\label{eq:k3}
K(x,y,z)=C_1 exp\bigg (C_2\sum_{i=1}^{N}cos\big (\xi_1 +2\pi(k_{i,1}x+k_{i,2}y+k_{i,3}z)\big )\bigg ),
\end{equation}
where we use the shorthand notations $C_2$ same as in 1 dimensional case, but $C_1=\langle K\rangle exp(-\frac{\sigma^2}{6})$. And the source term $f$ has the following form:
\begin{equation}\label{eq:f31}
\begin{split}
f(x,y,z)=&3C_1C_2 exp\bigg (C_2\sum_{i=1}^{N}cos\big (\xi_1 +2\pi (k_{i,1}x+k_{i,2}y+k_{i,3}z)\big )\bigg )\\
&\cdot \sum_{i=1}^{N}\big (-2\pi k_{i,1}sin\big (\xi_1 +2\pi (k_{i,1}x+k_{i,2}+k_{i,3}z)\bigg )\bigg )cos(3x+2y+z)\\
&+2C_1C_2 exp\bigg (C_2\sum_{i=1}^{N}cos\big (\xi_1 +2\pi (k_{i,1}x+k_{i,2}y+k_{i,3}z)\big )\bigg )\\
&\cdot \sum_{i=1}^{N}\bigg (-2\pi k_{i,2}sin\big (\xi_1 +2\pi (k_{i,1}x+k_{i,2}+k_{i,3}z)\big )\bigg )cos(3x+2y+z)\\
&+C_1C_2 exp\bigg (C_2\sum_{i=1}^{N}cos\big (\xi_1 +2\pi (k_{i,1}x+k_{i,2}y+k_{i,3}z)\big )\bigg )\\
&\cdot \sum_{i=1}^{N}\bigg (-2\pi k_{i,3}sin\big (\xi_1 +2\pi (k_{i,1}x+k_{i,2}+k_{i,3}z)\big )\bigg )cos(3x+2y+z)\\
&-14C_1 exp\bigg (C_2\sum_{i=1}^{N}cos\big (\xi_1 +2\pi(k_{i,1}x+k_{i,2}y+k_{i,3}z)\big )\bigg )\\
&\cdot sin(3x+2y+z).
\end{split}
\end{equation}

An alternative manufactured solution is:
\begin{equation}\label{eq:u31}
\hat{h}_{MMS}(x,y,z)=5+sin(3x)+sin(2y)+sin(z), \quad with\quad x \in [0,5],\quad  y \in [0,2]\quad and\quad x \in [0,1],
\end{equation}
along with the Dirichlet and Neumann boundary conditions:
\begin{equation}\label{eq:b31}
\left\{
\begin{array}{lr}
\hat{h}(0,y,z)=5+sin(2y)+sin(z), & \forall  y \in [0,2],\forall  z \in [0,1],\\
\hat{h}(5,y,z)=5+sin(3\times5)+sin(2y)+sin(z), &  \forall  y \in [0,2],\forall  z \in [0,1],\\
\frac{\partial \hat{h}}{\partial y}(x,0,z)=2cos(0), & \forall  x \in [0,5],\forall  z \in [0,1],\\
\frac{\partial \hat{h}}{\partial y}(x,2,z)=2cos(2\times2), & \forall  x \in [0,5],\forall  z \in [0,1],\\
\frac{\partial \hat{h}}{\partial z}(x,y,0)=cos(0), & \forall  x \in [0,5],\forall  y \in [0,2],\\
\frac{\partial \hat{h}}{\partial z}(x,y,1)=cos(1), & \forall  x \in [0,5],\forall  y \in [0,2].
\end{array}
\right.
\end{equation}

And the source term $f$ has the following form:
\begin{equation}\label{eq:f32}
\begin{split}
f(x,y,z)=&3C_1C_2 exp\bigg (C_2\sum_{i=1}^{N}cos\big (\xi_1 +2\pi (k_{i,1}x+k_{i,2}y+k_{i,3}z)\big )\bigg )\\
&\cdot \sum_{i=1}^{N}\bigg (-2\pi k_{i,1}sin\big (\xi_1 +2\pi (k_{i,1}x+k_{i,2}+k_{i,3}z)\big )\bigg )cos(3x)\\
&+2C_1C_2 exp\bigg (C_2\sum_{i=1}^{N}cos\big (\xi_1 +2\pi (k_{i,1}x+k_{i,2}y+k_{i,3}z)\big )\bigg )\\
&\cdot \sum_{i=1}^{N}\bigg (-2\pi k_{i,2}sin\big (\xi_1 +2\pi (k_{i,1}x+k_{i,2}+k_{i,3}z)\big )\bigg )cos(2y)\\
&+C_1C_2 exp\bigg (C_2\sum_{i=1}^{N}cos\big (\xi_1 +2\pi (k_{i,1}x+k_{i,2}y+k_{i,3}z)\big )\bigg )\\
&\cdot \sum_{i=1}^{N}\bigg (-2\pi k_{i,3}sin\big (\xi_1 +2\pi (k_{i,1}x+k_{i,2}+k_{i,3}z)\big )\bigg )cos(z)\\
&-C_1 exp\bigg (C_2\sum_{i=1}^{N}cos\big (\xi_1 +2\pi(k_{i,1}x+k_{i,2}y+k_{i,3}z)\big )\bigg )\\
&\cdot \big (9sin(3x)+4sin(2y)+sin(z)\big ).
\end{split}
\end{equation}

\newpage
\bibliography{refgw.bib}

\begin{thebibliography}{10}

\bibitem{kolyukhin2005stochastic}
Dmitry Kolyukhin and Karl Sabelfeld.
\newblock Stochastic flow simulation in 3d porous media.
\newblock {\em Monte Carlo Methods and Applications}, 11(1):15--37, 2005.

\bibitem{freeze}
R.A. Freeze.
\newblock A stochastic‐conceptual analysis of one‐dimensional groundwater
  flow in nonuniform homogeneous media.
\newblock {\em Water Resources Research}, 11(5):725--741, 1975.

\bibitem{matheron}
G.~Matheron and G.~De~Marsily.
\newblock Is transport in porous media always diffusive? a counter example.
\newblock {\em Water Resources Research}, 16(5):901--917, 1980.

\bibitem{sabelfeld2d}
D.~Kolyukhin and K.~Sabelfeld.
\newblock Stochastic flow simulation and particle transport in a 2d layer of
  random porous medium.
\newblock {\em Transp. Porous Media}, 85:347---373, 2010.

\bibitem{gelharstochastic}
L.W. Gelhar.
\newblock Stochastic subsurface hydrology from theory to applications.
\newblock {\em Water Resources Research}, 22(9S):S135S---145S, 1986.

\bibitem{carman1997fluid}
Phillip~C Carman.
\newblock Fluid flow through granular beds.
\newblock {\em Chemical Engineering Research and Design}, 75:S32--S48, 1997.

\bibitem{rumpf}
H.~Rumpf and A.R. Gupte.
\newblock The influence of porosity and grain size distribution on the
  permeability equation of porous flow.
\newblock {\em Chemie Ing. Techn.}, 43(6):367--375, 1975.

\bibitem{papevariation}
H.~Pape, C.~Clauser, and J.~Iffland.
\newblock Variation of permeability with porosity in sandstone diagenesis
  interpreted with a fractal pore space model.
\newblock {\em Pure appl. geophys.}, 157:603--619, 2000.

\bibitem{bakrstochastic}
Bakr et~al.
\newblock Stochastic analysis of spatial variability in subsurface flows 1.
  comparison of one- and three-dimensional flows.
\newblock {\em Water Resources Research}, 19(1):161---180, 1983.

\bibitem{mantoglou}
A.~Mantoglou and J.L. Wilson.
\newblock The turning bands method for simulation of random fields using line
  generation by a spectral method.
\newblock {\em Water Resources Research}, 18(5):1379---1394, 1982.

\bibitem{berlinhydro}
A.~Bellin and Y.~Rubin.
\newblock Hydro\_gen: A spatially distributed random field generator for
  correlated properties.
\newblock {\em Stochastic Hydrol Hydraul}, 10:253---278, 1996.

\bibitem{kraichnandiffusion}
R.H. Kraichnan.
\newblock Diffusion by a random velocity field.
\newblock {\em Phys Fluids}, 13(1):22---31, 1970.

\bibitem{ababounumerical}
R.~Ababou, D.~McLaughlin, and L.W. Gelhar.
\newblock Numerical simulation of three-dimensional saturated flow in randomly
  heterogeneous porous media.
\newblock {\em Transport in porous media}, 4(6):549--565, 1989.

\bibitem{lecun2015deep}
Yann LeCun, Yoshua Bengio, and Geoffrey Hinton.
\newblock Deep learning.
\newblock {\em nature}, 521(7553):436, 2015.

\bibitem{al2018solving}
Ali Al-Aradi, Adolfo Correia, Danilo Naiff, Gabriel Jardim, and Yuri Saporito.
\newblock Solving nonlinear and high-dimensional partial differential equations
  via deep learning.
\newblock {\em arXiv preprint arXiv:1811.08782}, 2018.

\bibitem{vargas2017deep}
Rocio Vargas, Amir Mosavi, and Ramon Ruiz.
\newblock Deep learning: a review.
\newblock {\em Advances in Intelligent Systems and Computing}, 2017.

\bibitem{patterson2017deep}
Josh Patterson and Adam Gibson.
\newblock {\em Deep learning: A practitioner's approach}.
\newblock " O'Reilly Media, Inc.", 2017.

\bibitem{yang2018visually}
Liping Yang, Alan MacEachren, Prasenjit Mitra, and Teresa Onorati.
\newblock Visually-enabled active deep learning for (geo) text and image
  classification: a review.
\newblock {\em ISPRS International Journal of Geo-Information}, 7(2):65, 2018.

\bibitem{zhao2019object}
Zhong-Qiu Zhao, Peng Zheng, Shoutao Xu, and Xindong Wu.
\newblock Object detection with deep learning: A review.
\newblock {\em IEEE transactions on neural networks and learning systems},
  2019.

\bibitem{nassif2019speech}
Ali~Bou Nassif, Ismail Shahin, Imtinan Attili, Mohammad Azzeh, and Khaled
  Shaalan.
\newblock Speech recognition using deep neural networks: a systematic review.
\newblock {\em IEEE Access}, 2019.

\bibitem{yue2018deep}
Tianwei Yue and Haohan Wang.
\newblock Deep learning for genomics: A concise overview.
\newblock {\em arXiv preprint arXiv:1802.00810}, 2018.

\bibitem{fischer2018deep}
Thomas Fischer and Christopher Krauss.
\newblock Deep learning with long short-term memory networks for financial
  market predictions.
\newblock {\em European Journal of Operational Research}, 270(2):654--669,
  2018.

\bibitem{xuphysic}
Y.~Xu, J.~Li, and X.~Chen.
\newblock Physics informed neural networks for velocity inversion.
\newblock {\em Society of Exploration Geophysicists.}, 2019.

\bibitem{Mills:2017aa}
Kyle Mills.
\newblock Deep learning and the schr{\"o}dinger equation.
\newblock {\em Physical Review A}, 96(4), 2017.

\bibitem{E_2017}
E~Weinan, Jiequn Han, and Arnulf Jentzen.
\newblock Deep learning-based numerical methods for high-dimensional parabolic
  partial differential equations and backward stochastic differential
  equations.
\newblock {\em Communications in Mathematics and Statistics}, 5(4):349--380,
  2017.

\bibitem{Han:2018aa}
+Jiequn Han, +Arnulf Jentzen, and +Weinan E.
\newblock Solving high-dimensional partial differential equations using deep
  learning.
\newblock {\em PNAS; Proceedings of the National Academy of Sciences},
  115(34):8505--8510, 2018.

\bibitem{E_2018}
E~Weinan and Bing Yu.
\newblock The deep ritz method: a deep learning-based numerical algorithm for
  solving variational problems.
\newblock {\em Communications in Mathematics and Statistics}, 6(1):1--12, 2018.

\bibitem{raissi2019physics}
Maziar Raissi, Paris Perdikaris, and George~E Karniadakis.
\newblock Physics-informed neural networks: A deep learning framework for
  solving forward and inverse problems involving nonlinear partial differential
  equations.
\newblock {\em Journal of Computational Physics}, 378:686--707, 2019.

\bibitem{Raissi:2018aa}
Maziar Raissi.
\newblock Forward-backward stochastic neural networks: Deep learning of
  high-dimensional partial differential equations.
\newblock {\em arXiv preprint arXiv:1804.07010}, 2018.

\bibitem{RAISSI2018125}
Maziar Raissi and George~Em Karniadakis.
\newblock Hidden physics models: Machine learning of nonlinear partial
  differential equations.
\newblock {\em Journal of Computational Physics}, 357:125 -- 141, 2018.

\bibitem{Raissi:2018:DHP:3291125.3291150}
Maziar Raissi.
\newblock Deep hidden physics models: Deep learning of nonlinear partial
  differential equations.
\newblock {\em The Journal of Machine Learning Research}, 19(1):932--955, 2018.

\bibitem{Beck:2018aa}
Christian Beck, Sebastian Becker, Philipp Grohs, Nor Jaafari, and Arnulf
  Jentzen.
\newblock Solving stochastic differential equations and kolmogorov equations by
  means of deep learning.
\newblock {\em arXiv preprint arXiv:1806.00421}, 2018.

\bibitem{Beck_2019}
Christian Beck, E~Weinan, and Arnulf Jentzen.
\newblock Machine learning approximation algorithms for high-dimensional fully
  nonlinear partial differential equations and second-order backward stochastic
  differential equations.
\newblock {\em Journal of Nonlinear Science}, 29(4):1563--1619, 2019.

\bibitem{anitescu2019artificial}
Cosmin Anitescu, Elena Atroshchenko, Naif Alajlan, and Timon Rabczuk.
\newblock Artificial neural network methods for the solution of second order
  boundary value problems.
\newblock {\em Computers, Materials and Continua}, 59(1):345--359, 2019.

\bibitem{guo2019deep}
Hongwei Guo, Xiaoying Zhuang, and Timon Rabczuk.
\newblock A deep collocation method for the bending analysis of kirchhoff
  plate.
\newblock {\em Comput Mater Continua}, 59(2):433--456, 2019.

\bibitem{samaniego2020energy}
Esteban Samaniego, Cosmin Anitescu, Somdatta Goswami, Vien~Minh Nguyen-Thanh,
  Hongwei Guo, Khader Hamdia, X~Zhuang, and T~Rabczuk.
\newblock An energy approach to the solution of partial differential equations
  in computational mechanics via machine learning: Concepts, implementation and
  applications.
\newblock {\em Computer Methods in Applied Mechanics and Engineering},
  362:112790, 2020.

\bibitem{nguyen2020deep}
Vien~Minh Nguyen-Thanh, Xiaoying Zhuang, and Timon Rabczuk.
\newblock A deep energy method for finite deformation hyperelasticity.
\newblock {\em European Journal of Mechanics-A/Solids}, 80:103874, 2020.

\bibitem{goswami2020transfer}
Somdatta Goswami, Cosmin Anitescu, Souvik Chakraborty, and Timon Rabczuk.
\newblock Transfer learning enhanced physics informed neural network for
  phase-field modeling of fracture.
\newblock {\em Theoretical and Applied Fracture Mechanics}, 106:102447, 2020.

\bibitem{thorntonauto}
Chris Thornton, Frank Hutter, Holger~H Hoos, and Kevin Leyton-Brown.
\newblock Auto-weka: Combined selection and hyperparameter optimization of
  classification algorithms.
\newblock In {\em Proceedings of the 19th ACM SIGKDD international conference
  on Knowledge discovery and data mining}, pages 847--855, 2013.

\bibitem{feurerefficient}
Matthias Feurer, Aaron Klein, Katharina Eggensperger, Jost Springenberg, Manuel
  Blum, and Frank Hutter.
\newblock Efficient and robust automated machine learning.
\newblock In {\em Advances in neural information processing systems}, pages
  2962--2970, 2015.

\bibitem{H2OAutoML}
H2O.ai.
\newblock {\em H2O AutoML}, June 2017.
\newblock H2O version 3.30.0.1.

\bibitem{le2020scaling}
Trang~T. Le, W.~Fu, and J.H. Moore.
\newblock Scaling tree-based automated machine learning to biomedical big data
  with a feature set selector.
\newblock {\em Bioinformatics}, 36(1):250--256, 2020.

\bibitem{zophneural}
Barret Zoph and Quoc~V Le.
\newblock Neural architecture search with reinforcement learning.
\newblock {\em arXiv preprint arXiv:1611.01578}, 2016.

\bibitem{pham2018efficient}
Hieu Pham, Melody~Y Guan, Barret Zoph, Quoc~V Le, and Jeff Dean.
\newblock Efficient neural architecture search via parameter sharing.
\newblock {\em arXiv preprint arXiv:1802.03268}, 2018.

\bibitem{liu2018progressive}
Chenxi Liu, Barret Zoph, Maxim Neumann, Jonathon Shlens, Wei Hua, Li-Jia Li,
  Li~Fei-Fei, Alan Yuille, Jonathan Huang, and Kevin Murphy.
\newblock Progressive neural architecture search.
\newblock In {\em Proceedings of the European Conference on Computer Vision
  (ECCV)}, pages 19--34, 2018.

\bibitem{fraikin2019dimensionality}
Nicolas Fraikin, Kilian Funk, Michael Frey, and Frank Gauterin.
\newblock Dimensionality reduction and identification of valid parameter bounds
  for the efficient calibration of automated driving functions.
\newblock {\em Automotive and Engine Technology}, 4(1-2):75--91, 2019.

\bibitem{salandinsolute}
P.~Salandin and V.~Fiorotto.
\newblock Solute transport in highly heterogeneous aquifers.
\newblock {\em Water Resources Research}, 34:949---961, 1998.

\bibitem{dentztemporal}
M.~Dentz, H.~Kinzelbach, S.~Attinger, and W.~Kinzelbach.
\newblock Temporal behaviour of a solute cloud in a heterogeneous porous
  medium. 3. numerical simulations.
\newblock {\em Water Resources Research}, 38(7):1118---1130, 2002.

\bibitem{kramercomparative}
P.R. Kramer, O.~Kurbanmuradov, and k.~Sabelfeld.
\newblock Comparative analysis of multiscale gaussian random field simulation
  algorithms.
\newblock {\em Journal of Computational Physics}, 226(1):897--924, 2007.

\bibitem{alecsa2019benchmark}
Cristian~D Alecsa, Imre Boros, Florian Frank, Peter Knabner, Mihai Nechita,
  Alexander Prechtel, Andreas Rupp, and Nicolae Suciu.
\newblock Numerical benchmark study for flow in highly heterogeneous aquifers.
\newblock {\em Advances in Water Resources}, page 103558, 2020.

\bibitem{sudicky2010heterogeneity}
EA~Sudicky, WA~Illman, IK~Goltz, JJ~Adams, and RG~McLaren.
\newblock Heterogeneity in hydraulic conductivity and its role on the
  macroscale transport of a solute plume: From measurements to a practical
  application of stochastic flow and transport theory.
\newblock {\em Water Resources Research}, 46(1), 2010.

\bibitem{attinger2003generalized}
Sabine Attinger.
\newblock Generalized coarse graining procedures for flow in porous media.
\newblock {\em Computational Geosciences}, 7(4):253--273, 2003.

\bibitem{gelhar1983three}
Lynn~W Gelhar and Carl~L Axness.
\newblock Three-dimensional stochastic analysis of macrodispersion in aquifers.
\newblock {\em Water Resources Research}, 19(1):161--180, 1983.

\bibitem{dagan2012flow}
Gedeon Dagan.
\newblock {\em Flow and transport in porous formations}.
\newblock Springer Science \& Business Media, 2012.

\bibitem{tremblay2006code}
Dominique Tremblay, Stephane Etienne, and Dominique Pelletier.
\newblock Code verification and the method of manufactured solutions for
  fluid-structure interaction problems.
\newblock In {\em 36th AIAA Fluid Dynamics Conference and Exhibit}, page 3218,
  2006.

\bibitem{malaya2013masa}
Nicholas Malaya, Kemelli~C Estacio-Hiroms, Roy~H Stogner, Karl~W Schulz, Paul~T
  Bauman, and Graham~F Carey.
\newblock Masa: a library for verification using manufactured and analytical
  solutions.
\newblock {\em Engineering with Computers}, 29(4):487--496, 2013.

\bibitem{elsken2018neural}
Thomas Elsken, Jan~Hendrik Metzen, and Frank Hutter.
\newblock Neural architecture search: A survey.
\newblock {\em arXiv preprint arXiv:1808.05377}, 2018.

\bibitem{parkecolutionay}
K.-M. Park, D.~Shin, and Y.~Yoo.
\newblock Evolutionary neural architecture search (nas) using chromosome
  non-disjunction for korean grammaticality tasks.
\newblock {\em Appl. Sci.i}, 10, 2020.

\bibitem{misyris2019physics}
George~S Misyris, Andreas Venzke, and Spyros Chatzivasileiadis.
\newblock Physics-informed neural networks for power systems.
\newblock {\em arXiv preprint arXiv:1911.03737}, 2019.

\bibitem{FUNAHASHI1989183}
Ken-Ichi Funahashi.
\newblock On the approximate realization of continuous mappings by neural
  networks.
\newblock {\em Neural Networks}, 2(3):183 -- 192, 1989.

\bibitem{HORNIK1989359}
Kurt Hornik, Maxwell Stinchcombe, and Halbert White.
\newblock Multilayer feedforward networks are universal approximators.
\newblock {\em Neural Networks}, 2(5):359 -- 366, 1989.

\bibitem{hornik1991approximation}
Kurt Hornik.
\newblock Approximation capabilities of multilayer feedforward networks.
\newblock {\em Neural networks}, 4(2):251--257, 1991.

\bibitem{atluri2005methods}
Satya~N Atluri.
\newblock {\em Methods of computer modeling in engineering \& the sciences},
  volume~1.
\newblock Tech Science Press Palmdale, 2005.

\bibitem{agrawalcollocation}
Pulkit Agrawal.
\newblock Collocation based approach for training recurrent neural networks.

\bibitem{sirignano2018dgm}
Justin Sirignano and Konstantinos Spiliopoulos.
\newblock Dgm: A deep learning algorithm for solving partial differential
  equations.
\newblock {\em Journal of Computational Physics}, 375:1339--1364, 2018.

\bibitem{cramer2013stationary}
Harald Cram{\'e}r and M~Ross Leadbetter.
\newblock {\em Stationary and related stochastic processes: Sample function
  properties and their applications}.
\newblock Courier Corporation, 2013.

\bibitem{gardner1981comparison}
RH~Gardner, RV~O'neill, JB~Mankin, and JH~Carney.
\newblock A comparison of sensitivity analysis and error analysis based on a
  stream ecosystem model.
\newblock {\em Ecological Modelling}, 12(3):173--190, 1981.

\bibitem{henderson1996sensitivity}
B~Henderson-Sellers and A~Henderson-Sellers.
\newblock Sensitivity evaluation of environmental models using fractional
  factorial experimentation.
\newblock {\em Ecological Modelling}, 86(2-3):291--295, 1996.

\bibitem{majkowski1981multiplicative}
Jacek Majkowski, Joanne~M Ridgeway, and Donald~R Miller.
\newblock Multiplicative sensitivity analysis and its role in development of
  simulation models.
\newblock {\em Ecological Modelling}, 12(3):191--208, 1981.

\bibitem{morris1991factorial}
Max~D Morris.
\newblock Factorial sampling plans for preliminary computational experiments.
\newblock {\em Technometrics}, 33(2):161--174, 1991.

\bibitem{mcrae1982global}
Gregory~J McRae, James~W Tilden, and John~H Seinfeld.
\newblock Global sensitivity analysis---a computational implementation of the
  fourier amplitude sensitivity test (fast).
\newblock {\em Computers \& Chemical Engineering}, 6(1):15--25, 1982.

\bibitem{nossent2011sobol}
Jiri Nossent, Pieter Elsen, and Willy Bauwens.
\newblock Sobol'sensitivity analysis of a complex environmental model.
\newblock {\em Environmental Modelling \& Software}, 26(12):1515--1525, 2011.

\bibitem{zhang2012sensitivity}
JingXiao Zhang, Wei Su, et~al.
\newblock Sensitivity analysis of ceres-wheat model parameters based on efast
  method.
\newblock {\em Journal of China Agricultural University}, 17(5):149--154, 2012.

\bibitem{wang2019practical}
Anqi Wang and Dimitri~P Solomatine.
\newblock Practical experience of sensitivity analysis: Comparing six methods,
  on three hydrological models, with three performance criteria.
\newblock {\em Water}, 11(5):1062, 2019.

\bibitem{herman2013method}
JD~Herman, JB~Kollat, PM~Reed, and T~Wagener.
\newblock Method of morris effectively reduces the computational demands of
  global sensitivity analysis for distributed watershed models.
\newblock {\em Hydrology \& Earth System Sciences Discussions}, 10(4), 2013.

\bibitem{brevault2013comparison}
Lo{\i}c Brevault, Mathieu Balesdent, Nicolas B{\'e}rend, and Rodolphe Le~Riche.
\newblock Comparison of different global sensitivity analysis methods for
  aerospace vehicle optimal design.
\newblock In {\em 10th World Congress on Structural and Multidisciplinary
  Optimization, WCSMO-10}, 2013.

\bibitem{crosetto2001uncertainty}
Michele Crosetto and Stefano Tarantola.
\newblock Uncertainty and sensitivity analysis: tools for gis-based model
  implementation.
\newblock {\em International Journal of Geographical Information Science},
  15(5):415--437, 2001.

\bibitem{saltelli1999quantitative}
Andrea Saltelli, Stefano Tarantola, and KP-S Chan.
\newblock A quantitative model-independent method for global sensitivity
  analysis of model output.
\newblock {\em Technometrics}, 41(1):39--56, 1999.

\bibitem{herman2017salib}
Jon Herman and Will Usher.
\newblock Salib: an open-source python library for sensitivity analysis.
\newblock {\em Journal of Open Source Software}, 2(9):97, 2017.

\bibitem{snoek2012practical}
Jasper Snoek, Hugo Larochelle, and Ryan~P Adams.
\newblock Practical bayesian optimization of machine learning algorithms.
\newblock In {\em Advances in neural information processing systems}, pages
  2951--2959, 2012.

\bibitem{li2017hyperband}
Lisha Li, Kevin Jamieson, Giulia DeSalvo, Afshin Rostamizadeh, and Ameet
  Talwalkar.
\newblock Hyperband: A novel bandit-based approach to hyperparameter
  optimization.
\newblock {\em The Journal of Machine Learning Research}, 18(1):6765--6816,
  2017.

\bibitem{jamieson2016non}
Kevin Jamieson and Ameet Talwalkar.
\newblock Non-stochastic best arm identification and hyperparameter
  optimization.
\newblock In {\em Artificial Intelligence and Statistics}, pages 240--248,
  2016.

\bibitem{rao2019jaya}
Ravipudi~Venkata Rao.
\newblock {\em Jaya: an advanced optimization algorithm and its engineering
  applications}.
\newblock Springer, 2019.

\bibitem{li2017numerical}
Zhilin Li, Zhonghua Qiao, and Tao Tang.
\newblock {\em Numerical solution of differential equations: introduction to
  finite difference and finite element methods}.
\newblock Cambridge University Press, 2017.

\bibitem{weisstein2002sphere}
Eric~W Weisstein.
\newblock Sphere point picking.
\newblock {\em https://mathworld. wolfram. com/}, 2002.

\end{thebibliography}
\end{document}